\documentclass{article}
\PassOptionsToPackage{numbers,compress}{natbib}
\usepackage[main, final]{neurips_2026}

\usepackage{amsmath,amsfonts,bm}

\newcommand{\stdcell}[2]{%
  \shortstack[c]{#1\\{\scriptsize $\pm\,#2$}}%
}

\newcommand{\beststdcell}[2]{%
  \shortstack[c]{\textbf{#1}\\{\scriptsize \textbf{$\pm\,#2$}}}%
}

\newcommand{\understdcell}[2]{%
  \shortstack[c]{\underline{#1}\\{\scriptsize $\pm\,#2$}}%
}

\def\eqref#1{equation~\ref{#1}}

\def\1{\bm{1}}

\DeclareMathAlphabet{\mathsfit}{\encodingdefault}{\sfdefault}{m}{sl}
\SetMathAlphabet{\mathsfit}{bold}{\encodingdefault}{\sfdefault}{bx}{n}

\usepackage[colorlinks=true, citecolor=blue, linkcolor=blue, urlcolor=blue]{hyperref}
\usepackage{url}
\usepackage{siunitx}
\DeclareSIUnit\angstrom{\text{Å}}
\usepackage{graphicx}
\usepackage{caption}
\usepackage{booktabs}
\usepackage{multirow}
\usepackage{bbm}
\usepackage{enumitem}
\usepackage{amsthm}
\usepackage[table]{xcolor}
\usepackage{mdframed}
\usepackage{wrapfig}
\usepackage{comment}
\usepackage{amssymb}

\theoremstyle{definition}

\newtheorem{assumption}{Assumption}

\theoremstyle{plain}
\newtheorem{proposition}{Proposition}

\newtheorem{corollary}{Corollary}

\mdfdefinestyle{statementbox}{
  linewidth=0.8pt,
  linecolor=gray!45,
  backgroundcolor=gray!6,
  roundcorner=3pt,
  innertopmargin=6pt,
  innerbottommargin=6pt,
  innerleftmargin=8pt,
  innerrightmargin=8pt,
  skipabove=8pt,
  skipbelow=8pt
}
\surroundwithmdframed[style=statementbox]{definition}
\surroundwithmdframed[style=statementbox]{assumption}
\surroundwithmdframed[style=statementbox]{proposition}
\surroundwithmdframed[style=statementbox]{remark}
\surroundwithmdframed[style=statementbox]{lemma}
\surroundwithmdframed[style=statementbox]{theorem}
\surroundwithmdframed[style=statementbox]{corollary}

\title{Contextualizing Biological Language Models across Modalities via Logit-Space Contrastive Alignment}

\author{%
Yanjun Shao$^{1}$,\quad
Yundi Chen$^{1}$,\quad
Yashvi Patel$^{1}$,\\
\bf
Aurelien Pelissier$^{2,1,}$\thanks{A.P. and M.R.M. jointly supervised this work and contributed equally as senior authors.} $^{, }$\thanks{Correspondence: \href{mailto.pelissier.38@gmail.com}{aurelien.pelissier.38@gmail.com}; \href{mailto.rodriguezmartinez@yale.edu}{maria.rodriguezmartinez@yale.edu}} { },\quad
Mar\'{i}a Rodr\'{i}guez Mart\'{i}nez$^{1,}$\footnotemark[1] $^{, }$\footnotemark[2]\\[.1922cm]
\small $^{1}$Biomedical Informatics and Data Science, Yale School of Medicine, United States\\
\small $^{2}$Institute of Computational Life Sciences, Zürich University of Applied Sciences (ZHAW), Switzerland\
}

\DeclareRobustCommand{\genmark}{\ensuremath{^{\dagger}}}
\DeclareRobustCommand{\structmark}{\ensuremath{^{\ddagger}}}
\newcommand{\methodcell}[2]{%
  \shortstack[l]{#1\\[-0.25ex]{\scriptsize\textcolor{gray!70!black}{#2}}}%
}

\begin{document}

\maketitle

\vspace{-0.2cm}

\begin{abstract}

Pretrained biological language models expose per-token probability distributions through masked-token prediction, providing the likelihood interface central to sequence design, variant scoring, and mechanistic interpretation. Yet these distributions are learned from broad unlabeled corpora and are not naturally conditioned on task-specific biological contexts such as interaction partners, cellular environments, or therapeutic interventions. Existing contextual matching methods often distort this interface through pooled embeddings, contrastive latent spaces, or task-specific prediction heads.
We introduce \textbf{\textsc{LogiCA}} (\emph{Logit-space Contrastive Alignment}), a framework for context-conditioned prediction that performs contrastive learning directly in output-logit space. Using gated cross-modal adapters compatible with each model's native token head,
\textsc{LogiCA} preserves the pretrained likelihood interface and converts contextualized token log-likelihoods into matching scores. Alignment is defined through context-sensitive token probabilities rather than proximity in a shared embedding space, enabling learning from sparse paired data across models with \textbf{distinct vocabularies}, without a shared tokenizer, decoder, or embedding space.
\textsc{LogiCA} is particularly effective for \textbf{mutation-local variant ranking}, where comparisons reduce to context-conditioned likelihoods of mutant tokens at perturbed sites.
Across protein--ligand binding, TCR--peptide activity, and drug-conditioned resistance prediction, \textsc{LogiCA} improves over prior state-of-the-art methods, including matched latent-contrastive and conditional-MLM baselines, while retaining a token-level interface for interpretation and generation. On held-out-gene single-mutation drug-resistance prediction, \textsc{LogiCA} improves AUC from near-random latent-space baselines of $\sim$0.55 to $\sim$0.65. Code is available at \url{https://anonymous.4open.science/r/logica/}.

\end{abstract}

\vspace{-0.34cm}

\begin{figure}[!htbp]
    \centering
    \includegraphics[width=.95\linewidth]{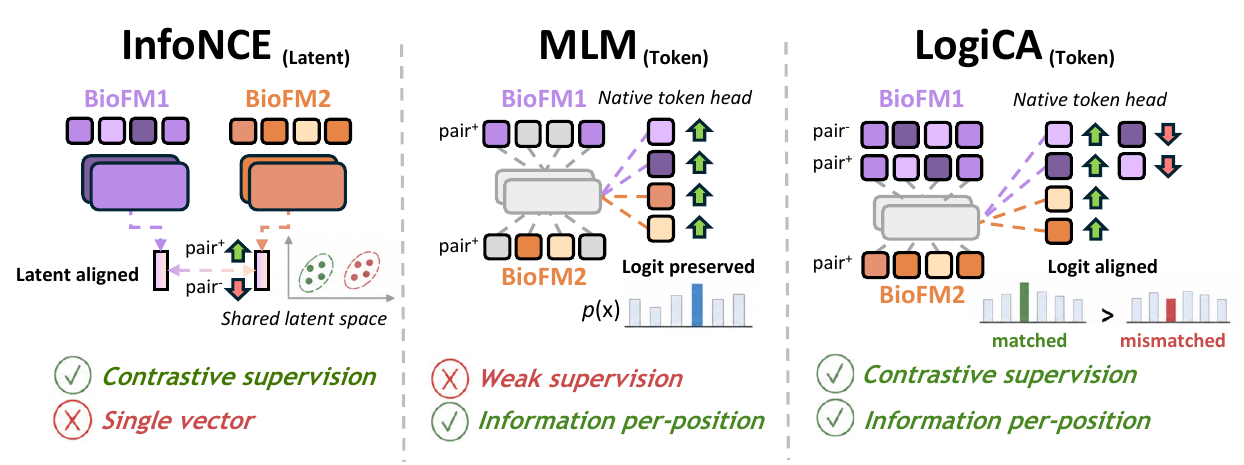}
    \label{fig:graphical_abstract}
\end{figure}

\newpage

\section{Introduction}
\label{sec:intro}

Pretrained biological language models (BioFMs) have become foundational tools for
modeling proteins, genomes, and molecules because they define normalized token
distributions over biological vocabularies. These distributions assign
probabilities to biological tokens at each position, enabling zero-shot mutation
scoring, sequence design, evolutionary analysis, token-level interpretation, and
de novo generation
~\citep{lin2023evolutionary,riesselman2018deep,meier2021language,
hie2024efficient,hie2022evolutionary,yuksel2023selformer,dalla2025nucleotide,
cui2025towards,brixi2026genome,tomaz2024nucleotide,gong2024evorank,
zhang2024interactingmotifs,madani2023large,johnson2021generating,
mccarter2026make}. This position-level formalism is especially valuable in
biology, where functional changes often arise from small perturbations localized
to individual residues, nucleotides, or molecular tokens.

Many therapeutic prediction problems, however, are inherently contextual. A
T-cell receptor should be scored under a peptide, a ligand under a protein
target, and a resistance mutation under a drug. In such settings, the question is
not only whether a sequence is plausible, but whether it is plausible in a
specific biological context. Standard pretrained language-model logits capture
broad evolutionary, structural, or chemical regularities, but are not trained to
directly encode compatibility with a particular binding partner, peptide, or
treatment condition.

Most existing approaches address contextual matching in one of two ways
(Appendix~\ref{app:related}). The first class leaves the language model's token
interface behind, learning scalar compatibility scores from pooled
representations, contrastive embedding alignment, or task-specific prediction
heads
\citep{abramson2024alphafold3,gao2023panpep,
passaro2025boltz2,huang2021moltrans,jia2026drugclip,
singh2023contrastive,bai2023drugban,liu2025spdti,yu2025graph,
shoshan2026mammal}. Such scores are effective for retrieval and binary matching,
but are no longer token likelihoods, and therefore do not naturally localize to
mutated residues, support likelihood-based generation, or expose the
position-wise probabilities that make pretrained biological language models
useful.
The second class preserves the token interface by fine-tuning with conditional
masked-language-modeling objectives~\citep{ullanat2026mint, mizrahi20234m,
meynard2024tulip, karthikeyan2025tcrt5, chen2025pepmlm,
burbach2024improving, liu2025plminteract, lupo2024pairing}. However, paired
biological datasets are often sparse, noisy, and context-specific;
reconstruction losses encourage token recovery but do not directly separate
matched from mismatched contexts. This is especially limiting when functional
signal is localized to small sequence perturbations: pooled objectives can
dilute the effect, while reconstruction losses may fail to distinguish the
correct context from plausible alternatives.

\begin{figure}[!b]
    \centering
    \includegraphics[width=1\linewidth]{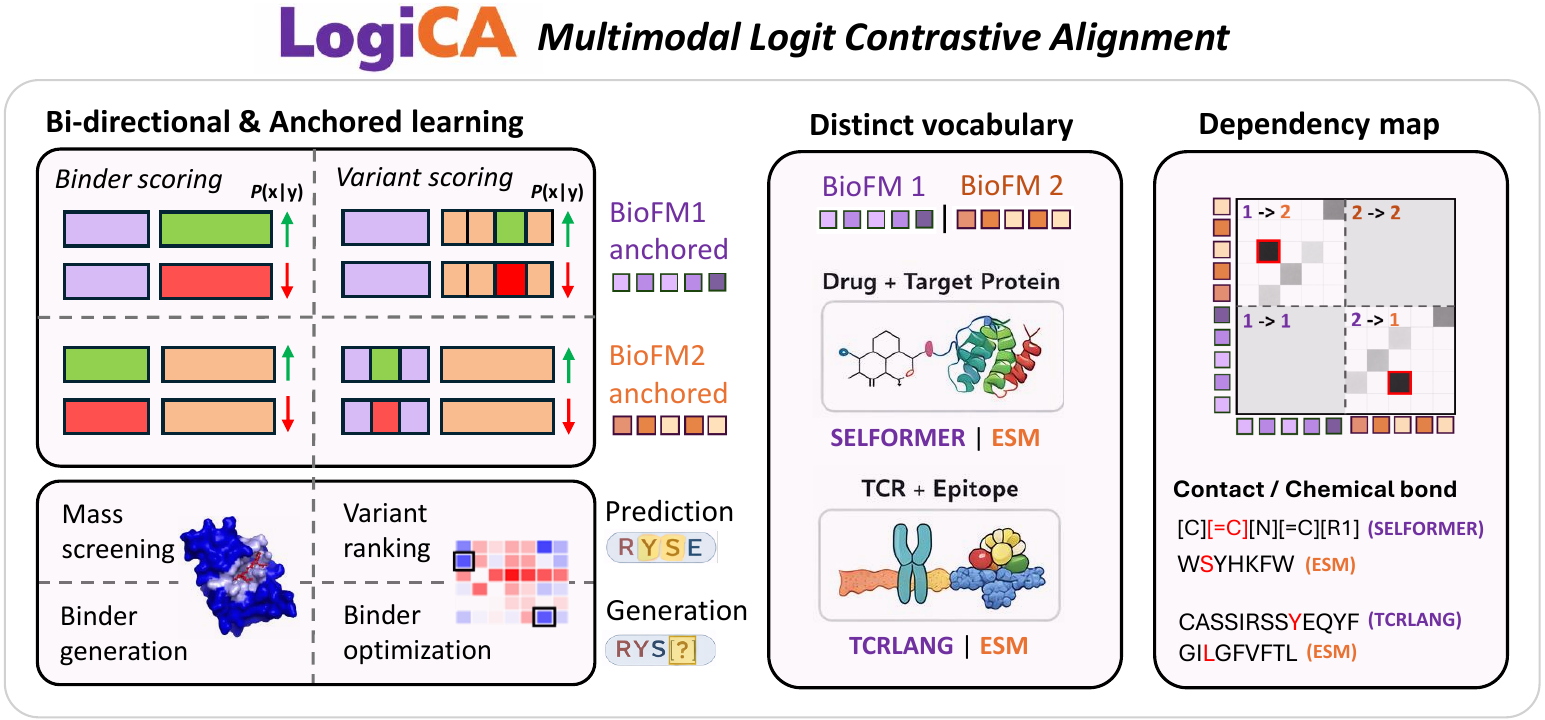}
    \caption{Overview of \textsc{LogiCA}: pretrained biological language models are coupled by cross-modal adapters that preserve native token heads, enabling contrastive alignment of context-conditioned logit-probability distributions across distinct modal vocabularies.}
    \label{fig:logiCA}
\end{figure}

\textbf{Our contribution.} We introduce \textbf{\textsc{LogiCA}}, a logit-space contrastive alignment framework for contextual biological prediction. Rather than using pooled latents or classifier heads, \textsc{LogiCA} uses gated cross-modal adapters that preserve each pretrained model's native token head, and uses the resulting contextualized token log-likelihoods as matching scores. The learned representations are therefore not the object of alignment; they are a mechanism for producing context-sensitive native token distributions. This enables \textbf{contrastive learning directly in logit space}, preserving the probabilistic interface of pretrained language models while aligning sparse paired data across \textbf{distinct vocabularies}.
\textsc{LogiCA} is especially suited to \textbf{mutation-local variant ranking}: when variants share a wild-type reference and mutated sites, shared sequence terms cancel, leaving pairwise comparisons over the context-conditioned likelihoods of mutant tokens. This yields a token-level contrastive objective analogous to InfoNCE, but defined over logits at perturbed positions rather than global sequence embeddings.
Empirically, we apply \textsc{LogiCA} to protein--ligand binding, TCR--peptide ranking, and drug-conditioned resistance scoring by contextualizing ESM-2~\citep{lin2023evolutionary}, TCRLang~\citep{raybould2024observed}, and SELFormer~\citep{yuksel2023selformer} while \textbf{retaining their native output heads}. \textsc{LogiCA} achieves state-of-the-art zero-shot variant-ranking performance on deep mutational scanning (DMS) assays for peptide activity and drug resistance directly from its logit outputs, outperforming latent-space, conditional-MLM, and existing external baselines.

\section{\textsc{LogiCA}: Logit-space Contrastive Alignment}
\label{sec:logica}

\subsection{Contextual ranking in logit space}
\label{sec:logica-ranking}

Let \(x=(x_1,\ldots,x_L)\) be a sequence to be scored, let \(y\) be an
external biological context, and let \(A\subseteq[L]\) denote the positions at
which the score is evaluated. A contextualized biological language model
defines, for each position \(i\), a token distribution
\[
  \pi_\theta(\cdot\mid x_{\setminus i},y)
\]
over the native vocabulary of \(x\). Our goal is to use these conditional token
probabilities not only for reconstruction, but as compatibility scores in a
contextual ranking problem.
Given an anchor \(a\), a candidate set \(\mathcal C\), and any scalar
compatibility score \(s(a,c)\), the induced ranking distribution is
\begin{equation}
  p(c\mid a,\mathcal C)
  =
  \frac{\exp(s(a,c)/\tau)}
       {\sum_{c'\in\mathcal C}\exp(s(a,c')/\tau)} .
  \label{eq:logica-ranking}
\end{equation}
The anchor and candidates may be either biological sequences or external
contexts, so this form covers both context retrieval and variant ranking. With
multiple candidates, Eq.~\ref{eq:logica-ranking} yields the InfoNCE objective
\citep{oord2018representation}; with two candidates, it reduces to the
Bradley--Terry preference loss
\citep{bradley1952rank,burges2005learning,rafailov2023direct}. Thus, the main
modeling choice is not the ranking loss itself, but the biological quantity
used to define \(s(a,c)\).
Latent-space contrastive models typically choose a pooled-representation
score, such as $
  s_z(x,y)=\langle f_\theta(x),\, g_\phi(y)\rangle$. This CLIP-style objective can align modalities effectively~\citep{radford2021learning}, but the resulting
compatibility score is detached from the model's original residue-level
likelihoods.

\textsc{LogiCA} keeps the same contextual ranking template, but moves the score into the language model's native logit space. Thus, the embeddings are not trained to be directly comparable across modalities; they are trained only insofar as they induce useful context-conditioned token probabilities through the native output heads. We define compatibility by the site-averaged conditional log-likelihood
\begin{equation}
  s_{\textsc{LogiCA}}(x,y;A)
  =
  \ell_A(x\mid y)
  =
  \frac{1}{|A|}
  \sum_{i\in A}
  \log \pi_\theta(x_i\mid x_{\setminus i},y).
  \label{eq:logica-score}
\end{equation}
For sequence-level matching we set \(A=[L]\). For localized tasks, \(A\) can
be restricted to mutation sites, binding-interface residues, or other
biologically meaningful subsets. The same scalar score can therefore be used
inside Eq.~\ref{eq:logica-ranking}, while remaining decomposable into
position-level token probabilities.

\subsection{Mutation-local variant ranking}
\label{sec:logica-variant-ranking}

For variant comparison, the token-level probabilistic interface preserved by
\textsc{LogiCA} allows scores to focus on perturbed positions. Let
\(x^{\mathrm{wt}}\) be a reference sequence and
\(\mathcal M(x,x^{\mathrm{wt}})=\{i:x_i\neq x^{\mathrm{wt}}_i\}\) denote the
mutated sites. We consider position-aligned substitutions, excluding insertions
and deletions. We define the context-conditioned mutation score
\begin{equation}
  s_{\mathcal M}(x,y;x^{\mathrm{wt}})
  =
  \frac{1}{|\mathcal M|}
  \sum_{i\in\mathcal M}
  \log
  \frac{
    \pi_\theta(x_i\mid x_{\setminus i},y)
  }{
    \pi_\theta(x^{\mathrm{wt}}_i\mid x^{\mathrm{wt}}_{\setminus i},y)
  } .
  \label{eq:logica-mut-score}
\end{equation}
The score is positive when context \(y\) favors the substituted residues over
their wild-type counterparts.

\begin{proposition}[Mutation-local cancellation]
\label{prop:mutation-local-cancellation}
Let \(x^A\) and \(x^B\) be variants of the same wild-type sequence
\(x^{\mathrm{wt}}\) under context \(y\), and suppose they perturb the same
nonempty mutation set
\[
  \mathcal M
  =
  \mathcal M(x^A,x^{\mathrm{wt}})
  =
  \mathcal M(x^B,x^{\mathrm{wt}}).
\]
Define
\[
  \Delta =
  \frac{1}{|\mathcal M|}
  \sum_{i\in\mathcal M}
  \left[
    \log \pi_\theta(x^A_i\mid x^A_{\setminus i},y)
    -
    \log \pi_\theta(x^B_i\mid x^B_{\setminus i},y)
  \right].
\]
Then
\[
  s_{\mathcal M}(x^A,y;x^{\mathrm{wt}})
  -
  s_{\mathcal M}(x^B,y;x^{\mathrm{wt}})
  =
  \Delta .
\]
Consequently, for the two-candidate Bradley--Terry ranking model,
\[
  \Pr(x^A \succ x^B \mid y,\{x^A,x^B\})
  =
  \sigma(\Delta/\tau).
\]
\end{proposition}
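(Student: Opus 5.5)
The plan is to expand both mutation scores using Eq.~\ref{eq:logica-mut-score} and subtract them term by term. The whole argument hinges on one observation: the denominator in Eq.~\ref{eq:logica-mut-score} depends only on \(x^{\mathrm{wt}}\), \(y\), and the site set \(\mathcal M\), and never on the variant being scored. Because the hypothesis says \(x^A\) and \(x^B\) share the same nonempty set \(\mathcal M\), both scores use the same normalization \(1/|\mathcal M|\) (well defined since \(\mathcal M\neq\emptyset\)) and the same index set.

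First I will split the logarithm of the ratio inside Eq.~\ref{eq:logica-mut-score} into a difference of logarithms:
\[
  s_{\mathcal M}(x,y;x^{\mathrm{wt}})=\frac{1}{|\mathcal M|}\sum_{i\in\mathcal M}\log\pi_\theta(x_i\mid x_{\setminus i},y)-W,
  \qquad
  W=\frac{1}{|\mathcal M|}\sum_{i\in\mathcal M}\log\pi_\theta(x^{\mathrm{wt}}_i\mid x^{\mathrm{wt}}_{\setminus i},y).
\]
This split needs every log-probability to be finite. I will assume the model's softmax assigns strictly positive probability to every token, which is automatic for a softmax over finite logits. Applying the split with \(x=x^A\) and with \(x=x^B\) gives the same constant \(W\) in both cases, because \(W\) is built only from wild-type quantities over the common \(\mathcal M\). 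Subtracting the two expansions cancels \(W\). Combining the two remaining averages over \(\mathcal M\) into one sum reproduces \(\Delta\) exactly.

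For the Bradley--Terry consequence, I will instantiate Eq.~\ref{eq:logica-ranking} with candidate set \(\mathcal C=\{x^A,x^B\}\), anchor \(y\), and score \(s=s_{\mathcal M}\). This gives
\[
  \Pr(x^A\succ x^B\mid y,\{x^A,x^B\})=\frac{e^{s_A/\tau}}{e^{s_A/\tau}+e^{s_B/\tau}}.
\]
Dividing numerator and denominator by \(e^{s_A/\tau}\) turns this into \(1/(1+e^{-(s_A-s_B)/\tau})=\sigma((s_A-s_B)/\tau)\). Substituting the first part, \(s_A-s_B=\Delta\), then yields \(\sigma(\Delta/\tau)\).

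There is no substantive obstacle here. The only care points are the finiteness assumption that justifies splitting the log-ratio, and the fact that \(\mathcal M\) is truly common to both variants, so the wild-type terms cancel exactly. One might worry that the conditioning sets \(x^A_{\setminus i}\) and \(x^B_{\setminus i}\) differ at other mutated sites. This does not matter, because those terms stay inside \(\Delta\) exactly as the proposition defines it, and nothing beyond the wild-type terms is claimed to cancel.
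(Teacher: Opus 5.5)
Your proposal is correct and matches the paper's proof: expand both scores via Eq.~\ref{eq:logica-mut-score}, cancel the common wild-type average over the shared set $\mathcal M$, and substitute the resulting gap $\Delta$ into the two-candidate case of Eq.~\ref{eq:logica-ranking} to obtain $\sigma(\Delta/\tau)$. Your remarks on finite log-probabilities and on $\mathcal M\neq\emptyset$ are harmless extra care that the paper leaves implicit.
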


Thus, for matched mutation sets, the wild-type likelihood terms cancel
exactly, and the ranking objective reduces to a direct comparison of the
context-conditioned mutant-token likelihoods at the perturbed sites.
Unchanged residues affect these likelihoods through the conditioning sequence,
but they do not appear as explicit score terms in the pairwise gap. This exact
mutation-local reduction relies on the native token-likelihood interface
preserved by \textsc{LogiCA}; latent scores do not generally admit an
analogous cancellation because they compare global sequence representations.
Appendix~\ref{app:proofs} provides the proof, score-level gradient analysis,
and multi-site concentration bound.

\subsection{Bidirectional logit-space matching}
\label{sec:logica-bidirectional}

When both modalities have pretrained token heads, \textsc{LogiCA} can evaluate
compatibility in both conditional directions. This bidirectionality is natural
for pairwise biological interactions: compatibility is a property of the pair,
even though token likelihoods are directional. The score \(\ell_{A_x}(x\mid y)\)
asks whether context \(y\) makes the evaluated tokens of \(x\) likely, whereas
\(\ell_{A_y}(y\mid x)\) asks the reciprocal question. Because these directional
likelihoods may differ in sequence length, vocabulary size, entropy scale, and
pretrained head calibration, we combine them with a learned convex mixture:
\begin{equation}
  s_\alpha(x,y)
  =
  \alpha\,\ell_{A_x}(x\mid y)
  +
  (1-\alpha)\,\ell_{A_y}(y\mid x),
  \qquad \alpha\in[0,1].
  \label{eq:logica-bidirectional-score}
\end{equation}
Here \(A_x\) and \(A_y\) denote the evaluated token positions in the two
modalities. The endpoints recover the two anchored conditional scores, while
intermediate values let the model balance evidence from the two native logit
spaces.

Substituting \(s_\alpha\) into Eq.~\ref{eq:logica-ranking} yields the
\textsc{LogiCA} contrastive objective. Alignment is therefore performed
directly in logit space: positives and mismatched partners are separated by the
likelihoods assigned by each model's original token head, without requiring a
shared vocabulary, shared decoder, pooled-latent similarity, or separate
pair-classification head.

\subsection{Native-head-preserving cross-modal adapters}
\label{sec:logica-architecture}

\textsc{LogiCA} contextualizes pretrained biological language models by
introducing native-head-preserving cross-modal adapters (Figure~\ref{fig:logica-architecture}). Cross-attention is a
standard mechanism for coupling pretrained encoders across modalities
\citep{tan2019lxmert,garau2024multi}; here, we use it to produce contextual
updates that remain compatible with each backbone's original token head.

Let \(H^x\) and \(H^y\) be token-level hidden states extracted from pretrained
encoders for the two modalities. The adapter projects both streams into a shared
interaction width, yielding \(Z^x_0\) and \(Z^y_0\), applies \(N\)
bidirectional cross-attention blocks to obtain \(Z^x_N\) and \(Z^y_N\), and
returns the accumulated update to each native hidden space through a gated
residual:
\begin{equation}
  H^m_c
  =
  H^m
  +
  \sigma(g_m)\,
  \phi_m\!\left(Z^m_N-Z^m_0\right),
  \qquad m\in\{x,y\}.
  \label{eq:logica-gated-residual}
\end{equation}
Here, \(Z^m_N-Z^m_0\) is the adapter update for modality \(m\), \(\phi_m\) maps
it back to the corresponding native hidden dimension, and the near-zero gate
initialization keeps \(H^m_c\) close to \(H^m\) at the start of training. The
contextualized states \(H^m_c\) are then scored by the original language-model
heads, preserving probabilities over native token vocabularies.

\subsection{Training with anchored negatives}
\label{sec:logica-sampling}

The ranking objective in Eq.~\ref{eq:logica-ranking} is defined by the training
candidate set, making pair construction a central design choice in
\textsc{LogiCA}. For each matched pair \((x,y)\), training constructs anchored
negatives by holding one modality fixed and replacing the other with corrupted,
mutated, or mismatched alternatives.

Alternating the anchor provides supervision for both \(\ell_{A_x}(x\mid y)\)
and \(\ell_{A_y}(y\mid x)\), supporting the bidirectional formulation in
Eq.~\ref{eq:logica-bidirectional-score}. Effective training uses a negative
pool that combines local single- or few-token perturbations, which emphasize
mutation-sensitive positions, with more distant negatives that preserve global
pair-level discrimination.

\subsection{Using trained logits for ranking, interpretation, and generation}
\label{sec:logica-downstream}

Because \textsc{LogiCA} preserves native token-probability outputs, the same
trained model can be used for ranking, interpretation, and generation
without adding task-specific heads
(Appendix~\ref{app:logica-downstream}). Variant candidates are ranked
directly by the mutation-local likelihood-ratio score in
Eq.~\ref{eq:logica-mut-score}.

To interpret cross-modal interactions, we probe how perturbing one token
changes the conditional distribution at another. For a context-token
substitution \(y_j\to a\), define
\begin{equation}
  D^{y\to x}_{ij}
  =
  \frac{1}{|\mathcal A^y_j|}
  \sum_{a\in\mathcal A^y_j}
  \left\|
    \pi_{\theta,i}(\cdot\mid x_{\setminus i},y^{(j\to a)})
    -
    \pi_{\theta,i}(\cdot\mid x_{\setminus i},y)
  \right\|_2 .
  \label{eq:logica-cross-dependency-main}
\end{equation}
Here, \(\mathcal A^y_j\) denotes the set of valid substitutions considered at
context position \(j\). Large \(D^{y\to x}_{ij}\) indicates that token \(j\) in
the context strongly influences the model's belief about token \(i\) in the
scored sequence. These cross-modal dependency maps expose which residues or
molecular tokens drive the learned compatibility score. Finally, since
\textsc{LogiCA} remains a conditional language model, its logits can be used
for context-conditioned generation by Gibbs sampling over selected design
positions, as described in Appendix~\ref{app:logica-generation}.

\section{Experiments}

\subsection{Protein--ligand \textsc{LogiCA} for binding and resistance scoring}
\label{sec:exp-dti}

We evaluate whether \textsc{LogiCA} can use contextualized token likelihoods as
protein--ligand compatibility scores. Proteins are encoded with
ESM-2 650M~\citep{lin2023evolutionary}, ligands are represented as SELFIES strings
and encoded with SELFormer~\citep{yuksel2023selformer} (86.7M parameters), and cross-modal
adapters condition each modality on the other while preserving the native token
heads. We use SELFormer rather than ChemBERTa~\citep{chithrananda2020chemberta}
because its SELFIES-based masked-token objective provides a native ligand-side
likelihood interface~\citep{krenn2020self}, matching the token-level protein
formulation used by \textsc{LogiCA}.

\paragraph{Binding prediction.}
We first pretrain on $\sim$20M protein--ligand binding measurements from
BindingDB~\citep{liu2024bindingdb}. High-affinity interactions are defined as
the top quartile within each assay modality (K\textsubscript{d},
K\textsubscript{i}, IC\textsubscript{50}, EC\textsubscript{50}) and contrasted
against partner-swapped negatives drawn from the remaining quartiles. To prevent leakage, we remove from the pretraining corpus any protein–ligand pair that appears in the downstream validation or test splits, with details provided in Appendix~\ref{app:bindingdb}.
We then evaluate on established drug--target interaction (DTI) benchmark splits from prior work, including
DAVIS~\citep{davis2011comprehensive}, BindingDB-test~\citep{huang2021moltrans},
and BioSNAP~\citep{zitnik2018biosnap}, using each benchmark's published split
protocol (Appendix~\ref{app:dti-ablation}).

For controlled ablations, we fix the ESM-2--SELFormer architecture and training data, varying only the objective: latent contrastive alignment (LatentCA), conditional masked-token adaptation (\textsc{LogiMLM}), or logit-space ranking (\textsc{LogiCA}). LatentCA mean-pools each tower and scores pairs by cosine similarity, replacing token likelihoods with a single latent score. \textsc{LogiMLM} preserves native heads but trains only on positive binding quartiles with a standard masked-token objective, without negative contrastive pairs (Appendix~\ref{app:implementation}). Table~\ref{tab:dti-main} shows that \textsc{LogiCA} outperforms both ablations on every benchmark, indicating that contrastive alignment and token-level scoring are jointly useful.
Compared with external DTI baselines (Appendix~\ref{app:external-baselines}), \textsc{LogiCA} is competitive with strong sequence-only and classifier-based methods, and remains close to structure-informed baselines despite not using structural inputs. Overall, token-level contrastive learning is not detrimental relative to latent alignment; it improves binding prediction while preserving the position-specific scoring interface needed for mutation-local analysis.

\begin{wraptable}{r}{0.51\linewidth}

    \vspace{-0.47cm}

    \centering
    \caption{Drug-resistance variant scoring. Each cell is mean $\pm$ std across drugs (per gene aggregate). \textbf{Coelho (10g)} aggregates 10 leave-one-gene-out folds (KRAS, BRAF, MAP2K1/2, PIK3CA, AKT1, MYC, BCL2, PARP1/2)~\citep{dunham2024exploring}; the cross-gene mean of pooled per-gene metrics is reported with cross-gene std. Each Coelho gene fold pools 9 targeted therapies, for 90 gene--drug assays in total. \textbf{Kim (EGFR)}~\citep{kim2024saturation} is a single-gene fold so $\pm$ is the cross-drug spread over its 10 therapies (10 assays). Per-gene results and few-shot curves: Tables~\ref{tab:variant-drug-per-gene} and~\ref{tab:variant-fewshot}. Best per column is \textbf{bold}; second-best is \underline{underlined}.  * denotes $p < 0.01$ relative to the second-best method by Wilcoxon rank-sum test.}
    \label{tab:variant-drug}
    \footnotesize
    \renewcommand{\arraystretch}{1.10}
\resizebox{0.51\textwidth}{!}{%
\begin{tabular}{lcccc}
\toprule
\multirow{2}{*}{\textbf{Method}}
    & \multicolumn{2}{c}{\shortstack{\textbf{Coelho (10g)}\\ {\scriptsize 90 assays ($10\times9$)}}}
    & \multicolumn{2}{c}{\shortstack{\textbf{Kim (EGFR)}\\ {\scriptsize 10 assays}}} \\
\cmidrule(lr){2-3} \cmidrule(lr){4-5}
     & $\rho$ & AUC & $\rho$ & AUC \\
\midrule
\rowcolor{blue!6}
\multicolumn{5}{l}{\textit{Contextualized backbone (fine-tuned)}} \\
\midrule
\methodcell{w/ LatentFuse (35M)}{concat embeddings + MLP}
    & \stdcell{0.091}{0.082} & \stdcell{0.547}{0.046} & \stdcell{0.029}{0.021} & \stdcell{0.519}{0.016} \\
\methodcell{w/ LatentFuse (150M)}{concat embeddings + MLP}
    & \stdcell{0.059}{0.061} & \stdcell{0.536}{0.035} & \stdcell{0.019}{0.067} & \stdcell{0.511}{0.030} \\
\methodcell{w/ \textsc{LogiMLM} (35M)}{logit-level MLM}
    & \stdcell{0.214}{0.082} & \stdcell{0.613}{0.047} & \stdcell{0.245}{0.064} & \stdcell{0.620}{0.031} \\
\methodcell{w/ \textsc{LogiMLM} (150M)}{logit-level MLM}
    & \understdcell{0.258}{0.060} & \stdcell{0.632}{0.034} & \understdcell{0.272}{0.083} & \understdcell{0.633}{0.048} \\
\rowcolor{blue!10}
\methodcell{\textbf{w/ \textsc{LogiCA} (35M)}}{\textbf{token-score contrastive}}
    & \stdcell{0.256}{0.054} & \understdcell{0.636}{0.036} & \stdcell{0.260}{0.067} & \stdcell{0.630}{0.034} \\
\rowcolor{blue!10}
\methodcell{\textbf{w/ \textsc{LogiCA} (150M)}}{\textbf{token-score contrastive}}
    & \beststdcell{0.271*}{0.064} & \beststdcell{0.644*}{0.032} & \beststdcell{0.295*}{0.074} & \beststdcell{0.638}{0.046} \\
\midrule
\rowcolor{gray!8}
\multicolumn{5}{l}{\textcolor{gray!70!black}{\textit{Unconditional baselines}}} \\
\midrule
\methodcell{ESM-1v~\citep{meier2021language}}{masked LM}
    & \stdcell{0.076}{0.079} & \stdcell{0.540}{0.041} & \stdcell{0.195}{0.088} & \stdcell{0.601}{0.041} \\
\methodcell{ESM-2~\citep{lin2023evolutionary} (35M)}{masked LM}
    & \stdcell{0.024}{0.078} & \stdcell{0.516}{0.050} & \stdcell{0.155}{0.094} & \stdcell{0.577}{0.046} \\
\methodcell{ESM-2~\citep{lin2023evolutionary} (150M)}{masked LM}
    & \stdcell{0.052}{0.055} & \stdcell{0.532}{0.036} & \stdcell{0.153}{0.105} & \stdcell{0.580}{0.046} \\
\methodcell{EVE~\citep{frazer2021disease}}{MSA VAE}
    & \stdcell{0.114}{0.155} & \stdcell{0.568}{0.094} & \stdcell{0.156}{0.081} & \stdcell{0.580}{0.033} \\
\methodcell{Tranception~\citep{notin2022tranception}}{retrieval LM}
    & \stdcell{0.041}{0.056} & \stdcell{0.525}{0.027} & \stdcell{0.170}{0.062} & \stdcell{0.591}{0.031} \\
\midrule
\rowcolor{gray!8}
\multicolumn{5}{l}{\textcolor{gray!70!black}{\textit{Contextualized baselines (fine-tuned)}}} \\
\midrule
\methodcell{DrugBAN~\citep{bai2023drugban}}{DTI classifier + MLP}
    & \stdcell{0.014}{0.018} & \stdcell{0.520}{0.015} & \stdcell{0.018}{0.036} & \stdcell{0.507}{0.025} \\
\methodcell{Boltz-2~\citep{passaro2025boltz2}}{structure features + MLP}
    & \stdcell{0.015}{0.033} & \stdcell{0.506}{0.021} & \stdcell{0.011}{0.047} & \stdcell{0.506}{0.020} \\
\methodcell{DrugCLIP~\citep{jia2026drugclip}}{DTI contrastive + MLP}
    & \stdcell{0.001}{0.034} & \stdcell{0.505}{0.011} & \stdcell{-0.000}{0.040} & \stdcell{0.497}{0.019} \\
\bottomrule
\end{tabular}%
}
\vspace{-0.8cm}
\end{wraptable}
\renewcommand{\arraystretch}{1.0}

\textbf{Drug resistance prediction.}
We next test whether the pretrained protein--ligand likelihood interface can be adapted for drug-conditioned mutation resistance scoring and transfer to held-out resistance settings. The drug-screening assays are protein DMS experiments, where variants are scored by their measured resistance phenotype under each drug (Appendix~\ref{app:drug-resistance}). After fine-tuning on resistance assay, \textsc{LogiCA} and \textsc{LogiMLM} scores held-out variants directly from contextualized logit outputs at the mutated positions (Eq.~\ref{eq:logica-mut-score}). For drug-conditioned variant ranking, cosine similarity between protein and drug pools is degenerate because every variant is paired with the same drug; we
therefore use \textit{w/ LatentFuse}, an MLP over pooled protein and drug vectors, as the matched latent-score ablation (Appendix~\ref{app:external-baselines}).

On the multi-oncogene resistance panel from Coelho et al.~\citep{dunham2024exploring},
the two native-likelihood models, \textsc{LogiMLM} and \textsc{LogiCA}, are the
only methods that substantially improve over near-random protein-only and
structural baselines when SELFormer drug context is added to the ESM-2 protein
backbone. \textsc{LogiCA} achieves the strongest performance within this
contextualized sequence-model family, topping the rankings for 8 out of 10 genes
(Table~\ref{tab:variant-drug-per-gene}).
By contrast, LatentFuse, DrugBAN~\citep{bai2023drugban},
Boltz-2~\citep{passaro2025boltz2}, and
DrugCLIP~\citep{jia2026drugclip} remain close to random despite task-matched
fine-tuning on the same mutation-local resistance objective
(Table~\ref{tab:variant-drug}). Thus, the gain does not come from drug context
alone, but from coupling that context to a mutation-local likelihood interface.

Generic protein fitness is often insufficient to determine whether a mutation is
beneficial or deleterious under a particular therapeutic context, while hidden
representations pretrained for global protein--ligand compatibility are not well
suited to exposing local, drug-conditioned mutational effects through a
downstream probe or head. In contrast, \textsc{LogiCA}'s pretraining aligns
biological context through native token likelihoods, so fine-tuning can directly
refine a mutation-local scoring interface rather than extract local effects from
global pair representations.
The EGFR-focused panel~\citep{kim2024saturation} is a notable exception, strong sequence-only priors remain competitive, suggesting that generic fitness already explains a large fraction of the measured resistance signal.

\textbf{Scaling of protein--ligand \textsc{LogiCA}.}
We next examine how the likelihood interface scales with backbone capacity and
downstream supervision. For pretraining scale, we train \textsc{LogiCA} with
ESM-2 backbones from 8M to 650M parameters while keeping the SELFormer ligand
encoder fixed (Appendix~\ref{app:scaling-regime}). As shown in
Figures~\ref{fig:scaling}A,B, larger protein backbones yield higher held-out
matched-versus-mismatched likelihood margins, with the peak margin following
\(\propto N^{0.16}\) over the \(80\times\) parameter range.
For downstream data scale, we vary the fraction of available target-gene
resistance labels from \(0\%\) to \(15\%\) and compare \textsc{LogiCA} with
\textsc{LogiMLM} under the same backbone setting. Both objectives benefit from
additional supervision, but \textsc{LogiCA} remains consistently stronger, with
the largest gains in the intermediate-label regime
(\(10\)--\(15\%\); Figure~\ref{fig:scaling}C).

\begin{figure}[!htbp]
    \centering
    \includegraphics[width=\linewidth]{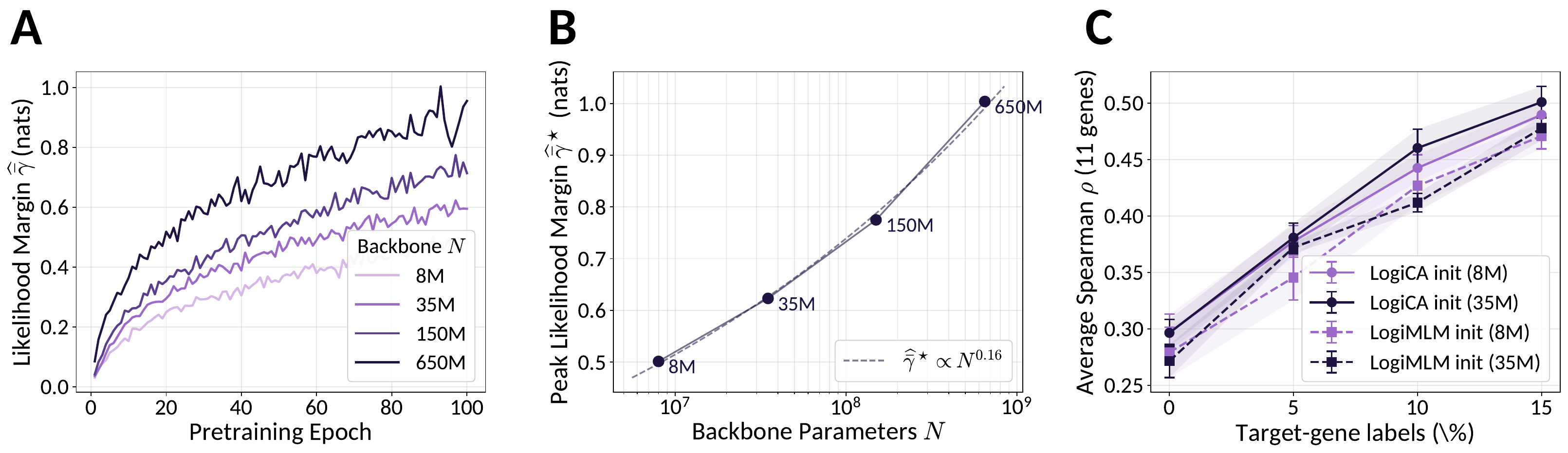}
    \caption{Two scaling regimes for protein--ligand \textsc{LogiCA}.
    (A)~Held-out likelihood-margin trajectories during pretraining for ESM-2
    backbones at \(\{8,35,150,650\}\)M parameters.
    (B)~Peak margin versus backbone size; dashed line shows a log-log fit,
    \(\widehat{\bar{\gamma}}^\star \propto N^{0.16}\).
    (C)~Few-shot drug-resistance ranking as a fraction of target-gene labels is
    used for adaptation and the remaining variants are held out for evaluation.}
    \label{fig:scaling}
\end{figure}

\subsection{TCR--peptide \textsc{LogiCA}: zero-shot variant ranking and biomolecular contacts}
\label{sec:exp-tcr-epitope}

We next evaluate \textsc{LogiCA} in TCR--peptide recognition, a notoriously
challenging problem with sparse and noisy paired data in which binding changes
are often driven by a small number of residues at the interface~\citep{banerjee2025comprehensive}. This makes the
task a natural test bed for token-level likelihood scoring. We encode
peptides with ESM-2 35M~\citep{lin2023evolutionary} and paired
TCR CDR3$\alpha$--CDR3$\beta$ sequences with TCRLang~\citep{raybould2024observed}
(44.8M parameters), which is pretrained on paired TCR chains. We use the 35M
ESM-2 peptide encoder because peptides are short linear amino-acid sequences,
where larger protein backbones provide limited additional benefit and can
overfit under scarce paired TCR--peptide supervision
(Appendix~\ref{app:tcr-esm-scaling}).
We pretrain on experimentally validated TCR--peptide binders from
IEDB~\citep{vita2019iedb}, which predominantly provide CDR3$\beta$ annotations,
and supplement these data with paired CDR3$\alpha$--CDR3$\beta$ annotations
curated in prior studies~\citep{zhang2024epitope,kwee2023stapler}. The resulting
corpus contains approximately 250k training pairs, covering about 150k unique
TCRs and 1.5k unique peptides (Appendix~\ref{app:tcrbinding}). Because reliable
non-binding annotations remain difficult to define~\citep{gao2023panpep,
dens2023pitfalls}, \textsc{LogiCA} uses online synthetic negatives generated by
random point mutations in binding pairs, reflecting the empirical prior that most
local perturbations disrupt binding (92\%, Figure~\ref{fig:mutation-overview}C).
Since CDR3$\beta$-only annotations outnumber paired-chain annotations by more
than an order of magnitude, the final fine-tuning stage uses only paired
CDR3$\alpha$--CDR3$\beta$ binding data. To prevent leakage, we remove any
TCR--peptide pair appearing in downstream zero-shot splits from the pretraining
corpus.

We evaluate zero-shot variant ranking on experimental TCR--peptide activity and
binding-energy DMS studies~\citep{banerjee2025comprehensive}, which assay
single-residue substitutions in either the peptide or the TCR for a fixed
TCR--peptide pair (Appendix~\ref{app:tcrbinding}). These assays test whether
\textsc{LogiCA} can rank functional near-neighbor variants without observing
their experimental readouts. We also evaluate unseen-peptide generalization on
IMMREP25~\citep{richardson2026immrep25}; however, under the current level of
paired TCR--peptide data scarcity, all existing sequence-based models remain
close to random in this setting (Table~\ref{tab:immrep}). We therefore focus the
main analysis on zero-shot variant ranking, where the mutation-local likelihood
interface can be directly evaluated.

\textbf{Zero-shot directional variant ranking.}
We evaluate two mutation directions against experimental TCR--peptide DMS
measurements. In the peptide-variant setting, peptide mutants are ranked against
a fixed TCR and evaluated against measurements from prior studies
\citep{drost2025benchmarking,banerjee2025comprehensive,borrman2017atlas}. In the
TCR-variant setting, TCR mutants are ranked against a fixed peptide and evaluated
against binding energies in the ATLAS-TCR database~\citep{borrman2017atlas}. In both cases, variants
are scored directly with the mutation-local likelihood score in
Eq.~\ref{eq:logica-mut-score}.
For a controlled comparison, we keep the TCRLang--ESM-2 backbones and training
data fixed, varying only the learning objective and anchor direction. We observe
a clear directionality effect: TCR-anchored \textsc{LogiCA}
(\textsc{LogiCA}-TCR) performs best for ranking peptide variants against a fixed
TCR, whereas peptide-anchored \textsc{LogiCA} (\textsc{LogiCA}-Pep) performs best
for ranking TCR variants against a fixed peptide
(Table~\ref{tab:mutation-affinity-correlation}, Figures~\ref{fig:tcr-figure}A--B).
The dual-anchor model (\textsc{LogiCA}-Dual) provides the most balanced
performance across both directions, making it preferable when mutations may
occur on either side of the TCR--peptide interface.
External paired TCR--peptide models provide stringent baselines, spanning
latent-contrastive methods~\citep{zhang2024epitope}, classifier-based
approaches~\citep{springer2021ergoii,moris2021imrex,zhang2023itcep,
peng2023teim}, and MLM-based models~\citep{meynard2024tulip,karthikeyan2025tcrt5}.
Across these baselines, \textsc{LogiCA} achieves the strongest correlations,
suggesting that global embeddings and classifier scores do not transfer as
effectively to mutation-local ranking as native-vocabulary likelihoods evaluated
at the mutated positions.
The contrast with the conditional MLM ablation is especially pronounced in this setting. While \textsc{LogiMLM} remains near random on both the TCR and peptide-variant benchmarks, \textsc{LogiCA} ranks first on the ePytope binary mutation-classification benchmark~\citep{drost2025benchmarking}, increasing AUC from 0.52 to 0.67 under the same zero-shot, mutation-local scoring protocol (Appendix~\ref{app:epytope-auc}). This separation is larger than in the
protein--ligand setting, consistent with the greater sparsity and imbalance of
TCR--peptide supervision: when paired data are limited and dominated by dominated by overrepresented contexts (Figure~\ref{fig:tcr-peptide-count}), reconstruction alone provides a weak contextual matching signal,
whereas contrastive alignment directly separates functional near-neighbors from
disrupted pairs.

\begin{figure}[!htbp]
    \centering
    \includegraphics[width=1\linewidth]{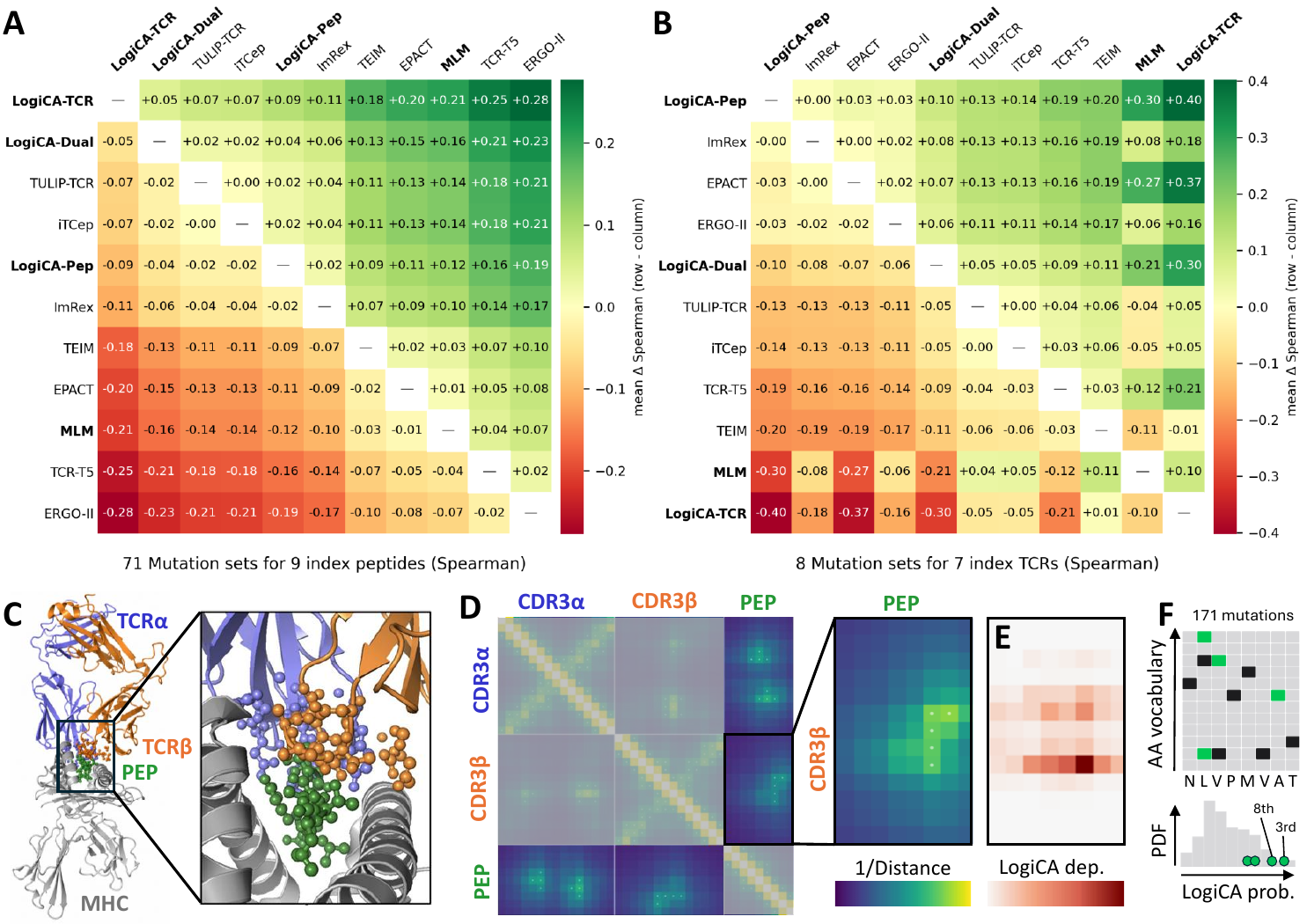}
    \caption{\textbf{\textsc{LogiCA} performs zero-shot TCR--peptide variant
    ranking and identifies cross-modal dependencies.}
    (A, B) Pairwise win-margin heatmap for peptide variant ranking and
    TCR variant ranking. Each cell reports the mean difference in Spearman
    correlation between the row model and the column model across mutation
    sets, with positive values indicating that the row model performs better. Individual scores are provided in Table~\ref{tab:mutation-affinity-correlation} and Figure~\ref{fig:combo-strip}.
    (C) Representative TCR--pMHC complex structure (PDB 5TEZ), with peptide
    residues and TCR CDR regions shown as spheres to highlight the primary
    interaction interface.
    (D) Ground-truth residue-proximity map. Residue pairs within
    \SI{5}{\angstrom} are marked as contacts, while intra-modality pairs are
    grayed out because the analysis focuses on inter-molecular dependencies.
    (E) Zero-shot \textsc{LogiCA}-predicted dependency scores between
    CDR3\(\beta\) and the peptide.
    (F) Zero-shot prioritization of mutations for the NLVPMVATV peptide
    \citep{kula2019t}. Among 171 screened single mutants, four showed improved
    activity over the wild-type peptide. \textsc{LogiCA} ranks these
    activity-enhancing variants highly: L2I at 3/171, L2V at 8/171,
    A7P at 29/171, and V3L at 48/171.}
    \label{fig:tcr-figure}
\end{figure}

\textbf{Dependency-map interpretation.}
Because \textsc{LogiCA} preserves token-probability outputs, it can be probed directly for residue-level statistical dependencies without introducing auxiliary attribution machinery. Specifically, Eq.~\ref{eq:logica-cross-dependency-main} asks whether perturbing one residue changes the probability assigned to another residue. This is not intended as a state-of-the-art contact-prediction method; rather, it tests whether the model's native logits encode biologically meaningful inter-residue structure.
Figures~\ref{fig:tcr-figure}C--E illustrate one representative complex (PDB 5TEZ), where high dependency scores concentrate around the CDR3\(\beta\)--peptide interface and partially overlap the structural contact map.
We evaluate this systematically using 250 crystallized TCR--pMHC structures from the TCR3d database~\citep{gowthaman2019tcr3d,lin2025tcr3d}, defining TCR--peptide contacts by an \SI{5}{\angstrom} heavy-atom distance threshold.
Table~\ref{tab:dependency-map-interchain} shows that \textsc{LogiCA} dependency scores are enriched for observed TCR--peptide contacts, with AUCs of $\sim 0.59$ for peptide--CDR3\(\alpha\) contacts and $\sim 0.74$ for peptide--CDR3\(\beta\) contacts. These values are substantially higher than external logit-based baselines such as TCR-T5~\citep{karthikeyan2025tcrt5} and TULIP-TCR~\citep{meynard2024tulip}, indicating that \textsc{LogiCA}'s probabilities carry interpretable structural signal.

\textbf{Case study on NLVPMVATV optimization.}
As a representative zero-shot peptide-optimization case study, we analyze the CMV pp65 peptide NLVPMVATV, a clinically relevant HLA-A*02:01-restricted antigen widely used to monitor CMV-specific CD8 T-cell immunity in immunocompromised and transplant patients~\citep{gratama2001tetramer}. We use \textsc{LogiCA} to rank 171 experimentally measured single mutants of this peptide for recognition by the NLV3 TCR~\citep{kula2019t}. Experimentally, only four variants exceed wild-type activity: L2I, L2V, A7P, and V3L. \textsc{LogiCA} ranks these variants 3rd, 8th, 29th, and 48th out of 171, respectively (Figure~\ref{fig:tcr-figure}F).
The recovery of L2I and L2V among the top-ranked candidates is particularly notable because both are conservative hydrophobic substitutions at position 2, a canonical HLA-A*02:01 anchor position. Such anchor-preserving variants are clinically relevant because they are expected to maintain efficient HLA-A*02:01 presentation while modulating TCR activation, making them plausible candidates for improved monitoring reagents or peptide agonists without necessarily disrupting antigen presentation~\citep{kula2019t}.

\section{Outlook}

\textsc{LogiCA} offers a logit-space view of multimodal alignment for biological
language models. Rather than forcing embeddings from different modalities into a
shared representation space, cross-modal context modulates each backbone's native
token distribution. Alignment is therefore expressed not as geometric proximity
between pooled latents, but as changes in context-conditioned token likelihoods.
This preserves the probabilistic interface that makes pretrained biological
language models useful for scoring, interpretation, and generation, while making
that interface sensitive to biological context.

This distinction is important because many biological questions are naturally
token-local: a ligand may change the plausibility of a residue substitution, a
TCR may change the likelihood of an peptide mutation, or a peptide residue may
induce localized changes in a receptor sequence. By retaining pretrained token
heads and injecting context through native-head-preserving cross-modal adapters,
\textsc{LogiCA} keeps these questions in the output space where pretrained
models already encode sequence semantics. Across protein--ligand binding and
TCR--pMHC modeling, this logit-space formulation matches or improves on strong
latent-alignment alternatives for pairwise prediction, with its largest gains in
variant-ranking settings where native token likelihoods directly score local
sequence changes under biological context.

This token-likelihood-preserving design comes with a computational tradeoff:
because sequence--context scores require pair-specific contextualization,
\textsc{LogiCA} is less efficient than dual-encoder models for exhaustive
large-scale retrieval. It is therefore best viewed as a reranking,
variant-scoring, or generative interaction model rather than a replacement for
first-stage embedding retrieval in massive screening settings
(Appendix~\ref{app:complexity}).

More broadly, biological foundation models need not be adapted by replacing
their learned vocabularies with task-specific classifiers. External biological
context can instead be trained to reshape the token probabilities of existing
models, making the framework modular across protein, immune-receptor, peptide,
small-molecule, DNA/RNA, genomic-state, microbial, and synthetic-biology
backbones
\citep{dalla2025nucleotide,penic2025rinalmo,ji2021dnabert,
avsec2021effective,gao2024epigept,avsec2026alphagenome,
zvyagin2023genslms,wiatrak2025bacformer}. This is especially relevant in
biological settings where paired cross-modal data are sparse: contrastive
supervision can extract contextual signal from matched and mismatched pairs
while preserving each backbone's native likelihood interface.

Finally, preserving native logits creates a direct path to conditional
generation. The contextualized token distributions produced by \textsc{LogiCA}
can be used as sampling distributions in Gibbs-style procedures over protein,
receptor, peptide, or molecular tokens
(Appendix~\ref{app:logica-generation}). We therefore view \textsc{LogiCA} less
as a single architecture than as a general principle for aligning biological
language models: preserve their token-level probabilistic semantics, and make
those semantics context-sensitive.

\bibliography{references}
\bibliographystyle{unsrtnat}

\newpage
\appendix

\setcounter{figure}{0}
\setcounter{table}{0}
\renewcommand{\thefigure}{S\arabic{figure}}
\renewcommand{\thetable}{S\arabic{table}}


\section{Related Work}
\label{app:related}

\paragraph{Contextualized biological sequence models.}
Biological foundation models increasingly augment sequence representations with
external biological context, including cellular or tissue state, genomic
neighborhoods, interaction networks, and molecular graphs
\citep{li2024pinnacle,hwang2024glm,avsec2026alphagenome,wiatrak2025bacformer,
wang2024biobridge}. Closest to our setting are paired or target-conditioned
models for peptide design, TCR--peptide binding, interacting proteins,
antibody and TCR chain pairing, and partner-specific sequence modeling~\citep{ullanat2026mint, mizrahi20234m, meynard2024tulip, karthikeyan2025tcrt5, chen2025pepmlm, burbach2024improving, liu2025plminteract, lupo2024pairing}.
Many of these methods condition one biological sequence on another through
co-encoding, cross-attention, adapters, or conditional masked-language-modeling
objectives. They therefore retain, to varying degrees, the token-level interface
needed for residue scoring and generation. However, their matching or
interaction objectives are typically not defined directly on contextualized
token likelihoods. In contrast, \textsc{LogiCA} uses token log-likelihoods themselves as
the contrastive matching scores, preserving compatibility with each pretrained
model's native output head.

\paragraph{Fine-tuning and adapter-based conditioning.}
A common way to adapt pretrained masked language models is to fine-tune the
entire model or insert parameter-efficient modules such as low-rank adapters
\citep{hu2022lora}. Such approaches can be applied to paired biological inputs,
including protein--protein interactions, antibody heavy/light chains,
TCR$\alpha$/$\beta$ chains, and other cross-sequence settings
\citep{lupo2024pairing, zhang2024epitope, meynard2024tulip, burbach2024improving, ullanat2026mint, 
nagano2025contrastive, deutschmann2024domain}. Standard masked
language modeling trains a model to reconstruct masked tokens from their
surrounding context,
\begin{equation}
  \mathcal{L}_{\mathrm{MLM}}
  =
  - \frac{1}{|\Omega|}
  \sum_{i \in \Omega}
  \log p_\theta(x_i \mid x_{\setminus i})
  =
  - \frac{1}{|\Omega|}
  \sum_{i \in \Omega}
  \log
  \frac{\exp(\ell_{i,x_i})}
       {\sum_{a \in \mathcal{A}} \exp(\ell_{i,a})},
  \label{eq:app-mlm}
\end{equation}
where $\Omega$ is the set of masked positions, $\ell_{i,a}$ is the logit for token $a$ at position $i$ and
$\mathcal{A}$ is the model vocabulary. Conditional MLMs extend this objective
by providing an additional context $y$, yielding token probabilities of the
form $p_\theta(x_i \mid x_{\setminus i}, y)$. While this preserves the
probabilistic token interface, reconstruction alone is often a weak signal for
biological compatibility: it encourages recovery of observed tokens, but does
not directly separate matched from mismatched contexts. \textsc{LogiCA} instead uses
paired supervision contrastively in logit space, so that the contextualized
token likelihoods are optimized as compatibility scores.

\paragraph{Latent-space contrastive matching.}
A second line of work learns compatibility in a shared latent space using
CLIP-style contrastive supervision \citep{radford2021learning}. This template
has been applied to drug--target interaction and virtual screening
\citep{jia2026drugclip,singh2023contrastive}, protein retrieval
\citep{su2024protrek}, TCR--antigen recognition
\citep{zhang2024epitope,nagano2025contrastive}, and multimodal biological
integration \citep{gayoso2021totalvi,ashuach2023multivi,zhang2026apollo}.
The standard InfoNCE or NT-Xent objective scores a sequence--context pair using
a pooled-latent similarity, for example
\begin{equation}
  s_z(x,y)
  =
  \langle f_\theta(x),\, g_\phi(y)\rangle,
  \label{eq:app-embed-sim}
\end{equation}
where $f_\theta$ and $g_\phi$ map the sequence and context into a shared
latent space. These scores are effective for retrieval and binary matching,
but they no longer correspond to token likelihoods. Consequently, the
per-position distribution
$p_\theta(x_i \mid x_{\setminus i}, y)$ is not recoverable from
$s_z(x,y)$, making it difficult to localize scores to mutated
residues, perform likelihood-based generation, or analyze position-wise
probabilistic signals. \textsc{LogiCA} keeps the contrastive template but changes the
scored object: rather than contrasting pooled latents, it contrasts
context-conditioned token log-likelihoods.

\paragraph{Likelihood-based variant ranking.}
Variant-effect prediction is naturally comparative: experiments often ask which
variant is more functional, resistant, or compatible under a fixed biological
condition. Protein language models commonly score mutations using
wild-type-normalized log-likelihood ratios at the mutated positions
$\mathcal{M}$,
\begin{equation}
  s^{\circ}_{\mathcal M}(x;\, x^{\mathrm{wt}})
  =
  \frac{1}{|\mathcal M|}
  \sum_{i \in \mathcal M}
  \log
  \frac{p_\theta(x_i \mid x_{\setminus i})}
       {p_\theta(x^{\mathrm{wt}}_i \mid x^{\mathrm{wt}}_{\setminus i})},
  \label{eq:app-uncond-llr}
\end{equation}
and such scores are widely used in protein variant-effect benchmarks
\citep{meier2021language,notin2023proteingym}. Related likelihood-based
approaches incorporate evolutionary context, retrieval-augmented protein
families, or pairwise ranking supervision
\citep{gong2024evorank, notin2022tranception,
pugh2025likelihood, lee2023fine, zhao2024contrastive}. These methods motivate
the use of token likelihoods for ranking, but the conditioning signal is
usually the sequence itself, an evolutionary family, or preference supervision
over variants. \textsc{LogiCA} instead makes the conditioning variable an explicit
external biological context, such as a drug, epitope, ligand, or binding
partner, yielding scores based on
$p_\theta(x_i \mid x_{\setminus i}, y)$.

\paragraph{Preference and logit-space contrastive objectives.}
Preference-based fine-tuning provides another route for adapting pretrained
language models to task-specific comparisons. Direct Preference Optimization
(DPO) \citep{rafailov2023direct} and related methods optimize models so that
preferred outputs receive higher likelihood than dispreferred ones. Extensions
to masked language models and biological sequence models often use
pseudo-log-likelihood or average token likelihood as a sequence-level reward
\citep{lee2023fine,zhao2024contrastive, hawkins2024likelihood}, for example
\begin{equation}
  r_\theta(x)
  =
  \frac{1}{L}
  \sum_{i=1}^{L}
  \log p_\theta(x_i \mid x_{\setminus i}),
  \label{eq:app-pll-reward}
\end{equation}
and optimize pairwise preferences with a soft-margin or Bradley--Terry-style
loss. Recent work has also explored contrastive supervision in log-likelihood
space for autoregressive models, including contrastive preference learning and
contrastive preference optimization
\citep{hejna2023contrastive,xu2024contrastive}. These methods compare outputs
conditioned on a single input, typically for generation or translation.
\textsc{LogiCA} differs in both setting and scoring: it performs contrastive learning
over structured biological pairs and uses contextualized token likelihoods from
masked biological language models as bidirectional compatibility scores.

\paragraph{How \textsc{LogiCA} differs.}
\textsc{LogiCA} sits at the intersection of contextual biological modeling,
contrastive alignment, and likelihood-based variant ranking. Unlike
latent-space contrastive methods, it does not replace the pretrained token
head with a pooled similarity score. Unlike standard conditional MLM
fine-tuning, it directly contrasts matched and mismatched biological contexts.
Unlike existing likelihood-based ranking methods, it conditions token
probabilities on explicit external partners. This makes \textsc{LogiCA} particularly
suited to mutation-local variant ranking, where variants share a wild-type
reference and mutated sites, so shared sequence terms cancel and comparisons
reduce to context-conditioned mutant-token likelihoods at the perturbed
positions.


\section{Emerging capabilities of the trained token logits in \textsc{LogiCA}}
\label{app:logica-downstream}

A key consequence of logit-space alignment is that the fine-tuned model remains
a conditional token model. The same probabilities used for contrastive or
preference training can therefore be reused for ranking, interpretation, and
generation without introducing new task-specific heads.

\subsection{Direct likelihood-based ranking}
\label{app:logica-ranking}

For a fixed context \(y\), reference sequence \(x^{\mathrm{wt}}\), and variant
\(x\), \textsc{LogiCA} ranks candidates by the mutation-local score introduced
in Eq.~\ref{eq:logica-mut-score}:
\begin{equation}
  r(x;y,x^{\mathrm{wt}})
  =
  s_{\mathcal M}(x,y;x^{\mathrm{wt}}),
  \qquad
  \mathcal M=\{i:x_i\neq x^{\mathrm{wt}}_i\}.
  \label{eq:app-logica-ranking-rule}
\end{equation}
Given a candidate set
\(\mathcal X=\{x^{(1)},\ldots,x^{(K)}\}\), variants are sorted in decreasing
order of \(r(x^{(k)};y,x^{\mathrm{wt}})\). Equivalently, the scores define a
soft ranking distribution
\begin{equation}
  p(x^{(k)}\mid y,x^{\mathrm{wt}},\mathcal X)
  =
  \frac{
    \exp\!\left(
      s_{\mathcal M_k}(x^{(k)},y;x^{\mathrm{wt}})/\tau
    \right)
  }{
    \sum_{\ell=1}^{K}
    \exp\!\left(
      s_{\mathcal M_\ell}(x^{(\ell)},y;x^{\mathrm{wt}})/\tau
    \right)
  },
  \label{eq:app-logica-variant-softmax}
\end{equation}
where \(\mathcal M_k=\{i:x^{(k)}_i\neq x^{\mathrm{wt}}_i\}\). Thus inference
uses exactly the same likelihood-ratio quantity optimized during preference
training.

\subsection{Unconditional Reference Adjustment for Contextual Scoring}
\label{sec:logica-reference-adjustment}

At evaluation time, raw contextual likelihoods may reflect both
context-specific compatibility and the unconditional plausibility of the tokens
being scored. We therefore use a reference-adjusted score that subtracts the
corresponding unconditional native-head likelihood in each direction:
\begin{equation}
\widetilde{s}_{\alpha}(x,y)
=
\alpha
\left[
\ell_{A_x}(x\mid y)-\ell_{A_x}(x)
\right]
+
(1-\alpha)
\left[
\ell_{A_y}(y\mid x)-\ell_{A_y}(y)
\right],
\qquad \alpha\in[0,1].
\label{eq:logica-reference-adjusted-score}
\end{equation}
Here \(\ell_{A_x}(x)\) and \(\ell_{A_y}(y)\) are computed with the same
pretrained token heads but without cross-modal conditioning. Thus,
\(\widetilde{s}_{\alpha}(x,y)\) measures the context-induced likelihood gain of
the paired modalities, rather than the marginal plausibility of either modality
alone. Unless otherwise stated, contrastive training uses the contextual score
\(s_{\alpha}(x,y)\), while downstream contextual matching and variant-ranking
experiments use \(\widetilde{s}_{\alpha}(x,y)\) in place of
\(s_{\alpha}(x,y)\) in Eq.~\ref{eq:logica-ranking}.

\subsection{Cross-modality dependency maps}
\label{app:logica-dependency}

Because \textsc{LogiCA} preserves normalized token distributions, it can be
probed by perturbing one token and measuring the induced change in another
token's predicted distribution. This gives a dependency map over positions.
Prior token-probability perturbation analyses have shown that such
sensitivities can reveal structural contacts, interacting motifs, and
evolutionary constraints within a single sequence
~\citep{tomaz2024nucleotide,zhang2024interactingmotifs,cornman2024omg}.
\textsc{LogiCA} extends this idea across the conditioning interface.

For within-sequence dependencies, let \(x^{(j\to a)}\) denote the sequence
obtained by replacing token \(x_j\) with \(a\), and let \(\mathcal A^x_j\) be
the set of allowed substitutions at position \(j\). We define
\begin{equation}
  D^{x\to x}_{ij}(y)
  =
  \frac{1}{|\mathcal A^x_j|}
  \sum_{a\in\mathcal A^x_j}
  \left\|
    \pi_{\theta,i}(\cdot\mid x^{(j\to a)}_{\setminus i},y)
    -
    \pi_{\theta,i}(\cdot\mid x_{\setminus i},y)
  \right\|_2 .
  \label{eq:app-logica-within-dependency}
\end{equation}
Large \(D^{x\to x}_{ij}(y)\) indicates that perturbing position \(j\) in the
scored sequence changes the predicted distribution at position \(i\), under the
fixed context \(y\).

For cross-modality dependencies, we instead perturb the context. Let
\(y^{(j\to a)}\) be the context obtained by substituting token \(y_j\) with
\(a\), and let \(\mathcal A^y_j\) be the allowed substitution set for that
context position. We define
\begin{equation}
  D^{y\to x}_{ij}
  =
  \frac{1}{|\mathcal A^y_j|}
  \sum_{a\in\mathcal A^y_j}
  \left\|
    \pi_{\theta,i}(\cdot\mid x_{\setminus i},y^{(j\to a)})
    -
    \pi_{\theta,i}(\cdot\mid x_{\setminus i},y)
  \right\|_2 .
  \label{eq:app-logica-cross-dependency}
\end{equation}
This quantity measures how strongly token \(j\) in the context affects the
model's predicted distribution at token \(i\) in the scored sequence. When both
directions are available, the reverse map \(D^{x\to y}_{ji}\) is computed
analogously by perturbing \(x\) and measuring changes in the predicted token
distribution over \(y\). Together, these maps provide a token-level view of the
intermodal dependencies learned by the model.

\subsection{Context-conditioned generation by Gibbs sampling}
\label{app:logica-generation}

Since \textsc{LogiCA} remains a conditional language model, it can also be used
as a generative model under a fixed context. Let \(A\subseteq[L]\) be a set of
designable positions and let \(x_{\setminus A}\) denote the fixed sequence
background. The model defines the pseudo-likelihood
\begin{equation}
  p_\theta(x_A\mid x_{\setminus A},y)
  \propto
  \prod_{i\in A}
  \pi_\theta(x_i\mid x_{\setminus i},y).
  \label{eq:app-logica-pseudolikelihood}
\end{equation}
We sample from this distribution with Gibbs updates. Starting from an initial
sequence \(x^{(0)}\), each step selects a design position \(i\in A\) and
resamples
\begin{equation}
  x_i^{(t+1)}
  \sim
  \pi_\theta\!\left(
    \cdot \mid x^{(t)}_{\setminus i}, y
  \right),
  \qquad
  x_j^{(t+1)}=x_j^{(t)}
  \ \text{for } j\neq i .
  \label{eq:app-logica-gibbs}
\end{equation}
The sampler can be constrained to valid biological tokens, fixed motif
positions, interface residues, or a mutation budget around a reference
sequence. When a reference \(x^{\mathrm{wt}}\) is available, proposals can also
be ranked or filtered by the change in the \textsc{LogiCA} score,
\begin{equation}
  \Delta s
  =
  s_{\mathcal M}(x^{\mathrm{new}},y;x^{\mathrm{wt}})
  -
  s_{\mathcal M}(x^{\mathrm{old}},y;x^{\mathrm{wt}}).
  \label{eq:app-logica-generation-delta}
\end{equation}
Thus the same logits used for ranking can be used to propose and refine
context-compatible sequences.


\section{Theoretical Analysis of Mutation-Local Variant Scoring}
\label{app:proofs}

This appendix formalizes why the mutation-local scoring objective used by \textsc{LogiCA} provides localized supervision for variant ranking. We begin by showing that, when two variants share the same wild-type reference and the same set of mutated positions, the wild-type anchoring term in Eq.~\ref{eq:logica-mut-score} cancels exactly in pairwise score differences. Consequently, the mutation-local ranking score reduces exactly to the context-conditioned likelihoods of the competing mutant tokens at the perturbed sites. This reduction is specific to the mutation-local objective and does not generally hold for ranking objectives based on pooled-latent similarities. 
We next show that the associated pairwise preference loss induces direct score-level derivatives on the mutated-site likelihood terms, ensuring that optimization targets the positions that distinguish the variants. Finally, we study multi-site comparisons under a correlated sub-Gaussian noise model that captures non-deterministic scoring at mutated positions. The deterministic cancellation result remains valid for any fixed realization of the scores, while the probabilistic analysis shows that averaging across mutated sites can improve ranking reliability by reducing the probability that noise reverses the correct ordering.

\subsection{Exact Cancellation for Matched Mutation Sets}

We first consider two variants of the same wild-type sequence
$x^{\mathrm{wt}}$ under context $y$. Let $x^A$ and $x^B$ perturb the same
nonempty set of positions relative to the wild type:
\[
  \mathcal M
  =
  \mathcal M(x^A,x^{\mathrm{wt}})
  =
  \mathcal M(x^B,x^{\mathrm{wt}}),
  \qquad
  m=|\mathcal M|\ge 1.
\]
The variants must agree on which positions are mutated, but the substituted
tokens at those positions may differ.
For $V\in\{A,B,\mathrm{wt}\}$ and $i\in[L]$, we define
\[
  \ell_i^V
  :=
  \log \pi_\theta(x^V_i\mid x^V_{\setminus i},y).
\]
The per-site likelihood advantage of $x^A$ over $x^B$ is
\[
  d_i := \ell_i^A-\ell_i^B,
\]
and the averaged score gap over the shared mutation set is
\[
  \Delta
  :=
  \frac{1}{m}\sum_{i\in\mathcal M} d_i .
\]

When a candidate is identical to the wild type, we use the convention
$s_{\varnothing}(x^{\mathrm{wt}},y;x^{\mathrm{wt}})=0$, corresponding to zero
log-likelihood change relative to the reference. The results below pertain to the
nonempty case $m\ge 1$.

\setcounter{proposition}{0}

\begin{proposition}[Mutation-local reduction]
\label{thm:mutation-local}
For two variants $x^A$ and $x^B$ with the same wild-type reference and the
same nonempty mutation set $\mathcal M$,
\[
  s_{\mathcal M}(x^A,y;x^{\mathrm{wt}})
  -
  s_{\mathcal M}(x^B,y;x^{\mathrm{wt}})
  =
  \Delta .
\]
Consequently, for the two-candidate ranking problem
$\mathcal C=\{x^A,x^B\}$, Eq.~\ref{eq:logica-ranking} gives
\[
  \Pr(x^A \succ x^B \mid y,\mathcal C)
  =
  \sigma(\Delta/\tau).
\]
Thus, any additive term shared by all candidates with the same context $y$ and
mutation set $\mathcal M$ cancels from the pairwise ranking, including the
wild-type anchor term in Eq.~\ref{eq:logica-mut-score}.
\end{proposition}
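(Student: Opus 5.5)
The argument is a direct computation from the definition of the mutation score in Eq.~\ref{eq:logica-mut-score}, followed by a substitution into the ranking template Eq.~\ref{eq:logica-ranking}.

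First, because $x^A$ and $x^B$ share the same nonempty mutation set $\mathcal M$ with $m=|\mathcal M|\ge 1$, both scores are averages over the same index set with the same normaliser $1/m$. Using the notation $\ell_i^V$ just introduced, we can write
\[
  s_{\mathcal M}(x^V,y;x^{\mathrm{wt}})=\frac{1}{m}\sum_{i\in\mathcal M}\bigl(\ell_i^V-\ell_i^{\mathrm{wt}}\bigr),\qquad V\in\{A,B\}.
\]
The wild-type term $\ell_i^{\mathrm{wt}}=\log\pi_\theta(x^{\mathrm{wt}}_i\mid x^{\mathrm{wt}}_{\setminus i},y)$ depends only on $x^{\mathrm{wt}}$, $y$ and $i$. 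It does not depend on the variant. We subtract the two expressions, combine the finite sums termwise over the common set $\mathcal M$, and the $\ell_i^{\mathrm{wt}}$ terms cancel. What remains is $\frac1m\sum_{i\in\mathcal M}(\ell_i^A-\ell_i^B)=\frac1m\sum_i d_i=\Delta$.

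The only step that needs care is checking that this cancellation is legitimate. It rests entirely on the hypothesis that the two mutation sets coincide. If they differed, the index sets and normalisers would differ and the anchor terms would not match. We should also note that all logarithms are finite, which we assume implicitly since $\pi_\theta$ is a softmax and therefore strictly positive. The same observation gives the general remark at the end of the statement. Any quantity $c(y,\mathcal M)$ added identically to both candidates' scores drops out of $s(x^A)-s(x^B)$.

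Second, for the ranking claim we take the two-candidate case $\mathcal C=\{x^A,x^B\}$ of Eq.~\ref{eq:logica-ranking} with $s=s_{\mathcal M}$. Dividing the numerator and denominator by $\exp(s(x^A)/\tau)$ gives
\[
  \Pr(x^A\succ x^B\mid y,\mathcal C)=\frac{1}{1+\exp\bigl(-(s(x^A)-s(x^B))/\tau\bigr)}=\sigma\bigl((s(x^A)-s(x^B))/\tau\bigr).
\]
Substituting the difference from the first step yields $\sigma(\Delta/\tau)$. This is also the step that shows the softmax is invariant to shifting all scores by a common constant, which justifies the statement that shared additive terms, including the wild-type anchor, do not affect the pairwise ranking. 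There is no real obstacle here. The main point is simply to state explicitly where the equal-mutation-set hypothesis is used.
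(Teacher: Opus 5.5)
Your proof is correct and follows the paper's argument: expand both scores over the common set $\mathcal M$ with the same normaliser $1/m$, cancel the wild-type anchor terms termwise to obtain $\Delta$, and substitute this gap into the two-candidate form of Eq.~\ref{eq:logica-ranking} to get $\sigma(\Delta/\tau)$. Your explicit derivation of the sigmoid from the softmax and the finiteness remark only spell out steps the paper leaves implicit.
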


\begin{proof}
By Eq.~\ref{eq:logica-mut-score},
\begin{align*}
  &s_{\mathcal M}(x^A,y;x^{\mathrm{wt}})
  -
  s_{\mathcal M}(x^B,y;x^{\mathrm{wt}}) \\
  &=
  \left[
    \frac{1}{m}\sum_{i\in\mathcal M}
    \log
    \frac{
      \pi_\theta(x^A_i\mid x^A_{\setminus i},y)
    }{
      \pi_\theta(x^{\mathrm{wt}}_i\mid x^{\mathrm{wt}}_{\setminus i},y)
    }
  \right]
  -
  \left[
    \frac{1}{m}\sum_{i\in\mathcal M}
    \log
    \frac{
      \pi_\theta(x^B_i\mid x^B_{\setminus i},y)
    }{
      \pi_\theta(x^{\mathrm{wt}}_i\mid x^{\mathrm{wt}}_{\setminus i},y)
    }
  \right] \\
  &=
  \frac{1}{m}\sum_{i\in\mathcal M}
  \left[
    \log \pi_\theta(x^A_i\mid x^A_{\setminus i},y)
    -
    \log \pi_\theta(x^B_i\mid x^B_{\setminus i},y)
  \right] \\
  &=
  \frac{1}{m}\sum_{i\in\mathcal M}
  (\ell_i^A-\ell_i^B)
  =
  \Delta .
\end{align*}
The wild-type anchor term cancels because both candidates are evaluated against
the same reference sequence over the same mutation set. Substituting this score
difference into the two-candidate form of Eq.~\ref{eq:logica-ranking} gives
the stated Bradley--Terry probability.
\end{proof}

The cancellation above is exact for matched mutation sets. If two variants
perturb different positions, the wild-type anchor terms are evaluated over
different sets and need not cancel. For matched mutation sets, however, the
pairwise objective reduces exactly to a comparison of the contextualized
log-likelihoods of the competing mutant tokens at the perturbed sites.

\paragraph{Why latent scores do not admit the same reduction.}
This reduction is specific to the mutation-local likelihood score. Pooled
latent similarities used in CLIP-style contrastive learning do not generally
have the same additive structure or shared wild-type anchor. For example, if
\[
  s_z(x,y)
  =
  \langle f_\theta(x),\, g_\phi(y)\rangle,
\]
then
\[
  s_z(x^A,y)-s_z(x^B,y)
  =
  \langle f_\theta(x^A),\, g_\phi(y)\rangle
  -
  \langle f_\theta(x^B),\, g_\phi(y)\rangle.
\]
This score difference is generally a nonlinear function of the full-sequence
embeddings $f_\theta(x^A)$ and $f_\theta(x^B)$. Even when the two variants
differ only on $\mathcal M$, their pooled representations may change globally,
and there is no shared wild-type anchor term to remove algebraically. Pooled
contrastive objectives can therefore learn useful pair-level compatibility
scores, but they do not provide the same exact reduction from pairwise ranking
to mutant-token likelihoods.

\paragraph{Takeaway.}
For matched variant comparisons, \textsc{LogiCA} compares candidates through the
native token probabilities of the language-model head. In contrast, pooled
latent methods compare global sequence representations, so their pairwise
score differences do not identify an exact algebraic path back to the specific
mutant-token likelihoods.

\subsection{Score-Level Locality of Preference Gradients}

At the level of score variables, the exact cancellation above implies that the
pairwise preference loss is directly supported on the mutant-token likelihoods
at the perturbed sites. This
is formalized by the following score-level derivative calculation.

\begin{corollary}[Mutation-local gradients]
\label{cor:mutation-local-gradient}
Under the pairwise preference loss
\[
  \mathcal{L}_{\mathrm{BT}}
  =
  -\log\sigma(\Delta/\tau),
  \qquad
  \Delta
  =
  \frac{1}{m}
  \sum_{i\in\mathcal M}
  (\ell_i^A-\ell_i^B),
\]
the direct partial derivatives with respect to the site log-likelihood terms are
\[
  \frac{\partial \mathcal{L}_{\mathrm{BT}}}{\partial \ell_i^A}
  =
  -\frac{1}{m\tau}\sigma(-\Delta/\tau),
  \qquad
  \frac{\partial \mathcal{L}_{\mathrm{BT}}}{\partial \ell_i^B}
  =
  \frac{1}{m\tau}\sigma(-\Delta/\tau),
  \qquad i\in\mathcal M,
\]
and these direct partial derivatives are zero for $i\notin\mathcal M$.
\end{corollary}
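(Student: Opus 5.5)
We treat $\mathcal L_{\mathrm{BT}}$ as a function of the score variables $\{\ell_i^V\}$, with the "direct" partial derivatives meaning we hold all other score variables fixed. We do not differentiate through $\theta$ or through the conditioning sequences $x^V_{\setminus i}$. Under this reading the computation is a one-variable chain rule through $\Delta$, and it inherits its structure from Proposition~\ref{thm:mutation-local}.

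\emph{Step 1: the scalar derivative.} We first record the derivative of the loss with respect to $\Delta$. Using $\frac{d}{du}\log\sigma(u)=1-\sigma(u)=\sigma(-u)$,
\[
  \frac{\partial\mathcal L_{\mathrm{BT}}}{\partial\Delta}
  =
  -\frac{1}{\tau}\,\sigma(-\Delta/\tau).
\]

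\emph{Step 2: the inner derivatives.} From the definition $\Delta=\frac1m\sum_{i\in\mathcal M}(\ell_i^A-\ell_i^B)$, which is linear, we get
\[
  \frac{\partial\Delta}{\partial\ell_i^A}=\frac1m, \qquad
  \frac{\partial\Delta}{\partial\ell_i^B}=-\frac1m \qquad \text{for } i\in\mathcal M.
\]
Multiplying by Step 1 gives the two displayed formulas. Their signs say that gradient descent increases $\ell_i^A$ and decreases $\ell_i^B$, with the common magnitude $\sigma(-\Delta/\tau)/(m\tau)$.

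\emph{Step 3: locality.} For $i\notin\mathcal M$, the variable $\ell_i^V$ does not appear in $\Delta$ at all, so its direct partial derivative is zero. The same holds for the wild-type terms $\ell_i^{\mathrm{wt}}$. This is exactly the content of Proposition~\ref{thm:mutation-local}: the wild-type anchors cancel identically in the pairwise gap, so $\mathcal L_{\mathrm{BT}}$, which equals $-\log\sigma$ of that gap, has no dependence on them.

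\emph{Main obstacle: interpretation, not computation.} The genuine parameter gradient $\nabla_\theta\mathcal L_{\mathrm{BT}}$ is not mutation-local. Each $\ell_i^V$ depends on the whole conditioning sequence and on $y$, so unchanged residues still influence the update through the sensitivities $\partial\ell_i^V/\partial\theta$. I would state this caveat explicitly. The corollary concerns only the direct dependence of the loss on the score variables, which is what makes the zero-derivative claim for $i\notin\mathcal M$ rigorous.
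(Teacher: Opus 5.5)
Your proposal is correct and follows the paper's proof essentially step for step: $\frac{d}{dt}\log\sigma(t)=\sigma(-t)$ gives $\partial\mathcal{L}_{\mathrm{BT}}/\partial\Delta=-\frac{1}{\tau}\sigma(-\Delta/\tau)$, linearity gives $\partial\Delta/\partial\ell_i^A=\frac{1}{m}$ and $\partial\Delta/\partial\ell_i^B=-\frac{1}{m}$, the chain rule combines them, and the terms with $i\notin\mathcal M$ are absent from $\Delta$, so their direct partials vanish. Your caveat that the full parameter gradient $\nabla_\theta\mathcal{L}_{\mathrm{BT}}$ is not mutation-local also matches the remark the paper places immediately after the corollary.
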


\begin{proof}
Since
\[
  \frac{d}{dt}\log\sigma(t)=\sigma(-t),
\]
we have
\[
  \frac{\partial \mathcal{L}_{\mathrm{BT}}}{\partial \Delta}
  =
  -\frac{1}{\tau}\sigma(-\Delta/\tau).
\]
For $i\in\mathcal M$,
\[
  \frac{\partial \Delta}{\partial \ell_i^A}
  =
  \frac{1}{m},
  \qquad
  \frac{\partial \Delta}{\partial \ell_i^B}
  =
  -\frac{1}{m}.
\]
The stated derivatives follow by the chain rule. If $i\notin\mathcal M$, then
$\ell_i^A$ and $\ell_i^B$ do not appear as direct terms in $\Delta$, so the
corresponding direct partial derivatives are zero.
\end{proof}

\paragraph{Remark.}
Corollary~\ref{cor:mutation-local-gradient} describes score-level partial
derivatives, not full parameter gradients. Because neural network parameters are
shared across positions, and because masked-token likelihoods at mutant sites
condition on the surrounding sequence, parameter gradients can still depend on
unmutated residues through the model architecture. The key point is that the
preference loss is directly supported on the mutated-site likelihood terms.

\paragraph{Takeaway.}
The loss gives direct positive pressure to increase the likelihood of the
preferred mutant tokens and direct negative pressure to decrease the likelihood
of the dispreferred mutant tokens at the same perturbed positions. Unchanged
positions still shape the conditional distribution through the sequence
context, but they are not explicit score terms in the mutation-local objective.

\subsection{Dependence-Aware Concentration of Noisy Site-Level Evidence}

The preceding results are algebraic and hold for any fixed model scores. We now
ask how the averaged gap $\Delta$ behaves when the per-site likelihood
advantages are viewed as noisy measurements of an underlying preference signal.
This captures a statistical setting in which different mutated sites provide
imperfect but positively biased evidence for the same preferred variant.

Unlike the deterministic cancellation argument, this concentration result
requires a probabilistic model for the site-level score gaps. Because masked
language models condition each masked-token prediction on the surrounding
sequence and share parameters across positions, the site-level advantage terms
for multi-site variants need not be statistically independent. We therefore use
a dependence-aware sub-Gaussian model: the noise vector may have cross-site
dependence, summarized by a positive semidefinite dependence proxy.

\begin{assumption}[Sub-Gaussian advantages]
\label{ass:site-advantages}
View $\{d_i\}_{i\in\mathcal M}$ as random site-level advantages induced by a
population of mutation comparisons under a fixed context. Write
\[
  d_i=\mu_i+\epsilon_i,
\]
where $\mu_i$ is the mean advantage at site $i$. Let
\[
  \epsilon_{\mathcal M}
  =
  (\epsilon_i)_{i\in\mathcal M}
  \in \mathbb R^m
\]
denote the vector of centered site-level noise terms. We assume that
$\epsilon_{\mathcal M}$ is jointly sub-Gaussian in the sense that, for every
$a\in\mathbb R^m$,
\[
  \mathbb E\!\left[
    \exp(a^\top \epsilon_{\mathcal M})
  \right]
  \le
  \exp\!\left(
    \frac{1}{2}a^\top \Sigma_{\mathcal M}a
  \right),
\]
for some positive semidefinite matrix
$\Sigma_{\mathcal M}\in\mathbb R^{m\times m}$.

Let
\[
  \bar{\mu}_{\mathcal M}
  =
  \frac{1}{m}\sum_{i\in\mathcal M}\mu_i,
  \qquad
  \nu_{\mathcal M}^2
  =
  \frac{1}{m^2}
  \mathbf 1^\top \Sigma_{\mathcal M}\mathbf 1,
\]
where $\mathbf 1\in\mathbb R^m$ is the all-ones vector.
\end{assumption}

\paragraph{Remark on the modeling assumption.}
Assumption~\ref{ass:site-advantages} is intended as a simple concentration
model for the averaged site-level score gap, not as a complete generative model
of masked language model predictions. The joint sub-Gaussian condition allows
the site-level advantage terms to be statistically dependent. This is important
for multi-site variants, where masked language model predictions are coupled
through the shared sequence context and shared model parameters. The dependence
is summarized through
$\mathbf 1^\top \Sigma_{\mathcal M}\mathbf 1$, the sub-Gaussian proxy for the
averaged noise direction. The deterministic cancellation result in
Theorem~\ref{thm:mutation-local} and the score-level gradient statement in
Corollary~\ref{cor:mutation-local-gradient} do not rely on this probabilistic
assumption.

\begin{corollary}[Misranking bound]
\label{cor:concentration}
Under Assumption~\ref{ass:site-advantages},
$\mathbb E[\Delta]=\bar{\mu}_{\mathcal M}$ and
$\Delta-\bar{\mu}_{\mathcal M}$ is
$\nu_{\mathcal M}^2$-sub-Gaussian. If
$\bar{\mu}_{\mathcal M}>0$, then
\[
  \Pr\!\left[
    s_{\mathcal M}(x^A,y;x^{\mathrm{wt}})
    \le
    s_{\mathcal M}(x^B,y;x^{\mathrm{wt}})
  \right]
  \le
  \exp\!\left(
    -\frac{\bar{\mu}_{\mathcal M}^2}
          {2\nu_{\mathcal M}^2}
  \right).
\]
\end{corollary}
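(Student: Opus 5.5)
The plan is to reduce the misranking event to a one-sided tail event for the averaged noise, and then bound that tail with a Chernoff argument driven by the joint sub-Gaussian moment generating function bound in Assumption~\ref{ass:site-advantages}.

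\emph{Rewrite the event.} By Proposition~\ref{thm:mutation-local}, the score difference $s_{\mathcal M}(x^A,y;x^{\mathrm{wt}})-s_{\mathcal M}(x^B,y;x^{\mathrm{wt}})$ equals $\Delta$ exactly, for every realization. The misranking event is therefore $\{\Delta\le 0\}$. Write
\[
\Delta=\bar\mu_{\mathcal M}+\tfrac1m\mathbf 1^\top\epsilon_{\mathcal M}.
\]
Then $\{\Delta\le 0\}=\{\tfrac1m\mathbf 1^\top\epsilon_{\mathcal M}\le-\bar\mu_{\mathcal M}\}$.

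\emph{Sub-Gaussianity of the averaged noise.} Apply the assumption with $a=\tfrac{\lambda}{m}\mathbf 1$ for arbitrary $\lambda\in\mathbb R$. This gives
\[
\mathbb E\bigl[\exp\bigl(\lambda(\Delta-\bar\mu_{\mathcal M})\bigr)\bigr]\le\exp\!\left(\tfrac{\lambda^2}{2}\nu_{\mathcal M}^2\right),
\]
so $\Delta-\bar\mu_{\mathcal M}$ is $\nu_{\mathcal M}^2$-sub-Gaussian.

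\emph{Mean of $\Delta$.} To get $\mathbb E[\Delta]=\bar\mu_{\mathcal M}$, I would show each $\epsilon_i$ has mean zero. The phrase ``centered'' already says this. Alternatively, it follows from the MGF bound: expanding both sides for small $\lambda$ forces the first-order term $\lambda\,\mathbb E[a^\top\epsilon_{\mathcal M}]$ to vanish, since the right-hand side has no linear term. Linearity then gives $\mathbb E[\Delta]=\bar\mu_{\mathcal M}$.

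\emph{Tail bound.} For $\lambda>0$, Markov's inequality on $\exp\bigl(-\lambda(\Delta-\bar\mu_{\mathcal M})\bigr)$ gives
\[
\Pr[\Delta\le0]\le\exp\!\left(-\lambda\bar\mu_{\mathcal M}+\tfrac{\lambda^2\nu_{\mathcal M}^2}{2}\right).
\]
This uses the MGF bound at $-\lambda$, which is valid because the assumption holds for every $a$, including negative multiples of $\mathbf 1$. Minimizing over $\lambda$ gives the choice $\lambda=\bar\mu_{\mathcal M}/\nu_{\mathcal M}^2$. Here $\bar\mu_{\mathcal M}>0$ is what makes this $\lambda$ admissible, and substituting it yields the stated bound $\exp\bigl(-\bar\mu_{\mathcal M}^2/(2\nu_{\mathcal M}^2)\bigr)$.

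\emph{Main obstacle: the degenerate case $\nu_{\mathcal M}=0$.} The only real subtlety is when $\mathbf 1^\top\Sigma_{\mathcal M}\mathbf 1=0$. Then the MGF bound gives $\mathbb E\bigl[e^{\lambda(\Delta-\bar\mu_{\mathcal M})}\bigr]\le1$ for all $\lambda$. Letting $\lambda\to\infty$ in the Chernoff bound sends the probability to $0$. Equivalently, $\Delta=\bar\mu_{\mathcal M}>0$ almost surely. This matches the right-hand side under the convention $\exp(-\infty)=0$. I would handle this case in a short separate sentence after the main argument.
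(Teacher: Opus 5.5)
Your proposal is correct and follows the paper's proof essentially step for step. It uses the same decomposition $\Delta=\bar\mu_{\mathcal M}+\tfrac1m\mathbf 1^\top\epsilon_{\mathcal M}$, the same choice $a=(\lambda/m)\mathbf 1$ to obtain the $\nu_{\mathcal M}^2$ proxy, a one-sided sub-Gaussian tail bound, and Proposition~\ref{thm:mutation-local} to identify $\Delta$ with the score difference; you simply spell out the Chernoff optimization that the paper cites as standard, and your separate handling of the degenerate case $\nu_{\mathcal M}=0$, which the paper leaves implicit, is a correct addition.
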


\begin{proof}
By definition,
\[
  \Delta
  =
  \frac{1}{m}\sum_{i\in\mathcal M}d_i
  =
  \frac{1}{m}\sum_{i\in\mathcal M}\mu_i
  +
  \frac{1}{m}\sum_{i\in\mathcal M}\epsilon_i
  =
  \bar{\mu}_{\mathcal M}
  +
  \frac{1}{m}\mathbf 1^\top \epsilon_{\mathcal M}.
\]
Thus $\mathbb E[\Delta]=\bar{\mu}_{\mathcal M}$. Moreover, for any
$\lambda\in\mathbb R$, applying Assumption~\ref{ass:site-advantages} with
$a=(\lambda/m)\mathbf 1$ gives
\begin{align*}
  \mathbb E\!\left[
    \exp\!\left(\lambda(\Delta-\bar{\mu}_{\mathcal M})\right)
  \right]
  &=
  \mathbb E\!\left[
    \exp\!\left(
      \frac{\lambda}{m}
      \mathbf 1^\top \epsilon_{\mathcal M}
    \right)
  \right] \\
  &\le
  \exp\!\left(
    \frac{1}{2}
    \frac{\lambda^2}{m^2}
    \mathbf 1^\top \Sigma_{\mathcal M}\mathbf 1
  \right) \\
  &=
  \exp\!\left(
    \frac{\lambda^2\nu_{\mathcal M}^2}{2}
  \right).
\end{align*}
Therefore $\Delta-\bar{\mu}_{\mathcal M}$ is
$\nu_{\mathcal M}^2$-sub-Gaussian.

The misranking event is
\[
  \{\Delta\le0\}
  =
  \{\Delta-\bar{\mu}_{\mathcal M}\le-\bar{\mu}_{\mathcal M}\}.
\]
Applying the standard one-sided sub-Gaussian tail bound yields
\[
  \Pr[\Delta\le0]
  \le
  \exp\!\left(
    -\frac{\bar{\mu}_{\mathcal M}^2}
          {2\nu_{\mathcal M}^2}
  \right).
\]
Finally, Theorem~\ref{thm:mutation-local} identifies $\Delta$ with the
difference between the two mutation-local scores, so this is the stated
misranking bound.
\end{proof}

\paragraph{Independent-site special case.}
If the site-level noise terms are independent and each $\epsilon_i$ is
$\sigma_i^2$-sub-Gaussian, then
$\Sigma_{\mathcal M}$ may be taken to be diagonal with entries
$\sigma_i^2$. In that case,
\[
  \nu_{\mathcal M}^2
  =
  \frac{1}{m^2}\sum_{i\in\mathcal M}\sigma_i^2,
\]
which recovers the independent-site bound as a special case. If additionally
$\sigma_i^2\le\sigma^2$ for all $i\in\mathcal M$, then
\[
  \nu_{\mathcal M}^2
  \le
  \frac{\sigma^2}{m},
\]
and hence
\[
  \Pr\!\left[
    s_{\mathcal M}(x^A,y;x^{\mathrm{wt}})
    \le
    s_{\mathcal M}(x^B,y;x^{\mathrm{wt}})
  \right]
  \le
  \exp\!\left(
    -\frac{m\bar{\mu}_{\mathcal M}^2}{2\sigma^2}
  \right).
\]

\paragraph{Bounded-correlation interpretation.}
The dependence-aware bound also clarifies how cross-site coupling changes the
benefit of averaging. Suppose, for example, that
$\Sigma_{ii}\le\sigma^2$ and
$\Sigma_{ij}\le\rho\sigma^2$ for $i\ne j$. Then
\[
  \nu_{\mathcal M}^2
  =
  \frac{1}{m^2}
  \mathbf 1^\top \Sigma_{\mathcal M}\mathbf 1
  \le
  \frac{\sigma^2}{m}\{1+(m-1)\rho\}.
\]
Consequently,
\[
  \Pr[\Delta\le0]
  \le
  \exp\!\left(
    -\frac{m\bar{\mu}_{\mathcal M}^2}
          {2\sigma^2\{1+(m-1)\rho\}}
  \right).
\]
When $\rho=0$, this reduces to the independent-site rate. Larger positive cross-site dependence weakens the concentration benefit of averaging, whereas small cross-site dependence preserves much of the stabilizing effect of multi-site evidence.

\paragraph{Takeaway.}
The concentration result does not change the deterministic cancellation
argument. It says that if each mutated site provides a noisy estimate of a
positive preference signal, then averaging the mutation-local likelihood
advantages reduces the chance that noise reverses the pairwise ranking. The
general bound allows these site-level noise terms to be statistically dependent,
which is more appropriate for masked language models whose predictions are
coupled across positions through sequence context and shared parameters.

\section{Datasets and Preprocessing}
\label{app:data-collection}

\subsection{Protein--ligand binding}
\label{app:bindingdb}

BindingDB~\citep{liu2024bindingdb} provides both the protein--ligand
pretraining corpus and one of the held-out DTI fine-tuning split. We describe them
together to make explicit how the large-scale pretraining data relate to the
downstream BindingDB benchmark.

\paragraph{Pretraining corpus.} We convert ligand SMILES strings to SELFIES~\citep{yuksel2023selformer}, remove entries with missing protein sequences or invalid ligand conversions, and truncate inputs to 512 protein tokens and 128 ligand tokens. We retain four measurement types (K\textsubscript{d}, K\textsubscript{i}, IC\textsubscript{50}, and EC\textsubscript{50}) convert all affinity values to nanomolar units, and define positives using the first-quartile threshold within each measurement type. To prevent leakage, we remove from the pretraining corpus any protein--ligand pair that appears in the downstream validation or test splits. The resulting processed table contains \num[group-separator={,}]{21461880} protein--ligand rows spanning \num[group-separator={,}]{1203672} ligands and \num[group-separator={,}]{8914} proteins. Thresholding yields \num[group-separator={,}]{7798830} positives and \num[group-separator={,}]{13663050} negatives. The measurement-type composition is 66.73\% IC\textsubscript{50}, 20.44\% K\textsubscript{i}, 9.32\% EC\textsubscript{50}, and 3.51\% K\textsubscript{d}.

\paragraph{Single-residue mutant sites in the pretraining corpus.}
The mutation-local score $s_{\mathcal M}$ in Eq.~\ref{eq:logica-mut-score}
relies on context-conditioned changes in the per-residue distribution at
mutated sites. To check whether this regime is represented in pretraining, we
searched for near-neighbor protein pairs among the
\num[group-separator={,}]{8914} unique pretraining proteins using a
length-bucketed, pigeonhole-hashed Hamming search over same-length sequences.
This procedure excludes indels and identifies naturally occurring substitution
pairs up to Hamming distance $20$.
The resulting graph contains \num[group-separator={,}]{4942} mutational edges
over \num[group-separator={,}]{1214} proteins, including
\num[group-separator={,}]{713} single-substitution pairs. These distance-$1$
pairs directly match the single-residue setting targeted by
$s_{\mathcal M}$. After joining them with BindingDB ligand annotations, they
yield \num[group-separator={,}]{50274} ligand-anchored mutation rows during
pretraining, providing naturally observed single-amino-acid contrasts under
multiple drug contexts. The full distance distribution is shown in
Figure~\ref{fig:mutation-overview}.

\begin{figure}[!htbp]
    \centering
    \includegraphics[width=.5\linewidth]{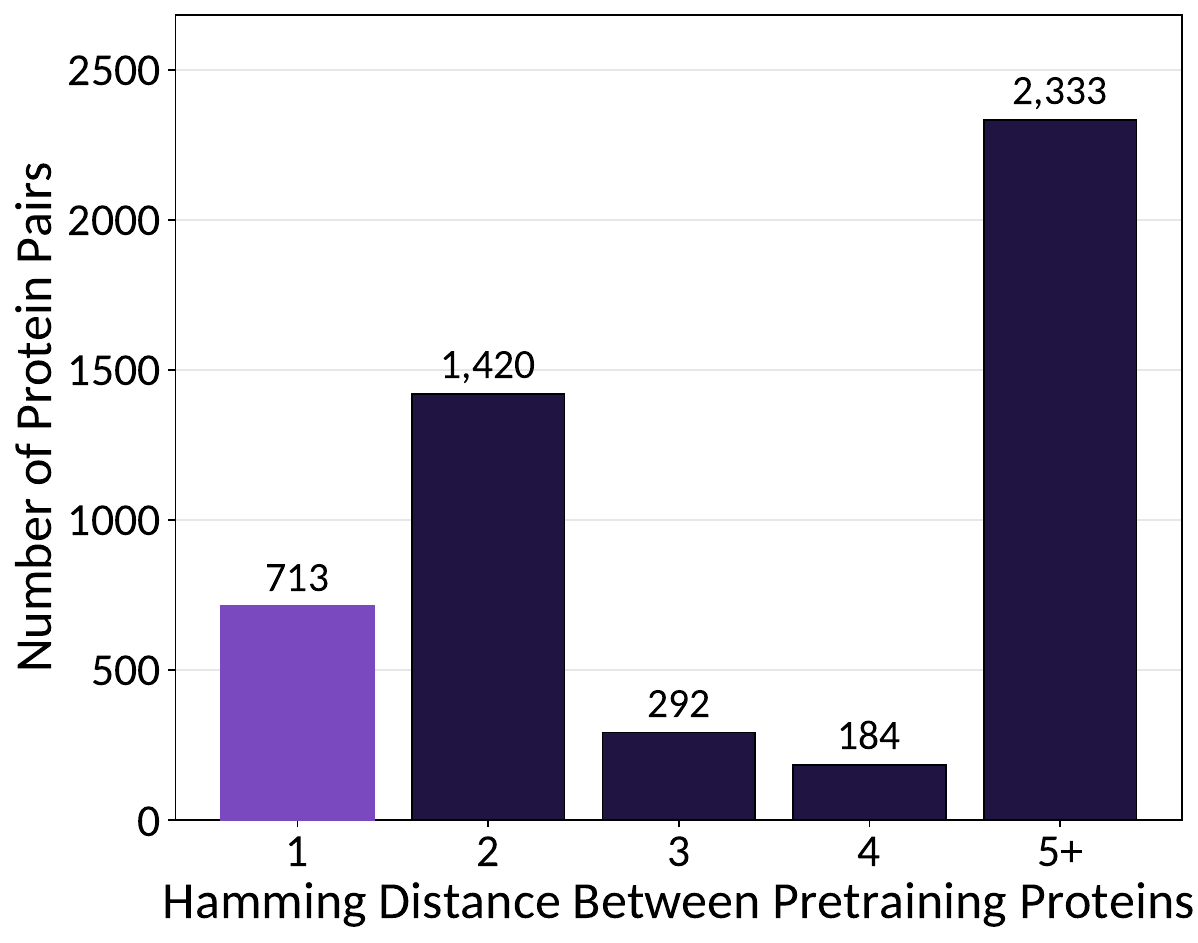}
    \caption{Mutation-level structure in the BindingDB pretraining corpus.
    Near-neighbor protein pairs are enriched for low Hamming distances,
    including a visible set of single-residue substitutions. These distance-$1$
    pairs define the mutation-local regime targeted by $s_{\mathcal M}$ and yield
    ligand-anchored mutation contrasts after joining to BindingDB annotations.}
    \label{fig:mutation-overview}
\end{figure}

\paragraph{Protein--ligand benchmarks: BindingDB(Test), DAVIS, and BioSNAP.}
For downstream protein--ligand evaluation, we follow the fixed
train/validation/test splits introduced by MolTrans for
DAVIS~\citep{davis2011comprehensive}, BindingDB~\citep{liu2024bindingdb}, and
BioSNAP~\citep{zitnik2018biosnap}. DAVIS provides a kinase--inhibitor benchmark,
with pairs labeled positive when the reported K\textsubscript{d} is below
30\,nM. BindingDB is used as the larger held-out transfer benchmark.
After filtering for complete protein--ligand pairs, valid SMILES strings, and
available pockets, the reproducible BindingDB split contains
\num[group-separator={,}]{12662} training pairs,
\num[group-separator={,}]{6637} validation pairs, and
\num[group-separator={,}]{13279} test pairs. Following the standard
MolTrans/ConPLex protocol, the training set is balanced, while validation and
test retain the natural positive rate of approximately $14\%$.

\subsection{Oncogene--drug deep mutational scanning}
\label{app:drug-resistance}

The variant-ranking benchmark combines two experimental resources for measuring
how oncogene mutations alter drug response: a broad multi-oncogene
screen~\citep{dunham2024exploring} and an EGFR-focused prime-editing
panel~\citep{kim2024saturation}. These screens assay
single-amino-acid substitutions in cancer-associated genes and measure their
effects under targeted therapies, producing variant--drug response scores that
can be used to evaluate whether a model ranks resistance-associated mutations
above sensitive or neutral mutations. We rank variants using the contextualized
variant score $s_{\mathcal M}(x,y;x^{\mathrm{wt}})$ from
Eq.~\ref{eq:logica-mut-score}, where $y$ is the SELFIES-encoded drug.

The benchmark covers 11 oncogenes (\emph{AKT1, BCL2, BRAF, EGFR, KRAS, MAP2K1,
MAP2K2, MYC, PARP1, PARP2, PIK3CA}) and 10 therapies (Dabrafenib, Trametinib,
Pictilisib, Adagrasib, Sotorasib, Osimertinib, Gefitinib, Olaparib, Niraparib,
and Afatinib). These gene--drug combinations span several clinically relevant
targeted-therapy settings, including MAPK-pathway inhibition, EGFR inhibition,
PI3K/AKT-pathway inhibition, KRAS inhibition, and PARP inhibition.

\begin{table}[h]
\centering
\caption{Oncogene--drug benchmark composition by gene. \emph{Variants} counts
unique single-amino-acid substitutions for each gene; \emph{drugs with data}
counts therapies, out of 10 total, with at least one scored variant for that
gene; and \emph{measurements} sums the non-missing variant--drug scores across
those therapies. EGFR has the largest number of measurements because it is the
only gene covered by the EGFR-focused panel.}
\label{tab:oncogene-per-gene}
\small
\begin{tabular}{@{}lrrr@{}}
\toprule
\textbf{Gene} & \textbf{Variants} & \textbf{Drugs with data} & \textbf{Measurements} \\
\midrule
AKT1   &  237 &  9 &  2{,}013 \\
BCL2   &  136 &  9 &  1{,}152 \\
BRAF   &  250 &  9 &  1{,}980 \\
EGFR   & 2{,}387 & 10 &  7{,}874 \\
KRAS   &   57 &  9 &    447 \\
MAP2K1 &  152 &  9 &  1{,}264 \\
MAP2K2 &  203 &  9 &  1{,}733 \\
MYC    &  164 &  9 &  1{,}378 \\
PARP1  &  410 &  9 &  3{,}380 \\
PARP2  &  169 &  9 &  1{,}341 \\
PIK3CA &  280 &  9 &  2{,}208 \\
\midrule
\textbf{Total} & 4{,}445 & --- & 24{,}770 \\
\bottomrule
\end{tabular}
\end{table}

After removing synonymous changes and variants beyond position 1023, which lies
outside the variant-fine-tune protein context window of 1024 tokens (raised
from the 512 used during pretraining and DTI fine-tuning to cover the longer
oncogenes; see Appendix~\ref{app:implementation}), the benchmark contains 4{,}445 distinct
single-amino-acid substitutions across the 11 genes. Per-gene variant counts
range from 57 to 2{,}387, with a median of 203 and a mean of 404
(Table~\ref{tab:oncogene-per-gene}). Drug coverage is not uniform across genes:
nine therapies from the broad multi-oncogene panel are screened against every
gene, whereas Afatinib is restricted to EGFR. This produces 100 gene--drug
screening pairs in total and 24{,}770 scored variant--drug entries overall.
Each gene--drug pair contains an average of 247.7 measured variant--drug scores
(median 169, minimum 24, maximum 2{,}387).

For each gene--drug pair, evaluation uses all measured non-synonymous single-amino-acid substitutions within the protein context window. We compute Spearman $\rho$ and binary resistance AUC using variants with non-missing experimental scores, and Table~\ref{tab:variant-drug} reports gene-wise means across drugs.

\subsection{TCR--peptide pretraining, fine-tuning, and evaluation data}
\label{app:tcrbinding}

\paragraph{Pretraining corpus.}

We construct the TCR--peptide pretraining corpus by combining CDR--peptide
binders from IEDB~\citep{vita2019iedb} with paired
CDR3$\alpha$--CDR3$\beta$--peptide annotations curated in prior studies
\citep{zhang2024epitope,kwee2023stapler}. Each example is standardized as
\((\mathbf{t},\mathbf{p})\), where
\(\mathbf{t}=\texttt{CDR3}\beta \mid \texttt{CDR3}\alpha\) denotes the paired
TCR sequence when both chains are available, and \(\mathbf{p}\) denotes the
target peptide sequence. For entries where CDR3$\alpha$ is unavailable, we use
the CDR3$\beta$ sequence alone.

After concatenation, de-duplication, and removal of 14 TCR--peptide pairs that
appear in downstream evaluation test sets, the final pretraining corpus contains
260{,}163 experimentally supported TCR--peptide pairs. Of these, 118{,}062 pairs
include paired CDR3$\alpha$/CDR3$\beta$ annotations, while 142{,}101 contain
CDR3$\beta$ only. Overall, the corpus covers 166{,}179 unique TCRs and
1{,}593 unique peptides (Figure~\ref{fig:tcr-peptide-count}). We split the
pretraining corpus 90\%/10\% into training and validation sets using
source-stratified sampling, and use the validation set to select the best
checkpoint.

\begin{figure}
    \centering
    \includegraphics[width=0.6\linewidth]{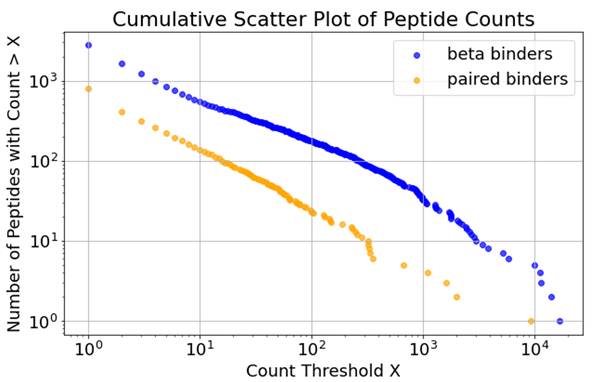}
    \caption{Cumulative number of TCR--peptide pairs in the \textsc{LogiCA} pretraining
    corpus with at least a given number of binding assays in the curated
    dataset. Pairs with both CDR3 chains available are substantially scarcer
    than pairs with only CDR3$\beta$ annotated.}
    \label{fig:tcr-peptide-count}
\end{figure}

\paragraph{Pretraining negatives.}

Negative examples are generated online during pretraining using mutation-based
corruption. For each positive pair \((\mathbf{t},\mathbf{p})\), we hold one
sequence fixed as the anchor and generate \(K\) negatives by randomly mutating
one amino-acid position in the non-anchor sequence. This provides local
contrastive supervision: the model is trained to assign higher contextualized
token likelihoods to the experimentally observed partner than to nearby
mutation-based alternatives under the same anchor. Although a random mutation is not guaranteed to abolish binding, most local substitutions reduce compatibility ($92\%$, see Fig.~\ref{fig:batcave-tolerance}), effectively forcing the model to learn locally sensitive hidden states.

We train three TCR--peptide \textsc{LogiCA} variants that differ in anchoring direction
and number of negatives:
\begin{enumerate}
    \item \textbf{\textsc{LogiCA}-TCR} fixes the TCR \(\mathbf{t}\) and generates
    \(K=1\) mutated peptide negative.
    \item \textbf{\textsc{LogiCA}-Pep} fixes the peptide \(\mathbf{p}\) and generates
    \(K=5\) mutated TCR negatives.
    \item \textbf{\textsc{LogiCA}-Dual} alternates between peptide-anchored batches
    with \(K=5\) and TCR-anchored batches with \(K=1\), using a 5:1 ratio.
\end{enumerate}

This objective uses the same token log-likelihood primitive as conditional MLM,
but places it inside a contrastive ranking loss over local alternatives. Thus,
pretraining encourages the native token heads to encode partner-specific
compatibility rather than only reconstructing observed tokens in isolation.

\paragraph{Supervised fine-tuning.}

After mutation-based pretraining, we further fine-tune on paired
CDR3$\alpha$--CDR3$\beta$ TCR--peptide examples only. For this supervised stage,
positive binding pairs are drawn from the paired-TCR binding annotations, and
negatives are generated by random peptide shuffling rather than local mutation~\cite{gao2023panpep, dens2023pitfalls}.
Specifically, for each positive pair \((\mathbf{t},\mathbf{p})\), we sample one
negative peptide uniformly from the pool of unique peptides not observed to bind
the corresponding TCR, producing a balanced 1:1 positive-to-negative dataset.
Because the negative peptide is sampled from unrelated observed peptides, these
labels are not ambiguous like the single-mutation negatives used during pretraining. The
fine-tuning corpus is split 90\%/10\% into training and validation sets using
source-stratified sampling, and the validation set is used to select the best
fine-tuned checkpoint.

\paragraph{TCR--peptide variant-ranking benchmarks.}

We evaluate TCR--peptide zero-shot variant ranking on ePytope~\citep{drost2025benchmarking}, BATCAVE~\citep{banerjee2025comprehensive}, and ATLAS~\citep{borrman2017atlas} datasets. These
benchmarks test whether \textsc{LogiCA} can rank peptide or TCR variants under a fixed
binding context using mutation-local token likelihoods. We collected 65 TCR-pMHC DMS studies on the TCR anchored side and 8 on the peptide anchored side.

The ePytope benchmark~\citep{drost2025benchmarking} consists of deep mutational
scans for two human 9-mer peptides. The neopeptide VPSVWRSSL contains 804
TCR--peptide measurements across 6 TCRs and 134 peptide variants plus the wild
type. The CMV peptide NLVPMVATV contains 3{,}440 measurements across 20 TCRs
and 172 peptide variants plus the wild type. Each TCR--variant pair was
measured by NFAT reporter expression using flow cytometry, which we use for
correlation-based variant ranking. The original benchmark also provides binary
binding labels derived from peptide-specific NFAT thresholds of 66.09\% for
VPSVWRSSL and 40.0\% for NLVPMVATV; we report binary classification results
using these labels in Table~\ref{app:epytope-mutation-binary}.

For BATCAVE~\citep{banerjee2025comprehensive}, we restrict evaluation to
TCR--peptide pairs measured by NFAT luminescence. BATCAVE contains multiple
assay types, including TScan-II, CD137 expression, ELISA, ELISpot, TCR-MAP, TNF
secretion, and multimer depletion. We focus on NFAT luminescence because it is a
direct functional T-cell activation readout and is closest to the reporter
signal used in ePytope. The resulting BATCAVE benchmark contains 5{,}754
TCR--peptide measurements across 35 unique TCRs and 478 variant peptides,
spanning three index peptides: NLVPMVATV, TPQDLNTML, and VPSVWRSSL.
Across BATCAVE studies, only a small fraction of peptide mutations improve
activity over wild type in most complexes
(Figure~\ref{fig:batcave-tolerance}).

\begin{figure}[!htbp]
    \centering
    \includegraphics[width=0.55\linewidth]{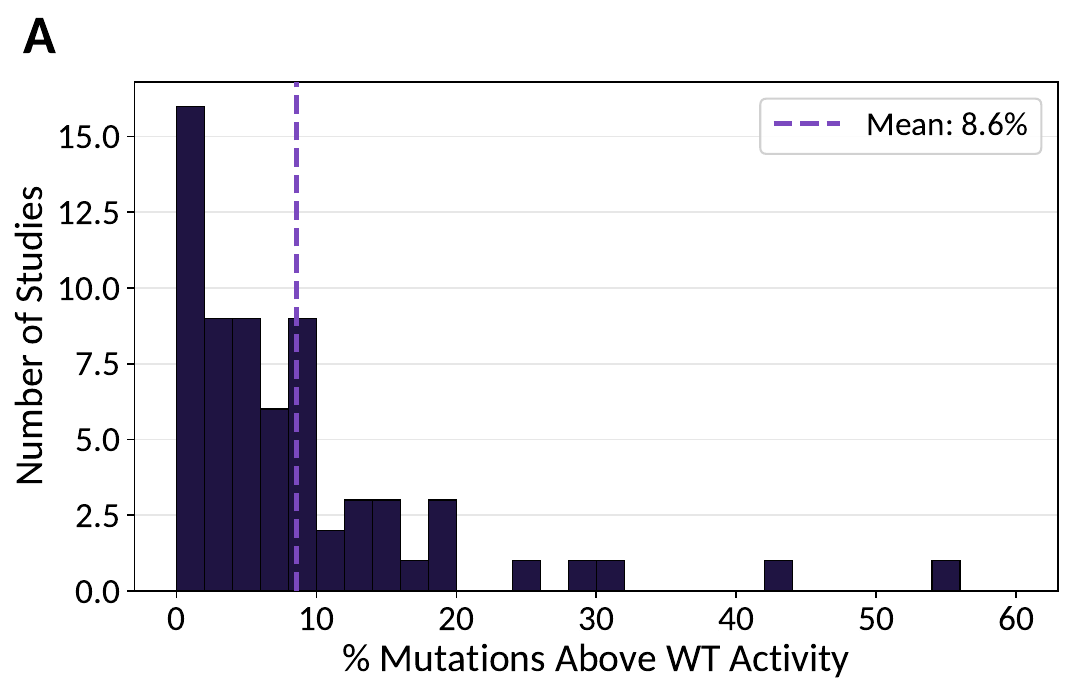}
    \caption{TCR--peptide mutation tolerance in the BATCAVE deep-mutational-scanning corpus~\citep{banerjee2025comprehensive}. Across studies, only a small fraction of peptide mutations improve activity over wild type in most complexes, consistent with strong sequence specificity and a smaller tail of more permissive recognition settings. The dashed line marks the across-study mean.}
    \label{fig:batcave-tolerance}
\end{figure}

The ATLAS benchmark~\citep{borrman2017atlas} contains both peptide-mutation and
TCR-mutation measurements. We derive two evaluation tables. In the
peptide-mutation table, paired TCRs are held fixed and peptides are varied. In
the TCR-mutation table, peptides are held fixed and CDR3 sequences are varied.
For both tasks, we restrict to MHC wild-type constructs, clean dissociation
constant (\(K_D\)) measurements, and remove mutation sets with fewer than three
measured TCR--peptide pairs. After preprocessing, the peptide-mutation test set
contains 7 index-peptide scaffolds, 11 distinct paired TCR sequences, 38
distinct peptide sequences, and 58 unique TCR--peptide pairs. The TCR-mutation
test set contains 7 wild-type TCR references, 8 index peptides, 75 distinct TCR
sequences, and 81 unique TCR--peptide pairs.

\paragraph{Structural contact maps.}

We use TCR3d~\citep{gowthaman2019tcr3d} to evaluate whether
\textsc{LogiCA}'s contextualized logits recover biologically meaningful
residue-level interaction signals. As of January 2026, TCR3d contains $\sim$250 TCR--pMHC complexes. From these structures, we
extract residue-level contact maps and compare them to the cross-modal
dependency score in Eq.~\ref{eq:logica-cross-dependency-main}. This analysis
tests whether perturbing one sequence component induces logit changes at
spatially proximal residues in the paired biological context.

\newpage
\section{Reproducibility}

\subsection{Protein-ligand LogiCA setup.}

\label{app:implementation}

Unless noted otherwise, all experiments use the cross-attention interaction
tower described in Section~\ref{sec:logica-architecture}: two bidirectional
layers, four attention heads per layer, dropout 0.1, and scalar gate logits
initialized to $-6.0$. For pretraining and DTI fine-tuning, protein and ligand
inputs are truncated to 512 and 128 tokens, respectively. For variant ranking,
we increase the protein limit to 1024 tokens to cover the oncogene panel used in
Appendix~\ref{app:data-collection}; variants whose mutated residue falls beyond
the retained sequence window are excluded from evaluation.

The protein encoder is ESM-2 8M, 35M, 150M, or 650M depending on the experiment,
and the ligand encoder is SELFormer. During pretraining, pretrained encoder
weights are frozen and only the interaction tower and scoring parameters are
optimized. During downstream fine-tuning, we add LoRA adapters~\citep{hu2022lora}
to the pretrained encoders. Scores are computed from the contextualized
token-likelihood interface rather than from pooled-latent similarities.

All experiments were run on NVIDIA H100 80\,GB SXM or H200 GPUs. Unless
otherwise noted, protein--ligand pretraining uses the same base recipe across
ESM-2 backbone sizes: 100 epochs, 4-GPU data parallelism on a single H100 node,
per-GPU micro-batch size 4, gradient accumulation 2, AdamW with learning rate
$1{\times}10^{-4}$, and two negatives in each anchor direction. On
4$\times$H100, the 35M \textsc{LogiCA} model requires approximately 18 GPU-hours for
100-epoch pretraining on the 779k-anchor BindingDB split; larger backbones scale
accordingly. Downstream fine-tuning uses one H100 per random seed and requires
approximately 1 GPU-hour per dataset seed for the 35M encoder with LoRA rank 8. 

The
cross-attention adapter dimensions used at each backbone scale are summarized
in Table~\ref{tab:xattn-dims} ; layer count, head count, dropout, and gate
initialization are shared across scales. Tables~\ref{tab:training-hyper-pretrain}--\ref{tab:training-hyper-variant}
summarize the hyperparameters used in the reported experiments.

\begin{table}[h]
\centering
\caption{Cross-attention adapter dimensions per backbone scale.}
\label{tab:xattn-dims}
\small
\renewcommand{\arraystretch}{1.08}
\setlength{\tabcolsep}{6pt}
\begin{tabular}{lcccc}
\toprule
\textbf{Backbone} & \textbf{$d_{\mathrm{protein}}$} & \textbf{$d_s$} & \textbf{Heads $H$} & \textbf{Layers $N$} \\
\midrule
ESM-2 8M   &  320 & 320 & 4 & 2 \\
ESM-2 35M  &  480 & 384 & 4 & 2 \\
ESM-2 150M &  640 & 384 & 4 & 2 \\
ESM-2 650M & 1280 & 384 & 4 & 2 \\
\bottomrule
\end{tabular}
\end{table}
\renewcommand{\arraystretch}{1.0}

\paragraph{Training of controlled baselines.}
For controlled ablations, we keep the pretrained backbones, cross-attention
architecture, input preprocessing, task data, and optimization schedule matched
to \textsc{LogiCA}, and vary only the scoring or training objective. This gives
three ablations: conditional masked-language modeling
(\textsc{LogiMLM}), latent contrastive alignment (\textit{LatentCA}), and pooled
latent fusion (\textit{LatentFuse}).

\textbf{\textsc{LogiMLM}} is the conditional masked-token ablation. It uses the same
backbones and cross-attention adapter as \textsc{LogiCA}, preserves the native
token heads, and is trained with a standard masked-language-modeling loss only on
positive paired examples. Unlike \textsc{LogiCA}, it is not exposed to
partner-swapped or mutation-derived negative pairs during pretraining, so it
tests whether preserving the token-likelihood interface is sufficient without
contrastive alignment.

\textbf{\textit{LatentCA}} is the latent-contrastive ablation used for drug--target
binding. It mean-pools each tower's contextualized hidden states and scores the
pair by cosine similarity between the pooled protein and ligand representations.
LatentCA is trained with the same InfoNCE/Bradley--Terry objective and negative
construction as \textsc{LogiCA}, isolating latent-space contrastive alignment
from logit-space token scoring.

\textbf{\textit{LatentFuse}} is the corresponding latent ablation for drug-conditioned
variant ranking. It is initialized from the LatentCA protein--ligand pretrained
model, so it inherits the same latent contrastive pretraining as the binding
baseline. Because cosine similarity between protein and drug pools is
degenerate when every variant in an assay is paired with the same drug,
LatentFuse replaces the cosine score with a two-layer MLP over the concatenated
mean-pooled protein and drug representations. It is then fine-tuned on the same
mutation-local resistance objective as \textsc{LogiCA}, but predicts resistance
with a pooled scalar head rather than the native token-likelihood interface.

\subsection{Evaluation of protein-ligand external baselines}
\label{app:external-baselines}

We evaluated external baselines under a common protocol covering three model
families: drug--target interaction models, drug-agnostic protein variant
scorers, and structure-based models. Whenever possible, we used the authors'
public implementations and default training settings. All supervised baselines
were evaluated on the splits described in Appendix~\ref{app:data-collection};
for mutation-effect experiments, we used leave-one-protein-out splits matching
the \textsc{LogiCA} evaluation protocol.

Because most drug--target interaction baselines are designed to predict binding
between a drug and a protein, rather than mutation effects under a fixed drug
context, we adapted them by extracting the model representation immediately
before the final prediction layer and training a small regression MLP head within
each leave-one-protein-out fold. Unless otherwise noted, the original backbone
was kept fixed. Drug-agnostic protein language models were evaluated zero-shot
from the pseudo-log-likelihood difference between mutant and wild type. These
baselines do not receive the drug as input, so they test whether general
protein fitness or evolutionary plausibility alone explains drug-resistance
effects. For MSA-based methods, we built one MSA per gene with
\texttt{jackhmmer} against UniRef90 (2022\_05) and reused the same MSAs across
baselines.

\paragraph{MolTrans~\citep{huang2021moltrans} (\url{https://github.com/kexinhuang12345/MolTrans}).}
We evaluated MolTrans using the authors' released implementation and standard
tokenization, interaction transformer, and classifier design. The model was
trained on the standard DAVIS, BindingDB, and BioSNAP splits, and results were
averaged across five random seeds.

\paragraph{ConPLex~\citep{singh2023contrastive} (\url{https://github.com/samsledje/ConPLex}).}
We trained ConPLex with its default protein and molecule featurizers and the
released contrastive co-embedding objective. Results were averaged over five
seeds.

\paragraph{DrugBAN~\citep{bai2023drugban} (\url{https://github.com/peizhenbai/DrugBAN}).}
We followed the authors' default configuration, including the graph-based drug
encoder, protein encoder, bilinear co-attention module, and prediction head.
The model was trained on the same splits and averaged across five
seeds. For the mutation-effect task, we additionally evaluated a frozen-backbone
DrugBAN variant. We used the representation before the final classifier as the
drug--protein embedding and trained a small regression head within each
leave-one-protein-out fold.

\paragraph{DrugCLIP~\citep{jia2026drugclip} (\url{https://github.com/bowen-gao/DrugCLIP}).}
We initialized DrugCLIP from the released BindingDB-pretrained checkpoint and
evaluated it on each downstream split using the authors' encoder architecture.
For the mutation-effect setting, we also report a frozen-backbone variant in
which the learned drug and pocket embeddings are passed to a small regression
head trained within each leave-one-protein-out fold.

\paragraph{SP-DTI~\citep{liu2025spdti} (\url{https://github.com/Steven51516/SP-DTI}).}
SP-DTI requires protein pocket information, so we used AlphaFold2~\citep{abramson2024alphafold3} structures and
extracted pockets before training. We then evaluated the released 
pipeline with the authors' default model configuration and averaged results
over five seeds.

\paragraph{GS-DTI~\citep{yu2025graph} (\url{https://github.com/purvavideha/GSDTI}).}
GS-DTI uses both molecular graph features and protein structural information.
We precomputed the required KPGT~\citep{li2022kpgt} drug features
(\url{https://github.com/lihan97/KPGT}) and AlphaFold2-based~\citep{abramson2024alphafold3} protein features,
then trained the released model with its default configuration and averaged
results over five seeds.

\paragraph{ESM-1v~\citep{meier2021language} (\url{https://github.com/facebookresearch/esm}).}
We evaluated ESM-1v zero-shot using the masked-marginal pseudo-log-likelihood
ratio between mutant and wild type, averaged over the released five-model
ensemble. Since ESM-1v has no drug input, the resulting score is independent of
drug context.

\paragraph{ESM-2~\citep{lin2023evolutionary} (\url{https://github.com/facebookresearch/esm}).}
We evaluated ESM-2 in the same zero-shot manner as ESM-1v, using the 35M and
150M checkpoints. These scores provide drug-agnostic estimates of variant
plausibility.

\paragraph{EVE~\citep{frazer2021disease} (\url{https://github.com/OATML-Markslab/EVE}).}
We trained one EVE model per gene using the corresponding MSA and the authors'
default training procedure. Variants were scored zero-shot by comparing the
model likelihood of the mutant and wild-type sequences.

\paragraph{Tranception~\citep{notin2022tranception} (\url{https://github.com/OATML-Markslab/Tranception}).}
We evaluated Tranception-Large with the authors' released scoring script and
the same per-gene MSAs used for EVE. As with EVE and ESM, Tranception does not
condition on the drug.

\paragraph{Boltz-2~\citep{passaro2025boltz2} (\url{https://github.com/jwohlwend/boltz}).}
Boltz-2 is a structure-based model that provides learned structural
representations for protein--ligand complexes. For each mutant--drug and
wild-type--drug pair, we extracted the corresponding Boltz-2 latent embeddings and
trained a small regression head on top of them within each
leave-one-protein-out fold, while keeping Boltz-2 itself frozen.

\subsection{TCR--peptide LogiCA setup.}
\label{app:implementation-tcr}

We use the same overall \textsc{LogiCA} setup as the protein--ligand pipeline, with the following differences. The TCR branch is encoded by TCRLang throughout all experiments while the peptide branch uses ESM-2, with backbone size swept over 8M, 35M, 150M, and 650M. Sequence lengths are padded or truncated to a maximum of 64 tokens for the paired TCR chains and 32 tokens for the peptide. In practice, this maximum completely accommodates the TCRs, where no paired sequence exceeds 64 tokens, leaving the TCR branch entirely untruncated. Peptide truncation is similarly minimal, affecting fewer than 0.5\% of the longest sequences in the IEDB pretraining corpus.
The cross-attention tower matches the protein--ligand side in layer count and dropout, but the head count is inherited from each backbone rather than fixed at four, and we omit the bidirectional scalar gate. The projection (256-dim) and classifier hidden dimension (512-dim) are shared across all peptide backbone sizes.
Unlike the protein--ligand setup, both encoders remain trainable during pretraining. Accordingly, we use a single H100 80\,GB SXM GPU with per-GPU micro-batch 16 and no gradient accumulation, AdamW at $1\times10^{-5}$, and 10 epochs. On a single H100, the TCRLang--ESM-2\,35M model requires approximately 10 GPU-hours for 10-epoch pretraining on the 234k positive pairs of pretraining corpus with $K=1$ negative, and runs with $K=5$ scale to roughly 14--22 GPU-hours depending on anchor mode. 

Downstream fine-tuning requires approximately 1 GPU-hour for 15 epochs on a single H100. In contrast to the protein--ligand recipe, we fully fine-tune both encoders and the cross-attention tower without LoRA adapters. Rather than a Bradley–Terry ranking loss, we use plain BCE to handle the wide divergence between our matched and mismatched pairs. Because our anchored comparisons involve completely different sequences rather than close mutants, optimizing for absolute, independent probabilities via BCE provides a more robust training signal.
Variant ranking is performed at inference on the binary fine-tuned head via the masked-token mutant log-likelihood, with no separate ranking optimizer. Tables~\ref{tab:training-hyper-pretrain-tcr}--\ref{tab:training-hyper-finetuning-tcr} summarize the hyperparameters used in the reported TCR--epitope experiments.

\subsection{Evaluation of TCR--peptide external baselines.}

\paragraph{ePytope Benchmark TCR-Predictors ~\citep{drost2025benchmarking}}. iTCep~\citep{zhang2023itcep}, TULIP-TCR~\citep{meynard2024tulip}, TEIM ~\citep{peng2023teim}, and ERGO-II~\citep{springer2021ergoii} with its McPAS variant, and ImRex~\citep{moris2021imrex} with its full variant were evaluated through the ePytope benchmarking pipeline (\url{https://github.com/SchubertLab/benchmark_TCRprediction}). Each model was loaded with its publicly released pretrained checkpoints and no retraining was performed. In addition to the DMS dataset provided by ePytope, we integrated three additional benchmarks into the pipeline: \textsc{BATCAVE}, \textsc{ATLAS-Pep}, and \textsc{ATLAS-TCR}. Across all four test sets, we uniformly report Pearson and Spearman correlations between predicted binding scores and measured binding activity or affinity.

\paragraph{EPACT~\citep{zhang2024epitope}.} EPACT was evaluated outside the ePytope pipeline using the official repository (\url{https://github.com/zhangyumeng1sjtu/EPACT}). We ran inference with all five publicly released cross-validation checkpoints and averaged predictions across folds prior to computing metrics.

\paragraph{TCR-T5~\citep{karthikeyan2025tcrt5}.} TCR-T5 was also evaluated using the official repository (\url{https://github.com/pirl-unc/tcr_translate}). We loaded the publicly released fine-tuned checkpoint and scored each TCR--pMHC pair as the conditional log-likelihood of the CDR3$\beta$ sequence conditioned on the epitope and MHC allele.

\begin{table*}[h]
\centering
\caption{Hyperparameters for released \textsc{LogiCA} protein--ligand pretraining runs.}
\label{tab:training-hyper-pretrain}
\small
\renewcommand{\arraystretch}{1.08}
\setlength{\tabcolsep}{4pt}
\resizebox{\textwidth}{!}{%
\begin{tabular}{lp{0.54\textwidth}p{0.22\textwidth}}
\toprule
\textbf{Config key} & \textbf{Description} & \textbf{Value} \\
\midrule
\texttt{train.optimizer.name} & Optimizer used for pretraining. & \texttt{AdamW} \\
\texttt{train.optimizer.lr} & Learning rate for pretraining updates. & $1 \times 10^{-4}$ \\
\texttt{train.optimizer.weight\_decay} & Weight decay applied during pretraining. & $1 \times 10^{-2}$ \\
\texttt{train.batch\_size\_per\_gpu} & Microbatch size on each GPU. & 4 \\
\texttt{train.num\_gpus} & Number of GPUs used for each pretraining run. & 4 \\
\texttt{train.grad\_accum\_steps} & Gradient-accumulation steps. & 2 \\
\texttt{train.effective\_anchor\_batch} & Raw anchor examples per optimizer update: batch\_size\_per\_gpu $\times$ num\_gpus $\times$ grad\_accum\_steps. & $4 \times 4 \times 2 = 32$ \\
\texttt{train.freeze\_encoders} & Freeze the pretrained protein and drug encoders. & \texttt{true} \\
\texttt{objective.contrastive.temperature} & InfoNCE temperature ($\tau$). & 0.1 \\
\texttt{data.mask\_rate} & Masked-token corruption rate for the auxiliary loss. & 0.15 \\
\texttt{data.negatives\_per\_anchor} & Negative samples per anchor in \textsc{LogiCA} pretraining. & 2 \\
\texttt{objective.scored\_pair\_contexts} & Pair contexts scored per optimizer update, counting one positive pair plus $K$ protein-anchored and $K$ ligand-anchored negatives per anchor. & $32 \times (1 + 2K) = 160$ \\
\texttt{train.num\_epochs} & Pretraining epochs per scaling run, shared across all backbone sizes. & $100$ \\
\texttt{train.lr\_scheduler.name} & Learning-rate schedule. & \texttt{none} \\
\texttt{train.lr\_scheduler.warmup\_steps} & Warmup steps for the learning-rate schedule. & 0 \\
\bottomrule
\end{tabular}%
}
\end{table*}
\renewcommand{\arraystretch}{1.0}

\begin{table*}[h]
\centering
\caption{Hyperparameters for downstream DTI fine-tuning.}
\label{tab:training-hyper-binding}
\small
\renewcommand{\arraystretch}{1.08}
\setlength{\tabcolsep}{4pt}
\resizebox{\textwidth}{!}{%
\begin{tabular}{lp{0.54\textwidth}p{0.22\textwidth}}
\toprule
\textbf{Config key} & \textbf{Description} & \textbf{Value} \\
\midrule
\texttt{train.optimizer.name} & Optimizer used for DTI fine-tuning. & \texttt{AdamW} \\
\texttt{train.optimizer.lr} & Learning rate for fine-tuning. & $2 \times 10^{-5}$ \\
\texttt{train.optimizer.weight\_decay} & Weight decay applied during fine-tuning. & $1 \times 10^{-2}$ \\
\texttt{train.batch\_size} & Per-step batch size ($B$). & 4 \\
\texttt{train.eval\_batch\_size} & Evaluation batch size. & 16 \\
\texttt{train.grad\_accum\_steps} & Gradient-accumulation steps (effective batch $8$). & 2 \\
\texttt{model.lora\_r} & LoRA rank applied to the protein tower. & 64 \\
\texttt{model.lora\_alpha} & LoRA alpha. & 128 \\
\texttt{model.lora\_dropout} & LoRA dropout. & 0.1 \\
\texttt{data.negatives\_per\_positive} & Default sampled negatives per positive. & 1 \\
\texttt{objective.bt.weight\_mode} & Bradley--Terry negative-class weighting. & \texttt{ratio} \\
\texttt{train.num\_epochs} & Number of fine-tuning epochs. & 40 \\
\texttt{train.lr\_scheduler.name} & Learning-rate schedule. & \texttt{none} \\
\texttt{train.lr\_scheduler.warmup\_steps} & Warmup steps for the learning-rate schedule. & 0 \\
\bottomrule
\end{tabular}%
}
\end{table*}
\renewcommand{\arraystretch}{1.0}

\begin{table*}[h]
\centering
\caption{Hyperparameters for variant-ranking fine-tuning on protein-drug resistance.}
\label{tab:training-hyper-variant}
\small
\renewcommand{\arraystretch}{1.08}
\setlength{\tabcolsep}{4pt}
\resizebox{\textwidth}{!}{%
\begin{tabular}{lp{0.54\textwidth}p{0.22\textwidth}}
\toprule
\textbf{Config key} & \textbf{Description} & \textbf{Value} \\
\midrule
\texttt{train.optimizer.name} & Optimizer used for variant-ranking updates. & \texttt{AdamW} \\
\texttt{train.optimizer.lr} & Learning rate for variant-ranking fine-tuning. & $3 \times 10^{-4}$ \\
\texttt{train.optimizer.weight\_decay} & Weight decay applied during fine-tuning. & $1 \times 10^{-2}$ \\
\texttt{train.batch\_size} & Per-step batch size ($B$). & 4 \\
\texttt{train.eval\_batch\_size} & Evaluation batch size. & 64 \\
\texttt{train.grad\_accum\_steps} & Gradient-accumulation steps (effective batch $8$). & 2 \\
\texttt{model.lora\_r} & LoRA rank applied to the protein tower. & 8 \\
\texttt{model.lora\_alpha} & LoRA alpha. & 16 \\
\texttt{model.lora\_dropout} & LoRA dropout. & 0.1 \\
\texttt{train.scoring} & Variant scoring rule before Bradley--Terry ranking. & \texttt{drug\_llr} \\
\texttt{train.pair\_weighting} & Weighting on within-batch pairs by $|\Delta f|$. & \texttt{delta} \\
\texttt{train.num\_epochs} & Number of fine-tuning epochs. & 40 \\
\texttt{train.lr\_scheduler.name} & Learning-rate schedule. & \texttt{none} \\
\texttt{train.lr\_scheduler.warmup\_steps} & Warmup steps for the learning-rate schedule. & 0 \\
\bottomrule
\end{tabular}%
}
\end{table*}
\renewcommand{\arraystretch}{1.0}

\begin{table*}[h]
\centering
\caption{Hyperparameters for released \textsc{LogiCA} TCR--peptide pretraining runs.}
\label{tab:training-hyper-pretrain-tcr}
\small
\renewcommand{\arraystretch}{1.08}
\setlength{\tabcolsep}{4.5pt}
\resizebox{\textwidth}{!}{%
\begin{tabular}{lp{0.54\textwidth}p{0.22\textwidth}}
\toprule
\textbf{Config key} & \textbf{Description} & \textbf{Value} \\
\midrule
\texttt{train.optimizer.name} & Optimizer used for pretraining. & \texttt{AdamW} \\
\texttt{train.optimizer.lr} & Learning rate for pretraining updates. & $1 \times 10^{-5}$ \\
\texttt{train.optimizer.weight\_decay} & Weight decay applied during pretraining. & $1 \times 10^{-2}$ \\
\texttt{train.batch\_size\_per\_gpu} & Microbatch size on each GPU. & 16 \\
\texttt{train.num\_gpus} & Number of GPUs used for each pretraining run. & 1 \\
\texttt{train.grad\_accum\_steps} & Gradient-accumulation steps. & 1 \\
\texttt{train.effective\_anchor\_batch} & Raw anchor examples per optimizer update: batch\_size\_per\_gpu $\times$ num\_gpus $\times$ grad\_accum\_steps. & $16 \times 1 \times 1 = 16$ \\
\texttt{train.freeze\_encoders} & Freeze the pretrained TCR and peptide encoders. & \texttt{false} \\
\texttt{objective.contrastive.temperature} & InfoNCE temperature ($\tau$). & 0.1 \\
\texttt{data.mask\_rate} & Masked-token corruption rate for the auxiliary MLM loss. & 0.15 \\
\texttt{data.negatives\_per\_anchor} & Negative samples per pep-anchored batch. & 1 / 5 \\
\texttt{data.anchor\_type} & Anchor side for each released model. & \texttt{a} / \texttt{b} / \texttt{mixed} \\
\texttt{data.anchor\_batch\_ratio\_pep} & Ratio of peptide-anchored to TCR-anchored batches (\texttt{LogiCA-Dual} only). & 5 \\
\texttt{objective.scored\_pair\_contexts} & Pair contexts scored per optimizer update, counting one positive plus $K$ negatives per anchor. & $16 \times (1+1)=32$ / $16 \times (1+5)=96$ \\
\texttt{train.use\_mlm\_loss} & Add masked-LM auxiliary loss on both branches. & \texttt{true} \\
\texttt{train.mlm\_loss\_weight} & Weight on the auxiliary MLM term. & 1.0 \\
\texttt{train.num\_epochs} & Pretraining epochs per run, shared across all backbone sizes. & 10 \\
\texttt{train.lr\_scheduler.name} & Learning-rate schedule. & \texttt{linear} \\
\texttt{train.lr\_scheduler.warmup\_steps} & Warmup steps for the learning-rate schedule. & 1{,}000 \\
\bottomrule
\end{tabular}%
}
\end{table*}
\renewcommand{\arraystretch}{1.0}

\begin{table}[h]
\centering
\caption{Pretraining hyperparameters that differ across the three released \textsc{LogiCA} TCR--peptide models.
All other hyperparameters are shared and listed in Table~\ref{tab:training-hyper-pretrain-tcr}.}
\label{tab:pretrain-model-comparison-tcr}
\small
\renewcommand{\arraystretch}{1.08}
\setlength{\tabcolsep}{5pt}
\begin{tabular}{lp{0.18\textwidth}p{0.15\textwidth}p{0.22\textwidth}}
\toprule
\textbf{Hyperparameter} & \textbf{\textsc{LogiCA-TCR}} & \textbf{\textsc{LogiCA-Pep}} & \textbf{\textsc{LogiCA-Dual}} \\
\midrule
\texttt{data.anchor\_type} & \texttt{a} (TCR) & \texttt{b} (Peptide) & \texttt{mixed} \\
\texttt{data.negatives\_per\_anchor} ($K_\mathrm{pep}$) & none & 5 & 5 \\
\texttt{data.negatives\_per\_anchor\_a} ($K_\mathrm{tcr}$) & 1 & \texttt{none} & 1 \\
\texttt{data.anchor\_batch\_ratio\_pep} & \texttt{none} & \texttt{none} & 5\\
\texttt{objective.scored\_pair\_contexts} & $16 \times 2 = 32$ & $16 \times 6 = 96$ & 32 / 96 \\
\bottomrule
\end{tabular}
\renewcommand{\arraystretch}{1.0}
\end{table}

\begin{table*}[h]
\centering
\caption{Hyperparameters for downstream TCR--peptide fine-tuning.}
\label{tab:training-hyper-finetuning-tcr}
\small
\renewcommand{\arraystretch}{1.08}
\setlength{\tabcolsep}{4pt}
\resizebox{\textwidth}{!}{%
\begin{tabular}{lp{0.54\textwidth}p{0.22\textwidth}}
\toprule
\textbf{Config key} & \textbf{Description} & \textbf{Value} \\
\midrule
\texttt{train.optimizer.name} & Optimizer used for fine-tuning. & \texttt{AdamW} \\
\texttt{train.optimizer.lr} & Learning rate for fine-tuning. & $2 \times 10^{-5}$ \\
\texttt{train.optimizer.weight\_decay} & Weight decay applied during fine-tuning. & $1 \times 10^{-2}$ \\
\texttt{train.batch\_size} & Per-step batch size ($B$). & 16 \\
\texttt{train.eval\_batch\_size} & Evaluation batch size. & 32 \\
\texttt{train.grad\_accum\_steps} & Gradient-accumulation steps (effective batch 16). & 1 \\
\texttt{train.freeze\_encoders} & Freeze the pretrained TCR and peptide encoders. & \texttt{false} \\
\texttt{train.freeze\_cross\_attention} & Freeze the cross-attention tower. & \texttt{false} \\
\texttt{train.l2\_reg\_lambda} & $\ell_2$ regularization on classifier weights. & $1 \times 10^{-2}$ \\
\texttt{objective.loss} & Classification loss on the binary head. & \texttt{BCE} \\
\texttt{train.num\_epochs} & Number of fine-tuning epochs. & 15 \\
\texttt{train.lr\_scheduler.name} & Learning-rate schedule. & \texttt{linear} \\
\texttt{train.lr\_scheduler.warmup\_steps} & Warmup steps for the learning-rate schedule. & 500 \\
\bottomrule
\end{tabular}%
}
\end{table*}
\renewcommand{\arraystretch}{1.0}

\newpage

\clearpage
\section{Additional Experiments}
\subsection{DTI fine-tuning ablation}
\label{app:dti-ablation}

Table~\ref{tab:dti-ablation} ablates the w/ \textsc{LogiCA} fine-tuning recipe along two axes: protein-tower size $\{$8M, 35M, 150M$\}$ at fixed LoRA rank $r{=}64$ (top block), and LoRA rank $r \in \{8, 32, 64\}$ at fixed 35M backbone (bottom block). All rows use the same downstream DTI fine-tuning learning rate, $\eta{=}2\times10^{-5}$, matching Table~\ref{tab:training-hyper-binding}. Increasing the protein tower to 150M improves BindingDB and is the configuration adopted in Table~\ref{tab:dti-main}; increasing LoRA rank from $8$ to $64$ at the 35M backbone yields a small monotonic improvement.

\begin{table*}[h]
    \centering
    \caption{Ablation of w/ \textsc{LogiCA} fine-tuning hyperparameters. The highlighted 150M row is the configuration reported in Table~\ref{tab:dti-main}. All cells are five-seed means; standard deviations are reported in the main table for the highlighted configuration.}
    \label{tab:dti-ablation}
    \small
    \renewcommand{\arraystretch}{1.08}
    \resizebox{0.8\textwidth}{!}{%
    \begin{tabular}{lcccccc}
    \toprule
    \multirow[c]{2}{*}{\textbf{Method}}
    & \multicolumn{2}{c}{\textbf{DAVIS}}
    & \multicolumn{2}{c}{\textbf{BindingDB (Test)}}
    & \multicolumn{2}{c}{\textbf{BioSNAP}} \\
    \cmidrule(lr){2-3}
    \cmidrule(lr){4-5}
    \cmidrule(lr){6-7}
     & \multicolumn{1}{c}{AUC}
     & \multicolumn{1}{c}{AUPR}
     & \multicolumn{1}{c}{AUC}
     & \multicolumn{1}{c}{AUPR}
     & \multicolumn{1}{c}{AUC}
     & \multicolumn{1}{c}{AUPR} \\
    \midrule
    \rowcolor{blue!6}
    \multicolumn{7}{l}{\textit{Backbone scaling at fixed LoRA rank $r{=}64$}} \\
    \midrule
    w/ \textsc{LogiCA}\genmark (8M)
    & 0.920 & 0.425 & 0.891 & 0.614 & 0.914 & 0.921 \\
    w/ \textsc{LogiCA}\genmark (35M)
    & 0.919 & 0.433 & 0.894 & 0.608 & 0.920 & 0.927 \\
    \rowcolor{blue!10}
    \textbf{w/ \textsc{LogiCA}\genmark (150M)}
    & \textbf{0.924} & \textbf{0.446} & \textbf{0.906} & \textbf{0.635} & \textbf{0.920} & \textbf{0.927} \\
    \midrule
    \rowcolor{blue!6}
    \multicolumn{7}{l}{\textit{LoRA rank ablation at fixed 35M backbone}} \\
    \midrule
    w/ \textsc{LogiCA}\genmark (35M, $r{=}8$)
    & 0.916 & 0.426 & 0.886 & 0.595 & 0.913 & 0.920 \\
    w/ \textsc{LogiCA}\genmark (35M, $r{=}32$)
    & 0.919 & 0.430 & 0.888 & 0.600 & 0.916 & 0.924 \\
    w/ \textsc{LogiCA}\genmark (35M, $r{=}64$)
    & 0.919 & 0.433 & 0.894 & 0.608 & 0.920 & 0.927 \\
    \bottomrule
    \end{tabular}%
    }
\end{table*}
\renewcommand{\arraystretch}{1.0}

\subsection{Variant ranking: few-shot adaptation}
\label{app:variant-fewshot}

Table~\ref{tab:variant-fewshot} reports the few-shot adaptation curves for the variant-ranking benchmark. For each target gene, a fraction $f \in \{0, 5, 10, 15\}\%$ of its labeled (variant, drug, resistance score) entries is used for adaptation and the remaining variants are held out for evaluation, contrasting the w/ \textsc{LogiMLM} and w/ \textsc{LogiCA} pretrains at two backbone scales (8M and 35M). Both initializations improve monotonically with the target-label fraction and w/ \textsc{LogiCA} stays above w/ \textsc{LogiMLM} at every fraction, with the largest separation at the higher end. The 35M, $15\%$ row is the configuration referenced as \textsc{LogiCA} (15\% target labels) in the few-shot scaling figure (Figure~\ref{fig:scaling}c).

\begin{table*}[h]
    \centering
    \caption{Few-shot adaptation on the variant-ranking benchmark. For each target gene, a small fraction of its labeled (variant, drug, resistance score) entries is used for adaptation and the remaining variants are held out for evaluation. Each column reports the 11-gene leave-one-protein-out average Spearman $\rho$ and binary resistance AUC at the indicated target-label fraction. Best values within each backbone size are shown in \textbf{bold}. The $0\%$ setting corresponds to the rows reported in Table~\ref{tab:variant-drug}.}
    \label{tab:variant-fewshot}
    \small
    \renewcommand{\arraystretch}{1.08}
    \resizebox{0.98\textwidth}{!}{%
    \begin{tabular}{lcccccccc}
    \toprule
    \multirow{2}{*}{\textbf{Method}}
    & \multicolumn{2}{c}{\textbf{0\% target labels}}
    & \multicolumn{2}{c}{\textbf{5\% target labels}}
    & \multicolumn{2}{c}{\textbf{10\% target labels}}
    & \multicolumn{2}{c}{\textbf{15\% target labels}} \\
    \cmidrule(lr){2-3} \cmidrule(lr){4-5} \cmidrule(lr){6-7} \cmidrule(lr){8-9}
     & $\rho$ & AUC & $\rho$ & AUC & $\rho$ & AUC & $\rho$ & AUC \\
    \midrule
    \rowcolor{blue!6}
    \multicolumn{9}{l}{\textit{Few-shot target-gene adaptation}} \\
    \midrule
    w/ \textsc{LogiMLM}\genmark (8M)
    & 0.279 & 0.624 & 0.346 & 0.659 & 0.427 & 0.699 & 0.471 & 0.713 \\
    \rowcolor{blue!10}
    \textbf{w/ \textsc{LogiCA}\genmark (8M)}
    & \textbf{0.297} & \textbf{0.629} 
    & \textbf{0.377} & \textbf{0.670} 
    & \textbf{0.443} & \textbf{0.709} 
    & \textbf{0.489} & \textbf{0.728} \\
    w/ \textsc{LogiMLM}\genmark (35M)
    & 0.271 & 0.615 & 0.372 & 0.661 & 0.412 & 0.666 & 0.478 & 0.693 \\
    \rowcolor{blue!10}
    \textbf{w/ \textsc{LogiCA}\genmark (35M)}
    & \textbf{0.296} & \textbf{0.636} 
    & \textbf{0.381} & \textbf{0.669} 
    & \textbf{0.460} & \textbf{0.719} 
    & \textbf{0.501} & \textbf{0.741} \\
    \bottomrule
    \end{tabular}%
    }
\end{table*}
\renewcommand{\arraystretch}{1.0}

\subsection{Two scaling regimes for \textsc{LogiCA}}
\label{app:scaling-regime}

We analyze two forms of scaling: pretraining scale, measured by matched-versus-mismatched likelihood separation, and downstream data scale, measured by few-shot variant-ranking performance. For pretraining, we use the held-out symmetric likelihood margin
\begin{equation}
  \widehat{\bar{\gamma}}
  =
  \frac{1}{2}
  \left[
  \ell(x\mid y)-\frac{1}{K}\sum_{k=1}^K \ell(x\mid y_k^-)
  +
  \ell(y\mid x)-\frac{1}{K}\sum_{k=1}^K \ell(y\mid x_k^-)
  \right],
  \label{eq:likelihood-margin}
\end{equation}
averaged over held-out matched pairs and sampled negatives. We pretrain \textsc{LogiCA} with ESM-2~\citep{lin2023evolutionary} backbones from 8M to
650M parameters while keeping the SELFormer~\citep{yuksel2023selformer} ligand
tower and training recipe fixed.

\subsection{TCR--epitope ESM-2 peptide encoder scaling}

\label{app:tcr-esm-scaling}

Table~\ref{tab:tcr-esm-scaling} evaluates the effect of ESM-2 peptide encoder size for \textsc{LogiCA}-TCR. All variants use the same TCRlang-Paired TCR encoder, pretraining objective, and fine-tuning recipe, and differ only in the ESM-2 peptide encoder size. The 35M-parameter encoder performs best across all three peptide-mutation benchmarks. Larger encoders do not improve performance: the 150M model is competitive but lower than 35M, while the 650M model drops further across benchmarks. This suggests that ESM-2 35M is best matched to the scale of our TCR--epitope data, so we use it as the peptide encoder in the main experiments.

\begin{table*}[h]
    \centering
    \caption{Experiments with different ESM-2 peptide encoder sizes in \textsc{LogiCA}-TCR. All variants use TCRLang-Paired as the TCR encoder and differ only in the size of the ESM-2 peptide encoder. Performance is reported as mean Pearson and Spearman correlations with standard deviation across runs on ePytope, BATCAVE, and ATLAS-PEP peptide-mutation benchmarks.}
    \label{tab:tcr-esm-scaling}
    \small
    \renewcommand{\arraystretch}{1.08}
    \resizebox{0.94\textwidth}{!}{%
    \begin{tabular}{lcccccc}
    \toprule
    \multirow[c]{2}{*}{\textbf{Method}} 
    & \multicolumn{2}{c}{\textbf{ePytope~\citep{drost2025benchmarking}}} 
    & \multicolumn{2}{c}{\textbf{BATCAVE~\citep{banerjee2025comprehensive}}} 
    & \multicolumn{2}{c}{\textbf{ATLAS-PEP~\citep{borrman2017atlas}}} \\
    \cmidrule(lr){2-3} \cmidrule(lr){4-5} \cmidrule(lr){6-7}
    & \multicolumn{1}{c}{Pearson} 
    & \multicolumn{1}{c}{Spearman} 
    & \multicolumn{1}{c}{Pearson} 
    & \multicolumn{1}{c}{Spearman} 
    & \multicolumn{1}{c}{Pearson} 
    & \multicolumn{1}{c}{Spearman} \\
    \midrule
    \rowcolor{blue!6}
    \multicolumn{7}{l}{\textit{Contextualized backbone}} \\
    \midrule
    \methodcell{\textsc{LogiCA}-TCR}{ESM-2 8M peptide encoder}
    & \stdcell{0.098}{0.237} & \stdcell{0.063}{0.230} 
    & \stdcell{0.109}{0.229} & \stdcell{0.074}{0.225} 
    & \stdcell{0.286}{0.683} & \stdcell{0.261}{0.561} \\
    
    \rowcolor{blue!10}
    \methodcell{\textbf{\textsc{LogiCA}-TCR}}{\textbf{ESM-2 35M peptide encoder}}
    & \beststdcell{0.296}{0.219} & \beststdcell{0.229}{0.168} 
    & \beststdcell{0.223}{0.229} & \beststdcell{0.170}{0.191} 
    & \beststdcell{0.632}{0.286} & \beststdcell{0.731}{0.238} \\

    \methodcell{\textsc{LogiCA}-TCR}{ESM-2 150M peptide encoder}
    & \stdcell{0.209}{0.211} & \stdcell{0.194}{0.193} 
    & \stdcell{0.165}{0.199} & \stdcell{0.144}{0.192} 
    & \stdcell{0.527}{0.260} & \stdcell{0.609}{0.272} \\

    \methodcell{\textsc{LogiCA}-TCR}{ESM-2 650M peptide encoder}
    & \stdcell{0.078}{0.207} & \stdcell{0.092}{0.213} 
    & \stdcell{0.064}{0.185} & \stdcell{0.066}{0.192} 
    & \stdcell{0.471}{0.395} & \stdcell{0.533}{0.381} \\
    \bottomrule
    \end{tabular}%
    }
\end{table*}
\renewcommand{\arraystretch}{1.0}

\subsection{Binary classification on thresholded ePytope labels}

\label{app:epytope-auc}

Table~\ref{app:epytope-mutation-binary} reports a supplementary binary classification analysis on ePytope using the benchmark-provided labels derived from epitope-specific NFAT thresholds. This evaluation tests whether the variant scores used for mutation ranking also separate the thresholded active and inactive TCR--epitope pairs within this mutation dataset. In addition to AUC and AUPR, we report AUC0.1, BEDROC, and logAUC as early-retrieval metrics that emphasize whether active pairs are ranked near the top of the prediction list. \textsc{LogiCA}-TCR performs best on most metrics, including AUC (0.672), AUPR (0.485), AUC0.1 (0.586), and BEDROC (0.541), while \textsc{LogiCA}-Dual achieves the best logAUC (0.093) and remains competitive across the other metrics.

\newpage

\section{Computational complexity of LogiCA}
\label{app:complexity}

\textsc{LogiCA} is a token-likelihood-preserving interaction model, not a
replacement for scalable dual-encoder retrieval. Its main computational cost
comes from pair-specific conditioning. For \(N\) sequences and \(M\) contexts, a
dual encoder can precompute independent representations with encoder cost
\(O(N+M)\), then score pairs by inexpensive dot products or cosine similarities.
In contrast, \textsc{LogiCA} must jointly contextualize each queried
sequence--context pair before evaluating token likelihoods, giving \(O(NM)\)
pair-specific evaluations for exhaustive all-by-all retrieval. This makes
\textsc{LogiCA} less suitable as a first-stage model for massive screening, but
enables context-conditioned native token likelihoods, mutation-local scoring,
token-level interpretation, and conditional generation.

\paragraph{Training complexity.}
Dual-encoder contrastive models and \textsc{LogiCA} can use the same positive
pairs, in-batch negatives, and sampled anchored negatives. The difference is how
the \(O(BK)\) anchor--candidate scores are computed for a minibatch with \(B\)
anchors and \(K\) candidate partners. A dual encoder computes \(O(B+K)\)
independent representations and forms all \(BK\) scores by matrix multiplication.
\textsc{LogiCA} instead requires a pair-specific interaction for each
anchor--candidate score, so the interaction cost scales as \(O(BK)\). In
practice, this cost is controlled by the sampled regime used here, where each
anchor is contrasted against a small set of positives and negatives rather than
all possible partners.

Paired or conditional MLMs have a similar pair-specific computation structure,
because each sequence--context input must be evaluated jointly. However, they
optimize token reconstruction on matched pairs, whereas \textsc{LogiCA}
explicitly contrasts matched and mismatched contexts.

\paragraph{Inference complexity.}
At inference time, the gap is largest for large candidate libraries. Dual
encoders can reuse precomputed representations and scale naturally to
high-throughput retrieval. \textsc{LogiCA} and paired MLMs must evaluate each
queried pair jointly, so exhaustive retrieval over \(N\) sequences and \(M\)
contexts requires \(O(NM)\) paired evaluations. These models are therefore better
used after candidate narrowing, or in settings where the context is fixed and
only a finite set of variants is scored.

For variant ranking, the cost is often modest. Given one context and \(V\)
variants, \textsc{LogiCA} requires \(O(V)\) paired evaluations rather than an
all-by-all search. The final likelihood-ratio score can also be restricted to
the mutated positions, preserving the mutation-local structure of the
prediction.

\paragraph{Generation complexity.}
Dual encoders score completed inputs but do not natively define normalized token
distributions for conditional generation. \textsc{LogiCA} retains the language
model token head, so it can generate under a fixed context \(y\) by Gibbs
sampling over a design set \(A\):
\[
  x_i \sim \pi_\theta(\cdot \mid x_{\setminus i}, y),
  \qquad i\in A .
\]
With \(T\) updates, generation costs approximately \(O(T)\) contextualized
forward passes. This is more expensive than retrieval in a precomputed latent
space, but it enables direct context-conditioned token generation.

\clearpage

\section{Supplementary Tables}

\begin{table*}[h]
    \centering
    \caption{Protein--ligand binding prediction on DTI benchmarks. Reported reproduced multi-run cells are five-run means $\pm$ standard deviations. The blue block compares contextualized sequence-model variants (SELFormer--ESM-2). The gray block lists external sequence-only DTI baselines, which share the same input modality and are the direct comparators for \textsc{LogiCA}. The orange block lists structure-informed baselines (marked $\ddagger$); these consume additional structural information and are reported for reference, not as direct comparators. \textbf{Bold} marks the best result per column among sequence-only methods (blue + gray blocks); \underline{underline} marks the second best in the same scope. The $\dagger$ marker denotes methods that retain a native-vocabulary generative interface.}
    \label{tab:dti-main}
    \small
    \renewcommand{\arraystretch}{1.08}
    \resizebox{0.75\textwidth}{!}{
    \begin{tabular}{lcccccc}
    \toprule
    \multirow[c]{2}{*}{\textbf{Method}}
    & \multicolumn{2}{c}{\textbf{DAVIS}}
    & \multicolumn{2}{c}{\textbf{BindingDB (Test)}}
    & \multicolumn{2}{c}{\textbf{BioSNAP}} \\
    \cmidrule(lr){2-3}
    \cmidrule(lr){4-5}
    \cmidrule(lr){6-7}
     & \multicolumn{1}{c}{AUC}
     & \multicolumn{1}{c}{AUPR}
     & \multicolumn{1}{c}{AUC}
     & \multicolumn{1}{c}{AUPR}
     & \multicolumn{1}{c}{AUC}
     & \multicolumn{1}{c}{AUPR} \\
    \midrule

    \rowcolor{blue!6}
    \multicolumn{7}{l}{\textit{Contextualized backbone (ours)}} \\
    \midrule
    \methodcell{w/ LatentCA}{latent contrastive}
    & \stdcell{0.904}{0.007} & \stdcell{0.371}{0.012}
    & \stdcell{0.812}{0.040} & \stdcell{0.481}{0.061}
    & \understdcell{0.907}{0.012} & \stdcell{0.915}{0.013} \\

    \methodcell{w/ \textsc{LogiMLM}\genmark}{logit-level MLM}
    & \stdcell{0.739}{0.001} & \stdcell{0.197}{0.007}
    & \stdcell{0.754}{0.001} & \stdcell{0.332}{0.004}
    & \stdcell{0.815}{0.001} & \stdcell{0.846}{0.002} \\

    \rowcolor{blue!10}
    \methodcell{\textbf{w/ \textsc{LogiCA}\genmark}}{\textbf{token-score contrastive}}
    & \beststdcell{0.924}{0.005} & \beststdcell{0.446}{0.019}
    & \understdcell{0.906}{0.003} & \beststdcell{0.635}{0.014}
    & \beststdcell{0.921}{0.003} & \beststdcell{0.929}{0.004} \\

    \midrule
    \rowcolor{gray!8}
    \multicolumn{7}{l}{\textit{Sequence-only baselines}} \\
    \midrule
    \methodcell{MolTrans~\citep{huang2021moltrans}}{classifier head}
    & \stdcell{0.901}{0.006} & \stdcell{0.343}{0.024}
    & \stdcell{0.904}{0.005} & \stdcell{0.584}{0.010}
    & \stdcell{0.886}{0.007} & \stdcell{0.894}{0.009} \\

    \methodcell{ConPLex~\citep{singh2023contrastive}}{latent contrastive}
    & \stdcell{0.887}{0.007} & \understdcell{0.441}{0.033}
    & \stdcell{0.854}{0.004} & \stdcell{0.614}{0.009}
    & \stdcell{0.866}{0.006} & \stdcell{0.888}{0.005} \\

    \methodcell{DrugBAN~\citep{bai2023drugban}}{classifier head}
    & \stdcell{0.883}{0.006} & \stdcell{0.349}{0.022}
    & \beststdcell{0.912}{0.002} & \understdcell{0.617}{0.006}
    & \understdcell{0.907}{0.002} & \understdcell{0.916}{0.004} \\

    \midrule
    \rowcolor{orange!8}
    \multicolumn{7}{l}{\textit{Structure-informed baselines}} \\
    \midrule
    \rowcolor{orange!4}
    \methodcell{DrugCLIP\structmark~\citep{jia2026drugclip}}{latent contrastive}
    & \stdcell{0.925}{0.009} & \stdcell{0.473}{0.025}
    & \stdcell{0.863}{0.010} & \stdcell{0.581}{0.025}
    & \stdcell{0.788}{0.006} & \stdcell{0.821}{0.007} \\

    \rowcolor{orange!4}
    \methodcell{SP-DTI\structmark~\citep{liu2025spdti}}{classifier head}
    & \stdcell{0.924}{0.002} & \stdcell{0.424}{0.023}
    & \stdcell{0.921}{0.001} & \stdcell{0.645}{0.014}
    & \stdcell{0.924}{0.003} & \stdcell{0.923}{0.005} \\

    \rowcolor{orange!4}
    \methodcell{GS-DTI\structmark~\citep{yu2025graph}}{latent contrastive}
    & \stdcell{0.916}{0.012} & \stdcell{0.430}{0.038}
    & \stdcell{0.920}{0.004} & \stdcell{0.669}{0.012}
    & \stdcell{0.934}{0.003} & \stdcell{0.934}{0.004} \\

    \bottomrule
    \end{tabular}
    }
\end{table*}
\renewcommand{\arraystretch}{1.0}



\begin{table*}[h]
    \centering
    \caption{Drug-resistance variant scoring, per-gene breakdown of Table~\ref{tab:variant-drug}. Each cell is the cross-drug mean $\pm$ std of held-out Spearman $\rho$ and binary resistance AUC for the indicated gene fold (LOPO over 11 oncogenes). The EGFR fold corresponds to the Kim et al.~\citep{kim2024saturation} prime-editing source; the remaining ten folds (KRAS\,--\,PARP2) come from the Coelho et al.~\citep{dunham2024exploring} multi-oncogene screen. The \textbf{Avg.} column reports cross-gene mean $\pm$ std of pooled per-gene metrics over all 11 LOPO folds. Each gene fold averages over the number of targeted therapies shown beneath its header (9 per Coelho gene; 10 for EGFR), and the \textbf{Avg.} column pools all 100 gene--drug assays across the 11 genes. Best per column is in \textbf{bold}; second-best is \underline{underlined}. The $\dagger$ marker denotes methods that retain a native-vocabulary generative interface.}
    \label{tab:variant-drug-per-gene}
    \small
    \renewcommand{\arraystretch}{1.15}
    \setlength{\tabcolsep}{4pt}
\resizebox{\textwidth}{!}{
\begin{tabular}{lrrrrrrrrrrrr}
\toprule
\multirow{2}{*}{\textbf{Method}} & \multicolumn{2}{c}{\shortstack{\textbf{EGFR}\\ {\scriptsize 10 drugs}}} & \multicolumn{2}{c}{\shortstack{\textbf{KRAS}\\ {\scriptsize 9 drugs}}} & \multicolumn{2}{c}{\shortstack{\textbf{BRAF}\\ {\scriptsize 9 drugs}}} & \multicolumn{2}{c}{\shortstack{\textbf{MAP2K1}\\ {\scriptsize 9 drugs}}} & \multicolumn{2}{c}{\shortstack{\textbf{MAP2K2}\\ {\scriptsize 9 drugs}}} & \multicolumn{2}{c}{\shortstack{\textbf{PIK3CA}\\ {\scriptsize 9 drugs}}} \\
\cmidrule(lr){2-3} \cmidrule(lr){4-5} \cmidrule(lr){6-7} \cmidrule(lr){8-9} \cmidrule(lr){10-11} \cmidrule(lr){12-13}
 & \multicolumn{1}{c}{$\rho$} & \multicolumn{1}{c}{AUC} & \multicolumn{1}{c}{$\rho$} & \multicolumn{1}{c}{AUC} & \multicolumn{1}{c}{$\rho$} & \multicolumn{1}{c}{AUC} & \multicolumn{1}{c}{$\rho$} & \multicolumn{1}{c}{AUC} & \multicolumn{1}{c}{$\rho$} & \multicolumn{1}{c}{AUC} & \multicolumn{1}{c}{$\rho$} & \multicolumn{1}{c}{AUC} \\
\midrule
\rowcolor{blue!6}
\multicolumn{13}{l}{\textit{Contextualized backbone (fine-tuned)}} \\
\midrule
\methodcell{w/ LatentFuse (35M)}{concat embeddings + MLP}
    & \stdcell{0.029}{0.021} & \stdcell{0.519}{0.016} & \stdcell{0.044}{0.079} & \stdcell{0.496}{0.068} & \stdcell{0.029}{0.127} & \stdcell{0.523}{0.080} & \stdcell{0.195}{0.064} & \stdcell{0.581}{0.032} & \stdcell{0.207}{0.063} & \stdcell{0.616}{0.039} & \stdcell{0.066}{0.058} & \stdcell{0.527}{0.034} \\
\methodcell{w/ LatentFuse (150M)}{concat embeddings + MLP}
    & \stdcell{0.019}{0.067} & \stdcell{0.511}{0.030} & \stdcell{0.132}{0.107} & \stdcell{0.570}{0.078} & \stdcell{-0.034}{0.059} & \stdcell{0.485}{0.035} & \stdcell{0.149}{0.078} & \stdcell{0.581}{0.034} & \stdcell{0.148}{0.041} & \stdcell{0.577}{0.020} & \stdcell{0.027}{0.066} & \stdcell{0.511}{0.044} \\
\methodcell{w/ \textsc{LogiMLM}\genmark (35M)}{logit-level MLM}
    & \stdcell{0.245}{0.064} & \stdcell{0.620}{0.031} & \stdcell{0.050}{0.124} & \stdcell{0.519}{0.094} & \stdcell{0.202}{0.074} & \stdcell{0.610}{0.051} & \stdcell{0.339}{0.094} & \stdcell{0.673}{0.047} & \stdcell{0.245}{0.107} & \stdcell{0.636}{0.059} & \stdcell{0.229}{0.047} & \stdcell{0.616}{0.039} \\
\methodcell{w/ \textsc{LogiMLM}\genmark (150M)}{logit-level MLM}
    & \understdcell{0.272}{0.083} & \understdcell{0.633}{0.048} & \understdcell{0.199}{0.114} & \stdcell{0.566}{0.065} & \understdcell{0.221}{0.078} & \understdcell{0.616}{0.050} & \understdcell{0.351}{0.088} & \understdcell{0.676}{0.054} & \understdcell{0.292}{0.118} & \stdcell{0.639}{0.062} & \beststdcell{0.343}{0.050} & \beststdcell{0.673}{0.024} \\
\rowcolor{blue!10}
\methodcell{\textbf{w/ \textsc{LogiCA}\genmark (35M)}}{\textbf{token-score contrastive}}
    & \stdcell{0.260}{0.067} & \stdcell{0.630}{0.034} & \stdcell{0.148}{0.057} & \understdcell{0.581}{0.032} & \beststdcell{0.242}{0.050} & \beststdcell{0.621}{0.033} & \stdcell{0.320}{0.078} & \stdcell{0.664}{0.045} & \stdcell{0.274}{0.113} & \understdcell{0.646}{0.058} & \stdcell{0.219}{0.041} & \stdcell{0.616}{0.024} \\
\rowcolor{blue!10}
\methodcell{\textbf{w/ \textsc{LogiCA}\genmark (150M)}}{\textbf{token-score contrastive}}
    & \beststdcell{0.295}{0.074} & \beststdcell{0.638}{0.046} & \beststdcell{0.237}{0.112} & \beststdcell{0.623}{0.072} & \stdcell{0.210}{0.064} & \stdcell{0.612}{0.046} & \beststdcell{0.407}{0.084} & \beststdcell{0.703}{0.042} & \beststdcell{0.313}{0.108} & \beststdcell{0.666}{0.061} & \understdcell{0.264}{0.043} & \understdcell{0.636}{0.030} \\
\midrule
\rowcolor{gray!8}
\multicolumn{13}{l}{\textcolor{gray!70!black}{\textit{Unconditional baselines}}} \\
\midrule
\methodcell{ESM-1v~\citep{meier2021language}}{masked LM}
    & \stdcell{0.195}{0.088} & \stdcell{0.601}{0.041} & \stdcell{-0.058}{0.126} & \stdcell{0.489}{0.084} & \stdcell{0.164}{0.069} & \stdcell{0.580}{0.045} & \stdcell{0.121}{0.086} & \stdcell{0.562}{0.062} & \stdcell{0.083}{0.090} & \stdcell{0.560}{0.045} & \stdcell{0.124}{0.034} & \stdcell{0.566}{0.030} \\
\methodcell{ESM-2~\citep{lin2023evolutionary} (35M)}{masked LM}
    & \stdcell{0.155}{0.094} & \stdcell{0.577}{0.046} & \stdcell{-0.168}{0.081} & \stdcell{0.404}{0.062} & \stdcell{0.100}{0.082} & \stdcell{0.558}{0.047} & \stdcell{0.078}{0.067} & \stdcell{0.550}{0.051} & \stdcell{0.080}{0.080} & \stdcell{0.568}{0.043} & \stdcell{0.083}{0.041} & \stdcell{0.536}{0.022} \\
\methodcell{ESM-2~\citep{lin2023evolutionary} (150M)}{masked LM}
    & \stdcell{0.153}{0.105} & \stdcell{0.580}{0.046} & \stdcell{-0.037}{0.098} & \stdcell{0.472}{0.083} & \stdcell{0.088}{0.080} & \stdcell{0.553}{0.039} & \stdcell{0.084}{0.083} & \stdcell{0.553}{0.060} & \stdcell{0.107}{0.093} & \stdcell{0.582}{0.047} & \stdcell{0.117}{0.051} & \stdcell{0.563}{0.025} \\
\methodcell{EVE~\citep{frazer2021disease}}{MSA VAE}
    & \stdcell{0.156}{0.081} & \stdcell{0.580}{0.033} & \stdcell{-0.049}{0.082} & \stdcell{0.499}{0.058} & \stdcell{0.111}{0.117} & \stdcell{0.553}{0.062} & \stdcell{0.044}{0.130} & \stdcell{0.514}{0.077} & \stdcell{0.062}{0.097} & \stdcell{0.547}{0.056} & \stdcell{0.031}{0.110} & \stdcell{0.505}{0.044} \\
\methodcell{Tranception~\citep{notin2022tranception}}{retrieval LM}
    & \stdcell{0.170}{0.062} & \stdcell{0.591}{0.031} & \stdcell{-0.058}{0.099} & \stdcell{0.500}{0.061} & \stdcell{0.088}{0.091} & \stdcell{0.538}{0.057} & \stdcell{0.015}{0.077} & \stdcell{0.507}{0.063} & \stdcell{0.076}{0.065} & \stdcell{0.545}{0.033} & \stdcell{0.119}{0.029} & \stdcell{0.556}{0.021} \\
\midrule
\rowcolor{gray!8}
\multicolumn{13}{l}{\textcolor{gray!70!black}{\textit{Contextualized baselines (fine-tuned, 10-gene LOPO)}}} \\
\midrule
\methodcell{DrugBAN~\citep{bai2023drugban}}{DTI classifier + MLP}
    & \stdcell{0.018}{0.036} & \stdcell{0.507}{0.025} & \stdcell{-0.078}{0.107} & \stdcell{0.439}{0.077} & \stdcell{-0.012}{0.039} & \stdcell{0.492}{0.024} & \stdcell{0.023}{0.040} & \stdcell{0.484}{0.031} & \stdcell{0.042}{0.059} & \stdcell{0.523}{0.032} & \stdcell{0.037}{0.073} & \stdcell{0.523}{0.035} \\
\methodcell{Boltz-2~\citep{passaro2025boltz2}}{structure features + MLP}
    & \stdcell{0.011}{0.047} & \stdcell{0.506}{0.020} & \stdcell{0.094}{0.145} & \stdcell{0.571}{0.076} & \stdcell{0.004}{0.062} & \stdcell{0.476}{0.037} & \stdcell{0.053}{0.070} & \stdcell{0.533}{0.037} & \stdcell{0.022}{0.061} & \stdcell{0.501}{0.043} & \stdcell{-0.009}{0.082} & \stdcell{0.494}{0.051} \\
\methodcell{DrugCLIP~\citep{jia2026drugclip}}{DTI contrastive + MLP}
    & \stdcell{-0.000}{0.040} & \stdcell{0.497}{0.019} & \stdcell{0.055}{0.154} & \stdcell{0.520}{0.124} & \stdcell{0.028}{0.099} & \stdcell{0.503}{0.043} & \stdcell{-0.014}{0.079} & \stdcell{0.500}{0.040} & \stdcell{-0.014}{0.053} & \stdcell{0.503}{0.037} & \stdcell{-0.015}{0.062} & \stdcell{0.498}{0.037} \\
\bottomrule
\end{tabular}
}
    \vspace{0.4em}

\resizebox{\textwidth}{!}{
\begin{tabular}{lrrrrrrrrrrrr}
\toprule
\multirow{2}{*}{\textbf{Method}} & \multicolumn{2}{c}{\shortstack{\textbf{AKT1}\\ {\scriptsize 9 drugs}}} & \multicolumn{2}{c}{\shortstack{\textbf{MYC}\\ {\scriptsize 9 drugs}}} & \multicolumn{2}{c}{\shortstack{\textbf{BCL2}\\ {\scriptsize 9 drugs}}} & \multicolumn{2}{c}{\shortstack{\textbf{PARP1}\\ {\scriptsize 9 drugs}}} & \multicolumn{2}{c}{\shortstack{\textbf{PARP2}\\ {\scriptsize 9 drugs}}} & \multicolumn{2}{c}{\shortstack{\textbf{Avg.}\\ {\scriptsize 100 assays}}} \\
\cmidrule(lr){2-3} \cmidrule(lr){4-5} \cmidrule(lr){6-7} \cmidrule(lr){8-9} \cmidrule(lr){10-11} \cmidrule(lr){12-13}
 & \multicolumn{1}{c}{$\rho$} & \multicolumn{1}{c}{AUC} & \multicolumn{1}{c}{$\rho$} & \multicolumn{1}{c}{AUC} & \multicolumn{1}{c}{$\rho$} & \multicolumn{1}{c}{AUC} & \multicolumn{1}{c}{$\rho$} & \multicolumn{1}{c}{AUC} & \multicolumn{1}{c}{$\rho$} & \multicolumn{1}{c}{AUC} & \multicolumn{1}{c}{$\rho$} & \multicolumn{1}{c}{AUC} \\
\midrule
\rowcolor{blue!6}
\multicolumn{13}{l}{\textit{Contextualized backbone (fine-tuned)}} \\
\midrule
\methodcell{w/ LatentFuse (35M)}{concat embeddings + MLP}
    & \stdcell{0.088}{0.044} & \stdcell{0.541}{0.029} & \stdcell{0.123}{0.092} & \stdcell{0.576}{0.068} & \stdcell{0.129}{0.088} & \stdcell{0.582}{0.067} & \stdcell{0.064}{0.067} & \stdcell{0.537}{0.045} & \stdcell{-0.085}{0.053} & \stdcell{0.459}{0.031} & \stdcell{0.083}{0.083} & \stdcell{0.542}{0.046} \\
\methodcell{w/ LatentFuse (150M)}{concat embeddings + MLP}
    & \stdcell{0.031}{0.065} & \stdcell{0.503}{0.042} & \stdcell{0.004}{0.099} & \stdcell{0.516}{0.063} & \stdcell{0.079}{0.064} & \stdcell{0.562}{0.052} & \stdcell{-0.016}{0.028} & \stdcell{0.494}{0.017} & \stdcell{0.015}{0.088} & \stdcell{0.492}{0.056} & \stdcell{0.050}{0.065} & \stdcell{0.531}{0.037} \\
\methodcell{w/ \textsc{LogiMLM}\genmark (35M)}{logit-level MLM}
    & \stdcell{0.253}{0.079} & \stdcell{0.652}{0.042} & \beststdcell{0.317}{0.089} & \beststdcell{0.670}{0.054} & \stdcell{0.134}{0.015} & \stdcell{0.585}{0.027} & \stdcell{0.148}{0.083} & \stdcell{0.574}{0.029} & \stdcell{0.196}{0.062} & \stdcell{0.594}{0.029} & \stdcell{0.220}{0.081} & \stdcell{0.615}{0.045} \\
\methodcell{w/ \textsc{LogiMLM}\genmark (150M)}{logit-level MLM}
    & \stdcell{0.255}{0.078} & \stdcell{0.651}{0.045} & \stdcell{0.262}{0.034} & \stdcell{0.630}{0.045} & \understdcell{0.234}{0.075} & \understdcell{0.650}{0.069} & \stdcell{0.152}{0.083} & \stdcell{0.583}{0.032} & \understdcell{0.236}{0.056} & \stdcell{0.627}{0.047} & \understdcell{0.260}{0.057} & \stdcell{0.632}{0.032} \\
\rowcolor{blue!10}
\methodcell{\textbf{w/ \textsc{LogiCA}\genmark (35M)}}{\textbf{token-score contrastive}}
    & \understdcell{0.301}{0.091} & \beststdcell{0.676}{0.059} & \stdcell{0.297}{0.104} & \stdcell{0.658}{0.071} & \beststdcell{0.316}{0.074} & \beststdcell{0.701}{0.052} & \beststdcell{0.210}{0.089} & \beststdcell{0.610}{0.038} & \stdcell{0.200}{0.082} & \stdcell{0.592}{0.061} & \stdcell{0.256}{0.051} & \understdcell{0.636}{0.034} \\
\rowcolor{blue!10}
\methodcell{\textbf{w/ \textsc{LogiCA}\genmark (150M)}}{\textbf{token-score contrastive}}
    & \beststdcell{0.314}{0.085} & \understdcell{0.674}{0.049} & \understdcell{0.302}{0.057} & \understdcell{0.660}{0.035} & \stdcell{0.232}{0.066} & \stdcell{0.640}{0.061} & \understdcell{0.204}{0.103} & \understdcell{0.595}{0.039} & \stdcell{0.224}{0.064} & \understdcell{0.641}{0.046} & \beststdcell{0.276}{0.062} & \beststdcell{0.645}{0.031} \\
\midrule
\rowcolor{gray!8}
\multicolumn{13}{l}{\textcolor{gray!70!black}{\textit{Unconditional baselines}}} \\
\midrule
\methodcell{ESM-1v~\citep{meier2021language}}{masked LM}
    & \stdcell{0.094}{0.034} & \stdcell{0.548}{0.022} & \stdcell{0.146}{0.041} & \stdcell{0.577}{0.031} & \stdcell{0.111}{0.062} & \stdcell{0.558}{0.049} & \stdcell{-0.009}{0.092} & \stdcell{0.487}{0.041} & \stdcell{-0.021}{0.026} & \stdcell{0.483}{0.032} & \stdcell{0.093}{0.094} & \stdcell{0.549}{0.049} \\
\methodcell{ESM-2~\citep{lin2023evolutionary} (35M)}{masked LM}
    & \stdcell{0.022}{0.047} & \stdcell{0.513}{0.023} & \stdcell{0.010}{0.062} & \stdcell{0.509}{0.044} & \stdcell{0.037}{0.068} & \stdcell{0.547}{0.044} & \stdcell{0.001}{0.074} & \stdcell{0.488}{0.038} & \stdcell{-0.034}{0.064} & \stdcell{0.466}{0.057} & \stdcell{0.040}{0.090} & \stdcell{0.523}{0.054} \\
\methodcell{ESM-2~\citep{lin2023evolutionary} (150M)}{masked LM}
    & \stdcell{0.062}{0.045} & \stdcell{0.537}{0.019} & \stdcell{0.029}{0.047} & \stdcell{0.521}{0.039} & \stdcell{0.041}{0.069} & \stdcell{0.545}{0.041} & \stdcell{-0.019}{0.128} & \stdcell{0.476}{0.055} & \stdcell{-0.009}{0.048} & \stdcell{0.485}{0.056} & \stdcell{0.064}{0.067} & \stdcell{0.538}{0.040} \\
\methodcell{EVE~\citep{frazer2021disease}}{MSA VAE}
    & \stdcell{0.031}{0.079} & \stdcell{0.523}{0.045} & \stdcell{0.272}{0.051} & \stdcell{0.649}{0.038} & \stdcell{0.114}{0.103} & \stdcell{0.573}{0.047} & \stdcell{-0.009}{0.074} & \stdcell{0.505}{0.039} & \beststdcell{0.502}{0.197} & \beststdcell{0.788}{0.246} & \stdcell{0.121}{0.149} & \stdcell{0.572}{0.090} \\
\methodcell{Tranception~\citep{notin2022tranception}}{retrieval LM}
    & \stdcell{0.061}{0.032} & \stdcell{0.530}{0.021} & \stdcell{0.032}{0.044} & \stdcell{0.513}{0.036} & \stdcell{0.056}{0.069} & \stdcell{0.562}{0.054} & \stdcell{0.027}{0.073} & \stdcell{0.506}{0.042} & \stdcell{-0.057}{0.086} & \stdcell{0.465}{0.057} & \stdcell{0.059}{0.078} & \stdcell{0.535}{0.041} \\
\midrule
\rowcolor{gray!8}
\multicolumn{13}{l}{\textcolor{gray!70!black}{\textit{Contextualized baselines (fine-tuned, 10-gene LOPO)}}} \\
\midrule
\methodcell{DrugBAN~\citep{bai2023drugban}}{DTI classifier + MLP}
    & \stdcell{0.017}{0.038} & \stdcell{0.513}{0.019} & \stdcell{0.071}{0.057} & \stdcell{0.540}{0.041} & \stdcell{0.089}{0.038} & \stdcell{0.595}{0.048} & \stdcell{-0.048}{0.050} & \stdcell{0.477}{0.021} & \stdcell{-0.111}{0.038} & \stdcell{0.443}{0.027} & \stdcell{0.000}{0.048} & \stdcell{0.511}{0.035} \\
\methodcell{Boltz-2~\citep{passaro2025boltz2}}{structure features + MLP}
    & \stdcell{0.007}{0.047} & \stdcell{0.493}{0.047} & \stdcell{-0.059}{0.051} & \stdcell{0.479}{0.027} & \stdcell{0.003}{0.040} & \stdcell{0.484}{0.026} & \stdcell{0.013}{0.063} & \stdcell{0.515}{0.048} & \stdcell{0.007}{0.116} & \stdcell{0.499}{0.048} & \stdcell{0.011}{0.034} & \stdcell{0.504}{0.021} \\
\methodcell{DrugCLIP~\citep{jia2026drugclip}}{DTI contrastive + MLP}
    & \stdcell{0.011}{0.083} & \stdcell{0.505}{0.060} & \stdcell{0.026}{0.073} & \stdcell{0.506}{0.037} & \stdcell{-0.072}{0.094} & \stdcell{0.492}{0.059} & \stdcell{-0.013}{0.048} & \stdcell{0.496}{0.030} & \stdcell{0.011}{0.096} & \stdcell{0.511}{0.041} & \stdcell{0.003}{0.033} & \stdcell{0.506}{0.011} \\
\bottomrule
\end{tabular}
}\end{table*}
\renewcommand{\arraystretch}{1.0}

\begin{table*}[t]
    \centering
    \caption{Models performance comparison on the IMMREP25 unseen epitopes setting~\citep{richardson2026immrep25}.}
    \label{tab:immrep}
    \small
    \setlength{\tabcolsep}{4pt}
    \renewcommand{\arraystretch}{1.08}
    \resizebox{0.39\textwidth}{!}{%
    \begin{tabular}{lcc}
    \toprule
    \textbf{Method} & \multicolumn{2}{c}{\textbf{IMMREP25} ($n = 20$)} \\
    \cmidrule(lr){2-3}
     & \textbf{AUC} & \textbf{APS} \\
    \midrule
    \rowcolor{blue!6}
    \multicolumn{3}{l}{\textit{Contextualized backbone}} \\
    \midrule
    \methodcell{w/ \textsc{LogiMLM}\genmark}{continued MLM}
    & \stdcell{0.499}{0.073} & \stdcell{0.116}{0.046} \\
    \rowcolor{blue!10}
    \methodcell{\textbf{w/ \textsc{LogiCA}-TCR\genmark}}{\textbf{TCR-anchored token scoring}}
    & \stdcell{0.498}{0.065} & \stdcell{0.110}{0.017} \\
    \rowcolor{blue!10}
    \methodcell{\textbf{w/ \textsc{LogiCA}-Pep\genmark}}{\textbf{peptide-anchored token scoring}}
    & \stdcell{0.500}{0.073} & \stdcell{0.111}{0.026} \\
    \rowcolor{blue!10}
    \methodcell{\textbf{w/ \textsc{LogiCA}-Dual\genmark}}{\textbf{dual-anchor token scoring}}
    & \stdcell{0.499}{0.062} & \stdcell{0.111}{0.019} \\
    \midrule
    \rowcolor{gray!8}
    \multicolumn{3}{l}{\textcolor{gray!70!black}{\textit{External baselines}}} \\
    \midrule
    \methodcell{EPACT}{embedding contrastive}
    & \stdcell{0.492}{0.057} & \stdcell{0.111}{0.022} \\
    \methodcell{ERGO-II}{classifier head}
    & \stdcell{0.498}{0.039} & \stdcell{0.105}{0.010} \\
    \methodcell{ImRex}{classifier head}
    & \stdcell{0.503}{0.064} & \stdcell{0.110}{0.023} \\
    \methodcell{iTCep}{classifier head}
    & \stdcell{0.505}{0.058} & \stdcell{0.113}{0.019} \\
    \methodcell{TCR-T5\genmark}{generative MLM}
    & \stdcell{0.501}{0.084} & \stdcell{0.121}{0.036} \\
    \methodcell{TEIM}{classifier head}
    & \stdcell{0.513}{0.072} & \stdcell{0.113}{0.021} \\
    \methodcell{TULIP-TCR\genmark}{generative MLM}
    & \stdcell{0.506}{0.060} & \stdcell{0.114}{0.019} \\
    \bottomrule
    \end{tabular}%
    }
\end{table*}
\renewcommand{\arraystretch}{1.0}

\begin{table*}[h]
    \centering
   \caption{TCR--epitope variant ranking. Columns are grouped by mutation
    direction: peptide variants under a fixed TCR and TCR variants under a fixed
    peptide. The blue block reports controlled TCRLang--ESM-2 variants; the gray
    block reports external paired TCR--epitope baselines. The $\dagger$ marker
    denotes methods that retain a native-vocabulary generative interface. Cells
    show mean Pearson or Spearman correlation $\pm$ standard deviation; best values
    are bolded and second-best values are underlined. * denotes $p < 0.01$ relative to the second-best method by Wilcoxon rank-sum test. Results are reported over $n=26$, $65$, and $10$ TCR$\times$peptide-DMS groups for ePytope, BATCAVE, and ATLAS-PEP respectively, and $n=7$ TCR-DMS groups for ATLAS-TCR.}
    \label{tab:mutation-affinity-correlation}
    \small
    \renewcommand{\arraystretch}{1.08}
    \resizebox{0.98\textwidth}{!}{%
    \begin{tabular}{lcccccccc}
    \toprule
    \multirow[c]{3}{*}{\textbf{Method}}
    & \multicolumn{6}{c}{\textbf{Peptide mutations}}
    & \multicolumn{2}{c}{\textbf{TCR mutations}} \\
    \cmidrule(lr){2-7} \cmidrule(lr){8-9}
    & \multicolumn{2}{c}{\textbf{ePytope ($n=26$)}}
    & \multicolumn{2}{c}{\textbf{BATCAVE ($n=65$)}}
    & \multicolumn{2}{c}{\textbf{ATLAS-PEP ($n=10$)}}
    & \multicolumn{2}{c}{\textbf{ATLAS-TCR ($n=7$)}} \\
    \cmidrule(lr){2-3} \cmidrule(lr){4-5} \cmidrule(lr){6-7} \cmidrule(lr){8-9}
     & \multicolumn{1}{c}{Pearson}
     & \multicolumn{1}{c}{Spearman}
     & \multicolumn{1}{c}{Pearson}
     & \multicolumn{1}{c}{Spearman}
     & \multicolumn{1}{c}{Pearson}
     & \multicolumn{1}{c}{Spearman}
     & \multicolumn{1}{c}{Pearson}
     & \multicolumn{1}{c}{Spearman} \\
    \midrule
    \rowcolor{blue!6}
    \multicolumn{9}{l}{\textit{Contextualized backbone}} \\
    \midrule
    \methodcell{w/ \textsc{LogiMLM}\genmark}{continued MLM}
    & \stdcell{0.039}{0.230} & \stdcell{-0.040}{0.216}
    & \stdcell{0.063}{0.227} & \stdcell{0.000}{0.225}
    & \stdcell{0.428}{0.438} & \stdcell{0.527}{0.312}
    & \stdcell{0.087}{0.515} & \stdcell{-0.017}{0.457} \\

    \rowcolor{blue!10}
    \methodcell{\textbf{w/ \textsc{LogiCA}-TCR\genmark}}{\textbf{TCR-anchored token scoring}}
    & \beststdcell{0.296*}{0.219} & \beststdcell{0.229*}{0.168}
    & \beststdcell{0.223*}{0.229} & \beststdcell{0.170*}{0.191}
    & \beststdcell{0.632}{0.286} & \beststdcell{0.731}{0.238}
    & \stdcell{-0.017}{0.507} & \stdcell{-0.115}{0.493} \\

    \rowcolor{blue!10}
    \methodcell{\textbf{w/ \textsc{LogiCA}-Pep\genmark}}{\textbf{peptide-anchored token scoring}}
    & \stdcell{0.176}{0.288} & \stdcell{0.147}{0.313}
    & \stdcell{0.144}{0.268} & \stdcell{0.109}{0.294}
    & \stdcell{0.398}{0.466} & \stdcell{0.287}{0.468}
    & \stdcell{0.131}{0.551} & \beststdcell{0.287}{0.468} \\

    \rowcolor{blue!10}
    \methodcell{\textbf{w/ \textsc{LogiCA}-Dual\genmark}}{\textbf{dual-anchor token scoring}}
    & \understdcell{0.227}{0.333} & \stdcell{0.180}{0.314}
    & \understdcell{0.177}{0.310} & \stdcell{0.132}{0.295}
    & \stdcell{0.534}{0.425} & \stdcell{0.654}{0.380}
    & \stdcell{0.046}{0.476} & \stdcell{0.188}{0.347} \\

    \midrule
    \rowcolor{gray!8}
    \multicolumn{9}{l}{\textcolor{gray!70!black}{\textit{External baselines}}} \\
    \midrule
    \methodcell{EPACT~\citep{zhang2024epitope}}{embedding contrastive}
    & \stdcell{0.016}{0.180} & \stdcell{0.047}{0.211}
    & \stdcell{0.037}{0.271} & \stdcell{0.016}{0.220}
    & \stdcell{0.357}{0.454} & \stdcell{0.233}{0.465}
    & \understdcell{0.179}{0.619} & \understdcell{0.257}{0.528} \\

    \methodcell{ERGO-II~\citep{springer2021ergoii}}{classifier head}
    & \stdcell{-0.007}{0.201} & \stdcell{-0.031}{0.173}
    & \stdcell{-0.029}{0.173} & \stdcell{-0.044}{0.169}
    & \stdcell{0.182}{0.732} & \stdcell{0.185}{0.581}
    & \stdcell{0.128}{0.597} & \stdcell{0.141}{0.623} \\

    \methodcell{ImRex~\citep{moris2021imrex}}{classifier head}
    & \stdcell{0.089}{0.183} & \stdcell{0.094}{0.182}
    & \stdcell{0.089}{0.177} & \stdcell{0.102}{0.173}
    & \stdcell{0.535}{0.423} & \stdcell{0.543}{0.351}
    & \beststdcell{0.189}{0.410} & \stdcell{0.162}{0.397} \\

    \methodcell{iTCep~\citep{zhang2023itcep}}{classifier head}
    & \stdcell{0.094}{0.190} & \understdcell{0.207}{0.219}
    & \stdcell{0.070}{0.172} & \understdcell{0.162}{0.222}
    & \stdcell{0.219}{0.475} & \stdcell{0.321}{0.333}
    & \stdcell{0.026}{0.703} & \stdcell{0.162}{0.397} \\

    \methodcell{TEIM~\citep{peng2023teim}}{classifier head}
    & \stdcell{0.051}{0.171} & \stdcell{0.041}{0.161}
    & \stdcell{0.076}{0.179} & \stdcell{0.053}{0.166}
    & \stdcell{0.225}{0.659} & \stdcell{0.360}{0.563}
    & \stdcell{-0.124}{0.288} & \stdcell{-0.029}{0.326} \\

    \methodcell{TCR-T5\genmark~\citep{karthikeyan2025tcrt5}}{generative MLM}
    & \stdcell{0.028}{0.129} & \stdcell{0.023}{0.139}
    & \stdcell{0.032}{0.125} & \stdcell{0.027}{0.148}
    & \stdcell{0.047}{0.556} & \stdcell{-0.031}{0.541}
    & \stdcell{0.091}{0.360} & \stdcell{0.098}{0.387} \\

    \methodcell{TULIP-TCR\genmark~\citep{meynard2024tulip}}{generative MLM}
    & \stdcell{0.162}{0.178} & \stdcell{0.123}{0.167}
    & \stdcell{0.160}{0.164} & \stdcell{0.120}{0.160}
    & \understdcell{0.607}{0.347} & \understdcell{0.690}{0.235}
    & \stdcell{-0.023}{0.641} & \stdcell{0.034}{0.613} \\
    \bottomrule
    \end{tabular}%
    }
\end{table*}
\renewcommand{\arraystretch}{1.0}

\begin{table*}[h]
    \centering
    \caption{Dependency-map prediction performance for inter-chain interactions, evaluated against ground-truth contact maps. Cells show mean AUC or AUPR $\pm$ standard deviation, averaged over $n=251$ TCR-pMHC structures. Best values are shown in \textbf{bold}, and second-best values are \underline{underlined}.}
    \label{tab:dependency-map-interchain}
    \small
    \renewcommand{\arraystretch}{1.04}
    \resizebox{0.68\textwidth}{!}{%
    \begin{tabular}{llcc}
    \toprule
    \textbf{Region} & \textbf{Model} & \textbf{AUC} & \textbf{AUPR} \\
    \midrule

    \multirow{7}{*}{\shortstack[c]{CDR3$\alpha$--CDR3$\beta$ \\ ($n = 251$)}}
      & ESM-2~\citep{lin2023evolutionary}      & 0.558 $\pm$ 0.227 & 0.027 $\pm$ 0.027 \\
      & TULIP-TCR~\citep{meynard2024tulip} & 0.449 $\pm$ 0.190 & 0.017 $\pm$ 0.009 \\
      & TCRlang~\citep{olsen2024addressing}  & 0.502 $\pm$ 0.220 & 0.035 $\pm$ 0.114 \\
      & \cellcolor{blue!10}w/ \textsc{LogiMLM} & \cellcolor{blue!10} \textbf{0.667 $\pm$ 0.214} & \cellcolor{blue!10} \textbf{0.062 $\pm$ 0.088} \\
      & \cellcolor{blue!10}w/ \textsc{LogiCA}-TCR
      & \cellcolor{blue!10}\underline{0.633 $\pm$ 0.190}
      & \cellcolor{blue!10}\underline{0.054 $\pm$ 0.083} \\
      & \cellcolor{blue!10}w/ \textsc{LogiCA}-Pep
      & \cellcolor{blue!10}0.600 $\pm$ 0.195
      & \cellcolor{blue!10}0.031 $\pm$ 0.028 \\
      & \cellcolor{blue!10}w/ \textsc{LogiCA}-Dual
      & \cellcolor{blue!10}0.578 $\pm$ 0.196
      & \cellcolor{blue!10}0.034 $\pm$ 0.060 \\
    \midrule

    \multirow{6}{*}{\shortstack[c]{Peptide--CDR3$\alpha$ \\ ($n = 251$)}}
      & ESM-2~\citep{lin2023evolutionary}      & 0.493 $\pm$ 0.212 & 0.039 $\pm$ 0.055 \\
      & TULIP-TCR~\citep{meynard2024tulip} & 0.531 $\pm$ 0.243 & 0.041 $\pm$ 0.051 \\
      & \cellcolor{blue!10}w/ \textsc{LogiMLM} & \cellcolor{blue!10}\underline{0.553 $\pm$ 0.193} & \cellcolor{blue!10}0.039 $\pm$ 0.054 \\
      & \cellcolor{blue!10}w/ \textsc{LogiCA}-TCR
      & \cellcolor{blue!10}\textbf{0.592 $\pm$ 0.178}
      & \cellcolor{blue!10}\underline{0.047 $\pm$ 0.114} \\
      & \cellcolor{blue!10}w/ \textsc{LogiCA}-Pep
      & \cellcolor{blue!10}0.544 $\pm$ 0.239
      & \cellcolor{blue!10}0.046 $\pm$ 0.113 \\
      & \cellcolor{blue!10}w/ \textsc{LogiCA}-Dual
      & \cellcolor{blue!10}0.503 $\pm$ 0.250
      & \cellcolor{blue!10}\textbf{0.049 $\pm$ 0.126} \\
    \midrule

    \multirow{7}{*}{\shortstack[c]{Peptide--CDR3$\beta$ \\ ($n = 251$)}}
      & ESM-2~\citep{lin2023evolutionary}      & 0.451 $\pm$ 0.224 & 0.031 $\pm$ 0.046 \\
      & TULIP-TCR~\citep{meynard2024tulip} & 0.455 $\pm$ 0.189 & 0.024 $\pm$ 0.021 \\
      & TCR-T5~\citep{karthikeyan2025tcrt5}    & 0.648 $\pm$ 0.243 & \underline{0.077 $\pm$ 0.108} \\
      & \cellcolor{blue!10}w/ \textsc{LogiMLM} & \cellcolor{blue!10}0.717 $\pm$ 0.153 & \cellcolor{blue!10}0.058 $\pm$ 0.065 \\
      & \cellcolor{blue!10}w/ \textsc{LogiCA}-TCR
      & \cellcolor{blue!10}\textbf{0.743 $\pm$ 0.131}
      & \cellcolor{blue!10}0.075 $\pm$ 0.145 \\
      & \cellcolor{blue!10}w/ \textsc{LogiCA}-Pep
      & \cellcolor{blue!10}0.712 $\pm$ 0.169
      & \cellcolor{blue!10}0.063 $\pm$ 0.073 \\
      & \cellcolor{blue!10}w/ \textsc{LogiCA}-Dual
      & \cellcolor{blue!10}\underline{0.733 $\pm$ 0.166}
      & \cellcolor{blue!10}\textbf{0.094 $\pm$ 0.162} \\
    \bottomrule
    \end{tabular}%
    }
\end{table*}
\renewcommand{\arraystretch}{1.0}

\begin{table*}[h]
    \centering
    \caption{Binary classification performance on the ePytope benchmark. The blue block compares contextualized sequence-model variants, including \textsc{LogiMLM} and \textsc{LogiCA} variants; the gray block lists external baselines. The $\dagger$ marker denotes methods that retain a native-vocabulary generative interface. Performance is reported as mean $\pm$ standard deviation across ePytope mutation sets. Best values are shown in bold, and second-best values are underlined.}
    \label{app:epytope-mutation-binary}
    \small
    \setlength{\tabcolsep}{4pt}
    \renewcommand{\arraystretch}{1.08}
    \resizebox{0.68\textwidth}{!}{%
    \begin{tabular}{lccccc}
    \toprule
    \multirow[c]{2}{*}{\textbf{Method}}
    & \multicolumn{5}{c}{\textbf{ePytope binary classification}} \\
    \cmidrule(lr){2-6}
     & AUC
     & AUPR
     & AUC0.1
     & BEDROC
     & logAUC \\
    \midrule

    \rowcolor{blue!6}
    \multicolumn{6}{l}{\textit{Contextualized backbone}} \\
    \midrule
    \methodcell{w/ \textsc{LogiMLM}\genmark}{continued MLM}
    & \stdcell{0.524}{0.134}
    & \stdcell{0.341}{0.123}
    & \stdcell{0.531}{0.045}
    & \stdcell{0.345}{0.181}
    & \stdcell{0.024}{0.022} \\

    \rowcolor{blue!10}
    \methodcell{\textbf{w/ \textsc{LogiCA}-TCR\genmark}}{\textbf{TCR-anchored token scoring}}
    & \beststdcell{0.672}{0.158}
    & \beststdcell{0.485}{0.151}
    & \beststdcell{0.586}{0.083}
    & \beststdcell{0.541}{0.250}
    & \understdcell{0.080}{0.056} \\

    \rowcolor{blue!10}
    \methodcell{\textbf{w/ \textsc{LogiCA}-Pep\genmark}}{\textbf{peptide-anchored token scoring}}
    & \stdcell{0.586}{0.169}
    & \stdcell{0.389}{0.133}
    & \stdcell{0.540}{0.047}
    & \stdcell{0.416}{0.205}
    & \stdcell{0.037}{0.027} \\

    \rowcolor{blue!10}
    \methodcell{\textbf{w/ \textsc{LogiCA}-Dual\genmark}}{\textbf{dual-anchor token scoring}}
    & \understdcell{0.633}{0.189}
    & \understdcell{0.466}{0.154}
    & \understdcell{0.574}{0.070}
    & \understdcell{0.523}{0.283}
    & \beststdcell{0.093}{0.068} \\

    \midrule
    \rowcolor{gray!8}
    \multicolumn{6}{l}{\textcolor{gray!70!black}{\textit{External baselines}}} \\
    \midrule
    \methodcell{EPACT~\citep{zhang2024epitope}}{embedding contrastive}
    & \stdcell{0.524}{0.140}
    & \stdcell{0.309}{0.185}
    & \stdcell{0.498}{0.030}
    & \stdcell{0.245}{0.229}
    & \stdcell{0.019}{0.021} \\

    \methodcell{ERGO-II~\citep{springer2021ergoii}}{classifier head}
    & \stdcell{0.506}{0.089}
    & \stdcell{0.317}{0.191}
    & \stdcell{0.509}{0.031}
    & \stdcell{0.311}{0.272}
    & \stdcell{0.020}{0.020} \\

    \methodcell{ImRex~\citep{moris2021imrex}}{classifier head}
    & \stdcell{0.563}{0.135}
    & \stdcell{0.335}{0.200}
    & \stdcell{0.507}{0.050}
    & \stdcell{0.272}{0.250}
    & \stdcell{0.019}{0.027} \\

    \methodcell{iTCep~\citep{zhang2023itcep}}{classifier head}
    & \stdcell{0.612}{0.148}
    & \stdcell{0.392}{0.194}
    & \stdcell{0.543}{0.046}
    & \stdcell{0.428}{0.227}
    & \stdcell{0.041}{0.037} \\

    \methodcell{TCR-T5\genmark~\citep{karthikeyan2025tcrt5}}{generative MLM}
    & \stdcell{0.502}{0.121}
    & \stdcell{0.312}{0.180}
    & \stdcell{0.507}{0.032}
    & \stdcell{0.302}{0.216}
    & \stdcell{0.024}{0.022} \\

    \methodcell{TEIM~\citep{peng2023teim}}{classifier head}
    & \stdcell{0.554}{0.092}
    & \stdcell{0.332}{0.177}
    & \stdcell{0.516}{0.037}
    & \stdcell{0.329}{0.265}
    & \stdcell{0.031}{0.025} \\

    \methodcell{TITAN~\citep{weber2021titan}}{classifier head}
    & \stdcell{0.567}{0.102}
    & \stdcell{0.327}{0.162}
    & \stdcell{0.502}{0.028}
    & \stdcell{0.279}{0.214}
    & \stdcell{0.017}{0.019} \\

    \methodcell{TULIP-TCR\genmark~\citep{meynard2024tulip}}{generative MLM}
    & \stdcell{0.591}{0.113}
    & \stdcell{0.403}{0.138}
    & \stdcell{0.551}{0.062}
    & \stdcell{0.431}{0.227}
    & \stdcell{0.056}{0.043} \\

    \bottomrule
    \end{tabular}%
    }
\end{table*}
\renewcommand{\arraystretch}{1.0}
\setlength{\tabcolsep}{6pt}

\begin{table*}[h]
\centering
\caption{Core notation for \textsc{LogiCA} scoring and ranking.}
\label{tab:appendix-notation}
\small
\setlength{\tabcolsep}{5pt}
\renewcommand{\arraystretch}{1.15}
\begin{tabular*}{\textwidth}{@{\extracolsep{\fill}}p{0.24\textwidth}p{0.70\textwidth}@{}}
\toprule
\textbf{Symbol} & \textbf{Description} \\
\midrule

\rowcolor{blue!6}
\multicolumn{2}{@{}l}{\textit{Sequences and contexts}} \\
$x$ & Query sequence in its native vocabulary, such as a protein, TCR, or peptide. \\
$y$ & External context, such as a binding partner, ligand, drug, therapy, or tokenized condition. \\
$x^{\mathrm{wt}}$ & Wild-type reference sequence for variant ranking. \\
$L$ & Sequence length of $x$; $[L]=\{1,\ldots,L\}$. \\
$a,\,\mathcal C$ & Anchor and candidate set in the contextual ranking template (Eq.~\ref{eq:logica-ranking}); $a$ may be a sequence or context, $\mathcal C$ is the set of competing candidates. \\

\midrule
\rowcolor{blue!6}
\multicolumn{2}{@{}l}{\textit{Token likelihoods and site sets}} \\
$\pi_\theta(x_i\mid x_{\setminus i},y)$
  & Contextualized probability assigned to the observed token $x_i$ when site $i$ is masked, under context $y$. \\
$\pi_{\theta,i}(\cdot\mid x_{\setminus i},y)$
  & Full categorical distribution at site $i$ over the native vocabulary; used in the cross-modal dependency map (Eq.~\ref{eq:logica-cross-dependency-main}). \\
$A\subseteq[L]$
  & Scored site set. For full-sequence interaction scoring, $A=[L]$; for variant scoring, $A=\mathcal M$. \\
$A_x,A_y$
  & Per-modality scored token positions used in the bidirectional score $s_\alpha$ (Eq.~\ref{eq:logica-bidirectional-score}). \\
$\ell_A(x\mid y)$
  & Site-averaged log-likelihood over $A$:
    $\frac{1}{|A|}\sum_{i\in A}\log\pi_\theta(x_i\mid x_{\setminus i},y)$. \\
$\mathcal M(x,x^{\mathrm{wt}})$
  & Mutation set $\{i:x_i\neq x^{\mathrm{wt}}_i\}$ for substitution-only variants ($x$ and $x^{\mathrm{wt}}$ share length); throughout the proofs, $m=|\mathcal M|$. \\
$\mathcal A^y_j$
  & Substitution alphabet at context position $j$ used to perturb the context token $y_j$ in the dependency map. \\

\midrule
\rowcolor{blue!6}
\multicolumn{2}{@{}l}{\textit{Task scores and diagnostics}} \\
$s(a,c)$
  & Generic compatibility score in the contextual ranking template; instantiated by $s_\alpha$ for interaction scoring or $s_{\mathcal M}$ for variant scoring. \\
$s_{\textsc{LogiCA}}(x,y;A)$
  & Site-averaged \textsc{LogiCA} score $\ell_A(x\mid y)$ (Eq.~\ref{eq:logica-score}); reduces to the directional log-likelihood under the chosen site set. \\
$s_\alpha(x,y)$
  & Bidirectional interaction score (Eq.~\ref{eq:logica-bidirectional-score}):
    $\alpha\,\ell_{A_x}(x\mid y)+(1-\alpha)\,\ell_{A_y}(y\mid x)$. \\
$s_{\mathcal M}(x,y;x^{\mathrm{wt}})$
  & Mutation-local variant score (Eq.~\ref{eq:logica-mut-score}):
    $\ell_{\mathcal M}(x\mid y)-\ell_{\mathcal M}(x^{\mathrm{wt}}\mid y)$. \\
$\bar\gamma$
  & Symmetric contextualized likelihood margin between matched and sampled-mismatched contexts. \\
$\gamma_{x\mid y},\gamma_{y\mid x}$
  & Directional likelihood margins under the negative-sampling distributions. \\
$\alpha$
  & Learned directional weight in $s_\alpha$. \\
$\tau$
  & Temperature in candidate-choice, contrastive, and pairwise preference losses. \\
$D^{y\to x}_{ij}$
  & Cross-modal dependency map (Eq.~\ref{eq:logica-cross-dependency-main}) measuring how perturbing context token $j$ shifts the predicted distribution at sequence position $i$. \\

\midrule
\rowcolor{blue!6}
\multicolumn{2}{@{}l}{\textit{Proof notation}} \\
$\ell_i^V$
  & Site log-likelihood $\log\pi_\theta(x_{V,i}\mid x_{V,\setminus i},y)$ for $V\in\{A,B,\mathrm{wt}\}$. \\
$d_i$
  & Per-site log-likelihood gap $\ell_i^A-\ell_i^B$. \\
$\Delta$
  & Exact score gap $\frac{1}{m}\sum_{i\in\mathcal M}d_i$ for variants with the same mutation set. \\
$x_A,x_B$
  & Two variants compared under the same context and mutation set. \\
$\mu_i,\bar{\mu}_{\mathcal M}$
  & Site-level mean advantage and its average over the mutation set. \\
$\sigma_i^2,\nu_{\mathcal M}^2$
  & Site-level sub-Gaussian parameters and the averaged parameter
    $\frac{1}{m^2}\sum_{i\in\mathcal M}\sigma_i^2$ used in the misranking bound. \\

\bottomrule
\end{tabular*}
\end{table*}
\renewcommand{\arraystretch}{1.0}

\clearpage

\section{Supplementary Figures}

\begin{figure}[h]
    \centering
    \includegraphics[width=1\linewidth]{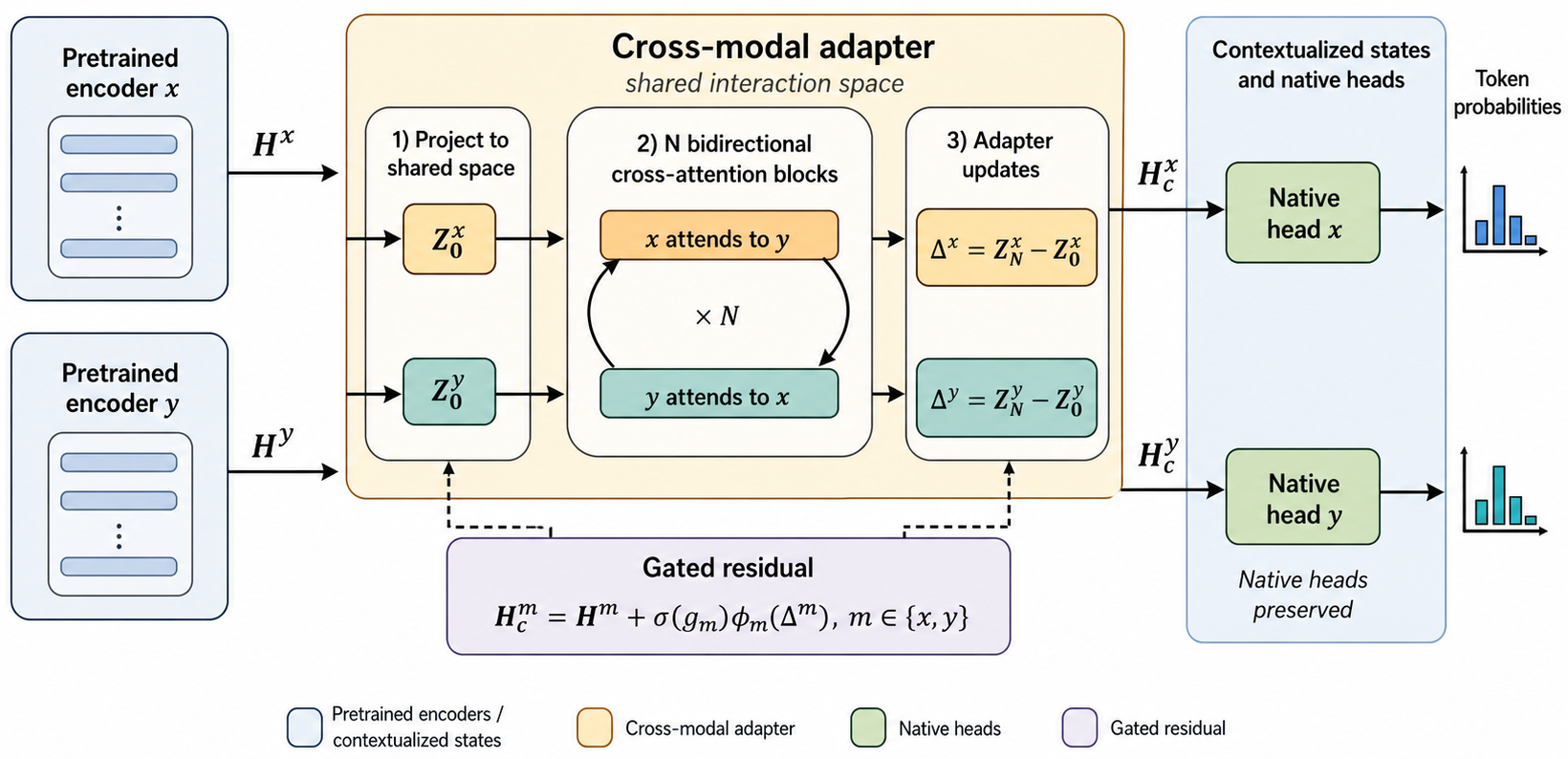}
    \caption{\textbf{The \textsc{LogiCA} architecture with native head-preserving cross-modal adapters} Pretrained hidden states from two biological foundation models ($H^x$, $H^y$) are projected into a shared interaction space. A stack of $N$ bidirectional cross-attention blocks computes contextual updates. These updates ($Z_N - Z_0$) are then mapped back to the native dimensions via $\phi$ and integrated through a gated residual connection. This mechanism ensures the contextualized states ($H^m_c$) remain compatible with the original, native task heads, preserving the models' pretrained token probabilities and specialized functions.}
    \label{fig:logica-architecture}
\end{figure}


\begin{figure}[h]
    \centering
    \makebox[\textwidth][c]{%
        \begin{minipage}{1.4\textwidth}
        \centering
        \includegraphics[width=\textwidth]{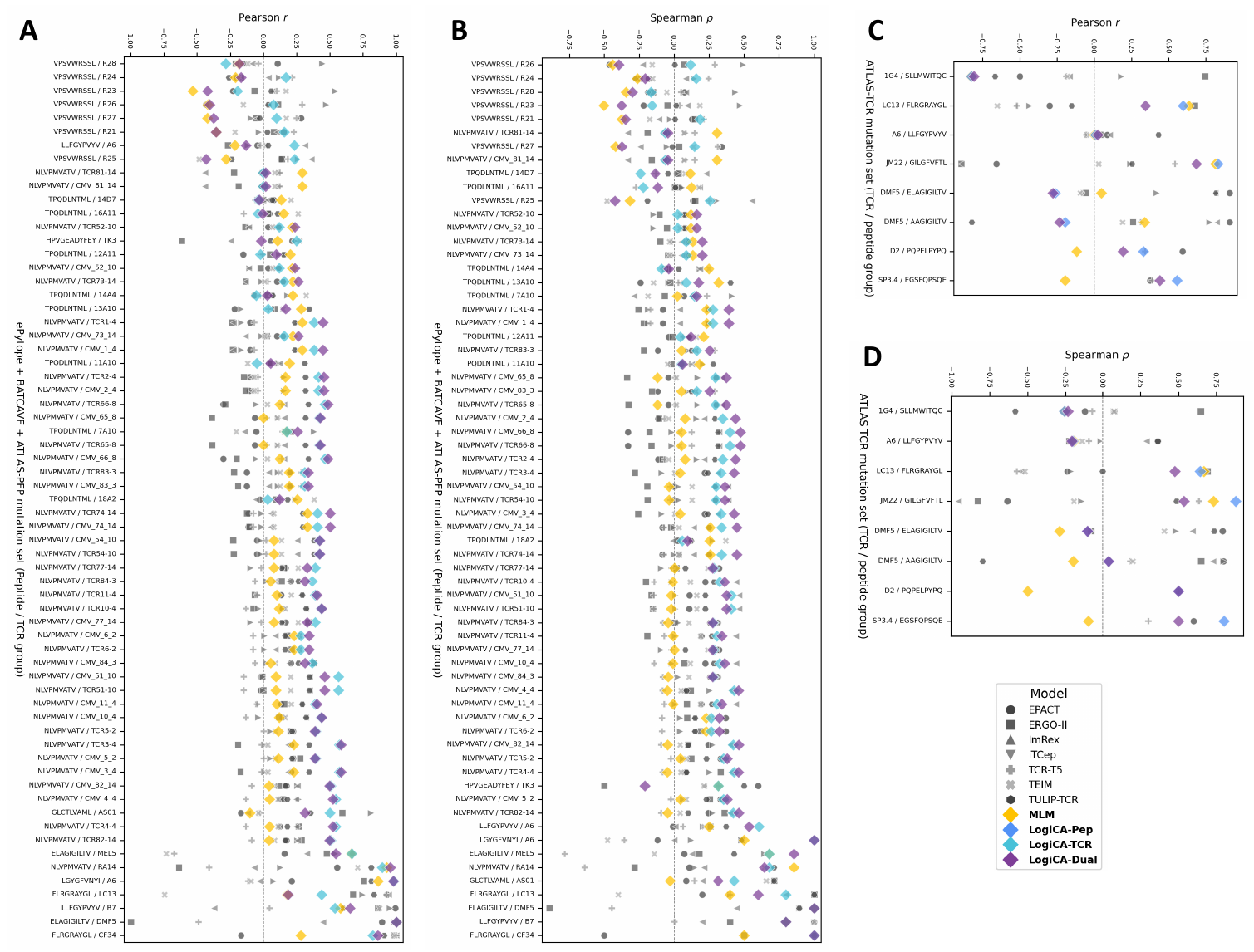}
        \caption{Per-mutation-set variant-ranking performance across TCR--epitope benchmarks. Each point represents one model's correlation between predicted variant scores and experimental readouts within a mutation set. Panels A and B show peptide-mutation sets from ePytope, BATCAVE, and ATLAS-PEP; panels C and D show TCR-mutation sets from ATLAS-TCR. Panels A--C report Pearson correlation, and panels B--D report Spearman correlation. The dashed horizontal line indicates zero correlation. In-family model variants are highlighted with diamond markers.}
            \label{fig:combo-strip}
        \end{minipage}%
    }
\end{figure}

\clearpage

\end{document}